\documentclass{article}

\usepackage[utf8]{inputenc}
\usepackage[T1]{fontenc}
\usepackage{amsmath}
\usepackage{amsfonts}
\usepackage{amssymb}
\usepackage{amsthm}
\usepackage{graphicx}
\usepackage[pdftex]{hyperref}
\usepackage{fullpage}
\usepackage{color}
\usepackage{comment}
\usepackage[sort&compress,numbers]{natbib}
\usepackage{todonotes}
\usepackage{booktabs}
\usepackage{algorithm}
\usepackage{algpseudocode}
\usepackage{enumitem}
\usepackage{tikz}

\usepackage{tabularx}     
\usepackage{caption}      
\usepackage{subcaption}   
\usepackage{etoolbox}     

\theoremstyle{definition}

\newtheorem{assumption}{Assumption}[section]
\newtheorem{proposition}{Proposition}[section]
\newtheorem{remark}{Remark}[section]

\title{Anti-Collapse Dynamics and the Emergence of Multi-Time-Scale Learning in Recurrent Neural Networks}

\author{
Lorenzo Livi\thanks{
OPIT -- Open Institute of Technology.
\href{mailto:lorenz.livi@gmail.com}{lorenz.livi@gmail.com}.
ORCID: \href{https://orcid.org/0000-0001-6384-4743}{0000-0001-6384-4743}.
\href{https://scholar.google.com/citations?user=hAL9amAAAAAJ&hl=en}{Google Scholar profile}.
}}
\date{\today}

\begin{document}
\maketitle

\begin{abstract}
Long-range learning is hard for recurrent networks trained with stochastic gradient descent, because the influence of a past input fades with the lag $\ell$, and if it fades too fast the dependence cannot be learned from finite data.
This fade is captured by an envelope $f(\ell)$.
An exponential fade makes the data needed to learn a lag-$\ell$ dependence grow exponentially, putting long horizons out of reach; a power-law fade keeps the cost polynomial.
We show that the asymptotic decay behavior of $f(\ell)$ is not fixed by the architecture.
Instead, it emerges from the coupling between the state dynamics and parameter dynamics, settling into either a collapsed regime (fast, exponential forgetting) or an extended, anti-collapsed regime (slow, power-law forgetting).
The intuition is a competition within these coupled dynamics.
Training drives the network's effective time scales toward short ones, while rare, heavy-tailed fluctuations of the learning dynamics push a few of them to very long values.
Along the route studied here, the extended regime survives only when these heavy-tailed pushes are strong enough to balance the pull.
We make this mathematically precise with a coarse-grained stochastic process and derive an explicit threshold at which this route to the extended regime becomes available.
A single exponent, the spectral exponent~$\beta$, then governs both the spread of time scales and how slowly the network forgets.
Realizing the regime in practice needs one more ingredient: the joint action of the architecture and the optimizer must be able to hold such a broad spread.
A network whose capacity to generate broad time-scale spectra is severely constrained still collapses, even when supplied with strong heavy-tailed forcing.
Heavy-tailed fluctuations thus act not as noise to be suppressed, but as the mechanism that sustains long-range learning.
\end{abstract}

\tableofcontents

\section{Introduction}

Temporal learning in recurrent neural networks (RNNs) is shaped by how gradient
information decays across time steps, and this decay geometry sets how far back a
network can effectively learn. When the decay is exponential, distant
dependencies quickly fall out of statistical reach; when it follows a power-law,
the same dependencies remain accessible across far longer horizons. Gated RNNs
trained with stochastic gradient descent (SGD) can land in either regime: some
converge to a narrow temporal organization, while others develop a heterogeneous
mixture of slow and fast time scales that sustains long-range learning.
These outcomes are not rigidly determined by architectural and optimization
design but emerge dynamically during training. What mechanism governs which
regime a trained network occupies?

The effective learning rates introduced by \citet{livi2025timescale} provide a natural framework for this question.
These mesoscopic variables capture the strength of the coupling between state dynamics and parameter updates: for each neuron $q$, the quantity $\mu^{(q)}_{t,\ell}$ measures how much gradient information originating at time $t$ survives $\ell$ recurrent transitions.
Under general SGD-like optimizers, this coupling factorizes as $\mu^{(q)}_{t,\ell} = \Lambda^{(q)}_{r,\ell}\,\Gamma^{(q)}_{t,\ell}$, separating an optimizer-dependent amplitude~$\Lambda$ from a transport factor~$\Gamma$ determined by the recurrent dynamics.
At the network level, temporal learning is governed by the macroscopic envelope $f(\ell)$, whose decay geometry determines the effective temporal reach of learning.
The statistical cost of this reach depends critically on envelope shape\footnote{%
We adopt \emph{power-law} as the canonical term for envelope
decay of the form $f(\ell)\sim \ell^{-\beta}$ and the
corresponding tail of the time-scale distribution
$p_\infty(\tau)\sim \tau^{-1-\beta}$. The same relation
is commonly called \emph{polynomial} when referring to the
asymptotic scaling regime or the growth of derived quantities, and is also known as
\emph{algebraic decay} in parts of the literature.
In this paper, however, we reserve \emph{power-law} for the decay form and
use \emph{polynomial} for the associated scaling class.
This convention follows the companion
paper~\citep{livi2026learnability}.
For consistency with the canonical scaling forms used in this
paper, we write $\sim$ for tail and envelope decay laws in the
loose ``scales as'' sense, with regime-of-validity statements
made in surrounding prose where they matter. The companion
paper~\citep{livi2026learnability} adopts the notation
$\asymp$, which explicitly absorbs multiplicative constants in
two-sided sandwich bounds. We use $\sim$ here to keep the
notation light and uniform across the envelope classes and
tail models without losing precision in places where
multiplicative constants matter operationally.}.
As shown by \citet{livi2026learnability}, an exponential envelope $f(\ell)\sim\lambda^\ell, \lambda\in(0, 1),$ requires $N(\ell)\sim\lambda^{-\kappa_\alpha\ell}$ samples
(with $\kappa_\alpha=\alpha/(\alpha-1)$ the concentration exponent and $1<\alpha\leq2$ the stability index of the underlying symmetry $\alpha$-stable model) to detect a
dependency at lag~$\ell$, while a power-law envelope
$f(\ell)\sim\ell^{-\beta}$ requires only
$N(\ell)\sim\ell^{\kappa_\alpha\beta}$ samples.

This exponential-versus-polynomial gap places functional pressure on the learning dynamics.
Whether a network realizes the exponential or the power-law class is set by the dynamical regime that the coupled dynamics of the architecture--optimizer pair settle into.
We call this regime \emph{collapsed} when training concentrates the effective time scales on a narrow spectrum\footnote{We use \emph{spectrum} and \emph{density} interchangeably throughout: the spectrum of effective time scales is the time-scale density, and equivalently for the density of the associated log-effective decay rates, the two being related by a reciprocal change of variables. The two names reflect two readings of one object.
\emph{Spectrum} emphasizes which time scales are realized and how broadly they are spread; \emph{density} (or \emph{distribution}) is used more formally to name the mathematical object that appears in the stationary equations and in the asymptotic analysis. Neither should be confused with the eigenvalue or singular-value spectrum of a matrix.} and the envelope is non-power-law and decays effectively exponentially over its characteristic-scale range, and \emph{anti-collapsed} when the dynamics instead sustain a broad, power-law spectrum and the envelope decays polynomially.
Anti-collapse is thus a property of the coupled state and training dynamics, not a fixed attribute of the architecture, and we use the term for both the regime and the broad spectrum it produces.
Empirically, the pattern is asymmetric: architecture--optimizer pairs that sustain long-range learning consistently realize anti-collapsed spectra together with persistent heavy-tailed forcing, whereas collapsed spectra tend to co-occur with lighter-tailed statistics, though not exclusively, since collapse can arise under both near-Gaussian and heavy-tailed forcing.
Sustained anti-collapse thus appears to require heavy-tailed forcing, yet such forcing alone does not produce it: without sufficient \emph{capacity} to generate, populate, and maintain a broad spectrum of effective time scales, the dynamics collapse regardless. Heavy-tailed forcing is therefore a necessary ingredient that a sufficiently capable pair can exploit, but the dynamical mechanism by which a broad time-scale spectrum emerges and persists during training has remained unclear.

In this work, we develop a coarse-grained stochastic framework that resolves this gap.
We model the collective evolution of log-effective decay rates as a population-level process in which restoring drift competes with aggregated heavy-tailed stochastic forcing.
This places the model inside a well-developed mathematical framework for L\'evy-driven stochastic processes while retaining a direct interpretation in terms of recurrent training dynamics.
Tempered stable generators form a class of stochastic models with a long track record in several domains, including finance, hydrology, and statistical physics of anomalous diffusion~\citep{carr2002fine,cont2004financial,meerschaert2008tempered,stanislavsky2008diffusion}.
We present this as one principled and analytically tractable route to anti-collapse rather than the only possible one: other mechanisms could, in principle, reach the same regime.

\paragraph{Contributions.}

\begin{itemize}

\item \emph{A formal route to anti-collapse, with necessary and sufficient existence conditions.}
We identify the dynamical ingredients required by the proposed jump-driven route to the anti-collapsed regime: heavy-tailed effective forcing exceeding an explicit existence threshold, together with a saturating restoring drift on the populated far-left tail.
Within the proposed stochastic model, these conditions are necessary and sufficient for this route to anti-collapse to be available.
When it is, the stationary far-tail balance selects a spectral exponent~$\beta$ that implicitly depends on the underlying control parameters.
The access route is derived from a tempered stable L\'evy generator and its stationary nonlocal Fokker--Planck equation.
Whether a particular architecture--optimizer pair actually realizes the anti-collapsed regime in a finite training experiment depends additionally on its capacity to populate and sustain the resulting broad time-scale spectrum. We establish this with a structural negative control experiment: a frozen-gate network supplied with artificially-injected heavy-tailed forcing still remains collapsed, isolating capacity as a separate, necessary ingredient.

\item \emph{A sharp phase structure with~$\beta$ as an observable order parameter.}
We develop a mathematically precise Laplace--Tauberian correspondence between the tail of the stationary time-scale density $p_\infty(\tau)$ and the asymptotic decay of the envelope $f(\ell)$.
Therefore, the realized regime determines the envelope class rather than simply providing a phenomenological fit. The canonical OU representative yields an operational exponential decay, while power-law and logarithmic envelopes follow asymptotically from their corresponding tail classes.
The analysis reveals a sharp phase structure controlled by~$\beta$, acting as an observable order parameter.
The stationary time-scale spectrum organizes into collapsed, concentrated anti-collapse ($\beta>1$), and broad anti-collapse ($\beta<1$) phases, with a critical manifold at $\beta=1$ separating finite-mean from divergent-mean multi-time-scale organization.

\end{itemize}

Taken together, these contributions recast heavy-tailed stochastic fluctuations not as noise to be suppressed but as a structural mechanism that counteracts spectral collapse and sustains the temporal reach of learning.
Figure~\ref{fig:regime_phenomenology} provides an informal sketch of the regimes predicted by the theory.

\begin{figure}[t]
\centering
\resizebox{\textwidth}{!}{%
\begin{tikzpicture}[scale=0.9]

\def\colA{0}
\def\colB{5.2}
\def\colC{10.4}
\def\specy{5.0}
\def\envytop{4.0}
\def\envybot{2.6}
\def\labely{1.5}

\node[font=\bfseries\small] at (\colA, 6.0) {Collapsed};
\node[font=\bfseries\small] at (\colB, 6.0) {Concentrated anti-collapse};
\node[font=\bfseries\small] at (\colC, 6.0) {Broad anti-collapse};

\draw[thick, gray!50] (\colA-1.9, \specy) -- (\colA+1.9, \specy);
\node[gray, font=\scriptsize, anchor=west] at (\colA+2.0, \specy) {$\tau$};
\foreach \x in {-0.45, -0.30, -0.15, 0.00, 0.10, 0.20, 0.30, 0.40, 0.45} {
    \fill[blue!70!black] (\colA+\x, \specy) circle (0.075);
}
\draw[thick, gray!50] (\colB-1.9, \specy) -- (\colB+1.9, \specy);
\node[gray, font=\scriptsize, anchor=west] at (\colB+2.0, \specy) {$\tau$};
\foreach \x in {-0.70, -0.45, -0.20, 0.00, 0.20, 0.45, 0.85, 1.45} {
    \fill[orange!85!black] (\colB+\x, \specy) circle (0.075);
}
\draw[thick, gray!50] (\colC-1.9, \specy) -- (\colC+1.9, \specy);
\node[gray, font=\scriptsize, anchor=west] at (\colC+2.0, \specy) {$\tau$};
\foreach \x in {-1.45, -0.90, -0.50, -0.20, 0.10, 0.40, 0.70, 1.00, 1.30, 1.65} {
    \fill[red!75!black] (\colC+\x, \specy) circle (0.075);
}

\node[anchor=east, font=\itshape\footnotesize, gray!70!black]
    at (\colA-2.4, \specy) {time-scale spectrum};

\foreach \cx in {\colA, \colB, \colC} {
    \draw[->, gray!60] (\cx-1.5, \envybot) -- (\cx-1.5, \envytop+0.2)
        node[anchor=east, font=\scriptsize, gray] {$f$};
    \draw[->, gray!60] (\cx-1.5, \envybot) -- (\cx+1.6, \envybot)
        node[anchor=south, font=\scriptsize, gray] {$\ell$};
}

\draw[very thick, blue!70!black] plot[domain=0:3, samples=60, smooth]
    (\colA-1.5+\x, {\envybot + 1.3*exp(-2.5*\x)});

\draw[very thick, orange!85!black] plot[domain=0:3, samples=60, smooth]
    (\colB-1.5+\x, {\envybot + 1.3/((1+\x)^1.6)});

\draw[very thick, red!75!black] plot[domain=0:3, samples=60, smooth]
    (\colC-1.5+\x, {\envybot + 1.3/((1+\x)^0.4)});

\node[anchor=east, font=\itshape\footnotesize, gray!70!black]
    at (\colA-2.4, 3.3) {macroscopic envelope};

\node[align=center, font=\small] at (\colA, \labely) {%
    rapid forgetting\\
    \footnotesize single typical scale};
\node[align=center, font=\small] at (\colB, \labely) {%
    long-range memory\\
    \footnotesize dominant typical scale};
\node[align=center, font=\small] at (\colC, \labely) {%
    broad multi-scale memory\\
    \footnotesize no typical scale};

\node[anchor=east, font=\itshape\footnotesize, gray!70!black]
    at (\colA-2.4, \labely) {phenomenology};

\draw[->, thick, gray!70!black] (\colA-2.0, 0.1) -- (\colC+2.0, 0.1);
\node[font=\footnotesize, anchor=east] at (\colA-2.0, 0.1) {narrow};
\node[font=\footnotesize, anchor=west] at (\colC+2.0, 0.1) {broad};
\node[font=\footnotesize\itshape, gray!60!black] at (\colB, -0.4)
    {increasing realized time-scale spread};

\end{tikzpicture}%
}
\caption{Phenomenological summary of the regimes predicted by the theory.
\emph{Top row}. Schematic time-scale spectrum over~$\tau$
of the trained network: a tight cluster in the collapsed
regime, a broadened distribution with a heavy right tail in
concentrated anti-collapse, a very broad distribution with
a heavy right tail in broad anti-collapse.
\emph{Middle row}. Shape of the (macroscopic) envelope $f(\ell)$ as a function of lag~$\ell$: exponential in the collapsed regime, steep power-law in concentrated anti-collapse, shallow power-law in broad anti-collapse.
\emph{Bottom row}. The corresponding learned-temporal phenomenology.
The horizontal axis is the realized spread of the trained time-scale spectrum, not a parameter axis. Broadening is achieved only when the theory's route conditions and the architecture--optimizer pair's realizability conditions are jointly met.}
\label{fig:regime_phenomenology}
\end{figure}

\paragraph{Paper structure.}
Section~\ref{sec:related_work} reviews related literature.
Section~\ref{sec:conceptual_framework} introduces the effective learning rate formalism and the population-level description of time-scale spectra.
Section~\ref{sec:stochastic_process_evolution_logeffectiveLR} presents the coarse-grained stochastic model and derives the phase structure of the stationary spectrum.
Section~\ref{sec:envelope_laws_from_tails} derives the canonical envelope decay laws from the tails of the time-scale spectra.
Section~\ref{sec:empirical} provides empirical validation of the proposed theoretical framework.
Finally, Section~\ref{sec:discussion} discusses implications, limitations, and future directions.
Appendices~\ref{app:SGD}--\ref{app:code} collect detailed derivations, architectural specifications, numerical validations, and asymptotic calculations supporting the main text.

\section{Related Work}
\label{sec:related_work}

The state-space dynamics of RNNs have long been analyzed through the lens of dynamical systems theory~\citep{ceni2020echo,mastrogiuseppe2018linking,sussillo2013opening,asabuki2025taming}.
Early work on the exploding/vanishing gradient problem \citep{pascanu2013difficulty} established conditions under which long-range temporal dependencies become numerically unfeasible.
Gated architectures such as LSTMs \citep{greff2017lstm} and GRUs \citep{cho2014gru} partially mitigated the problem by introducing gating schemes to control information flow in state space and develop heterogeneous temporal behavior during training.
Random-matrix and mean-field analyses \citep{tankut2020gating,chen2018dynamical} explain how this structure arises at initialization and affects trainability, mapping specific gating schemes with dynamical aspects, such as accumulation of slow modes and effective spectral properties.
Continuous-time models show that gating produces line attractors and can induce transitions to chaos \citep{krishnamurthy2022theory}.
A complementary line of work derives gating from general principles such as quasi-invariance to time warping \citep{tallec2018can}, adaptive time scales in neural ODEs \citep{kim2023gnode}, and time-delay feedback from delay-differential equations \citep{erichson2025delay}, reinforcing the view that gating is a principled dynamical mechanism rather than a heuristic to stabilize training.
\citet{zucchet2024vanishing} show that network input--output sensitivity grows with memory length, even when the gradient norm remains stable.

A parallel line of research studies SGD-based optimization itself as a stochastic process.
\citet{mandt2017sgd} showed that constant-step SGD can be approximated by an Ornstein--Uhlenbeck (OU) process with a Gaussian invariant measure, and \citet{yaida2019fluctuation} established fluctuation--dissipation relations constraining any stationary state.
Recent work~\citep{simsekli2019tail} reveals that gradient noise in trained networks is often heavy-tailed. Modeling stochastic optimization as discrete random recurrence relations, \citet{hodgkinson2021multiplicative} show that multiplicative noise from variance in local convergence rates produces heavy-tailed stationary parameter distributions across a broad class of optimizers and non-convex models. \citet{nguyen2019first} analyze the first-exit time of the discretized SGD recursion under heavy-tailed gradient noise, showing that exit-time scaling is governed by the noise tail index. \citet{barsbey2021heavy} link the heavy-tailed dynamics of SGD to the compressibility of overparametrized networks, giving a theoretical basis for the empirical effectiveness of simple pruning. \citet{ziyinueda2025stationary} show that SGD minibatch noise regularizes toward a noise-balanced solution under rescaling parameter symmetries, and derive a stationary distribution for diagonal linear networks exhibiting phase transitions, broken ergodicity, and fluctuation inversion.
A matrix-valued stochastic model has been proposed by \citet{olsen2025sgd}, who derive continuous-time stochastic differential equations (SDEs) for the singular values of weight matrices under SGD, identify the squared-singular-value dynamics with Dyson Brownian motion, and show that the stationary spectral density is a gamma-type law with a power-law tail, providing an SDE-level account of the empirically observed trained weight spectra.
Complementary statistical-physics analyses of high-dimensional non-convex landscapes reveal aging, slow relaxation, and trapping near marginally stable threshold states \citep{Bonnaire_2024}, while related work shows that implicit regularization biases optimization toward flat minima \citep{xie2021diffusion,smith2021origin}.

A well-established research line exploits the long-standing hypothesis that neural systems benefit from operating near criticality~\citep{hesse2014self}.
Recurrent models have been studied at the edge of chaos \citep{engelken2023lyapunov,bertschinger2004edge,livi2018edge,kadmon2015nonlinear} as a regime enabling rich computation in state space, and subsequent work linked phase-transition phenomena to signal propagation and generalization in deep networks \citep{poole2016exponential}; more recent works study the concept of computation on the edge focusing on the dynamics of the optimizer~\citep{dynamicalSGDchemnitz2025}, i.e. the dynamics underlying parameter updates. Mean-field analyses have since been extended to recurrent and gated architectures~\citep{massar2013mean}, with \citet{chen2018dynamical} showing that gating enables dynamical isometry and that the ordered-to-chaotic transition is modulated by the gate bias.

\section{Conceptual framework}
\label{sec:conceptual_framework}

This section introduces the effective learning rates, the population-level description of their spectrum, and the foundational assumptions that underpin the stochastic model of Section~\ref{sec:stochastic_process_evolution_logeffectiveLR}.
We use \emph{coarse-grained} and \emph{mesoscopic} synonymously:
both refer to the level of description in which individual SGD
updates are aggregated into slowly evolving effective variables, while
the microscopic parameter updates and sequence-time dynamics remain
implicit.

\subsection{Effective learning rates and their spectrum}
\label{sec:asymptotic_decay_log_effective_LR}

As originally introduced in~\citep{livi2025timescale,livi2026learnability}, under a first-order diagonal approximation of the recurrent Jacobian product, the lag-$\ell$ contribution to the BPTT parameter gradient decomposes into neuron-wise terms, each modulated by a scalar effective learning rate $\mu^{(q)}_{t,\ell}$.
These are the mesoscopic coupling variables between state-space
dynamics and parameter-space dynamics: each $\mu^{(q)}_{t,\ell}$
encodes how the forward evolution of gated hidden states in neuron~$q$
modulates the backward transport of gradient information across $\ell$
recurrent steps, thereby governing how effectively the optimizer can
assign credit to temporally distant events.

\paragraph{Generalized effective learning rate (GELR) factorization.}
Under general SGD-like optimizers, the effective learning rate admits the factorization
\begin{equation}
\label{eq:GELR_factorization}
\mu^{(q)}_{t,\ell}
\;=\;
\Lambda^{(q)}_{r,\ell}\,\Gamma^{(q)}_{t,\ell},
\end{equation}
where $\Gamma^{(q)}_{t,\ell}$ is the \emph{transport factor}, aggregating the zeroth- and first-order diagonal contributions of the recurrent Jacobian product (Appendix~\ref{app:gated_rnns}), and $\Lambda^{(q)}_{r,\ell}>0$ is the \emph{adaptive base rate}, obtained by projecting the optimizer's diagonal preconditioner onto the parameter-space direction associated with neuron~$q$ at lag~$\ell$ using the Rayleigh quotient~\citep{livi2026learnability}.
Under plain SGD, $\Lambda^{(q)}_{r,\ell}=\mu$ for all $q$ and $\ell$, and the original formulation of~\citet{livi2025timescale} is recovered.

The Rayleigh quotient satisfies $\lambda_{\min,r}\le\Lambda^{(q)}_{r,\ell}\le\lambda_{\max,r}$, where $\lambda_{\min,r}$ and $\lambda_{\max,r}$ are the extremal entries of the optimizer preconditioner at iteration~$r$~\citep{livi2026learnability}.
In particular, $\Lambda^{(q)}_{r,\ell}$ is bounded and strictly positive, so it modulates the amplitude of neuron-wise transport without altering the dominant decay rate, which is determined entirely by~$\Gamma^{(q)}_{t,\ell}$.

\paragraph{Asymptotic decay rate.}
The lag-dependent effective learning rate $\mu^{(q)}_{t,\ell}$ is not itself a single-number summary of neuron $q$.
We therefore characterize each neuron $q$ by an asymptotic per-step decay rate that is intensive in the lag, removes finite-lag transients, and connects directly to the time-scale interpretation $\tau_q = 1/\bar{\mu}_q$ used in the rest of the paper:
\begin{equation}
\label{eq:mu_asymptotic_decay_rate}
\bar{\mu}_q
=
-\lim_{\ell\to\infty}
\frac{1}{\ell}\,
\mathbb{E}
\!\left[
\log\big|\mu^{(q)}_{t,\ell}\big|
\right],
\end{equation}
where the expectation averages over training dynamics in the late-training regime (Section~\ref{sec:sgd_coarse_time}).
Here $\ell$ is read as the temporal displacement (lag) over which gradient information decays, so that the factor $1/\ell$
extracts the corresponding per-step rate; the same symbol
$\ell$ may appear elsewhere in the paper as a summation index
in the BPTT chain rule~\eqref{eq:bptt_one_step_app}.

Under the GELR factorization~\eqref{eq:GELR_factorization}, the
limit is determined by the zeroth-order gate-product transport,
with the optimizer amplitude $\Lambda^{(q)}_{r,\ell}$ contributing
negligibly. It is well-defined under a finite stationary log-moment
condition on the architecture-specific gate-derived retention
factor, and strictly positive whenever the gate process induces a
non-degenerate average contraction; the full derivation, including
the subexponential bound on the first-order corrections, is in
Appendix~\ref{app:product_structure}.

This construction has the mathematical shape of a Lyapunov
exponent~\citep{viana2014lectures}: products of bounded transport
factors become additive after taking logarithms, and the per-lag time-average selects an asymptotic rate.
It is, however, conceptually distinct. Lyapunov exponents quantify the sensitivity of state trajectories to perturbations, whereas $\bar{\mu}_q$
quantifies the asymptotic decay of the coupling strength between states and parameter updates.

\paragraph{Empirical spectrum of log-effective decay rates.}
We introduce the log-effective decay rate
\begin{equation}
\label{eq:log_effective_LR}
\zeta_q = \log \bar{\mu}_q \in \mathbb{R},
\end{equation}
and define the empirical spectrum of a network with $H$ neurons
as the normalized sum of point masses
\begin{equation}
\label{eq:log-effective_empirical_distribution}
\hat{\rho}_H(\zeta,t)
=
\frac{1}{H}
\sum_{q=1}^{H}
\delta(\zeta-\zeta_q(t)).
\end{equation}
This is a random, finite-dimensional object that depends on the particular network realization, training run, and training time.
It describes the instantaneous distribution of log-decay rates across the $H$ neurons at training time~$t$.
The logarithmic variable $\zeta$ is the natural coordinate for the analysis to follow.
Most importantly, the log transform maps the small-$\bar{\mu}$ regime, where the slowest neurons that govern long-range learning reside, to a left-tail problem on the corresponding spectrum, which is the natural setting for the stochastic model developed in the rest of the manuscript.
Two further properties support the choice: stochastic variability that acts
multiplicatively on $\mu^{(q)}_{t,\ell}$ becomes additive on
$\zeta$, so fluctuations of the effective rate translate into
tractable additive noise on the log-rate; and $\zeta$ provides a
scale-invariant coordinate for comparing decay rates that may
span several orders of magnitude across neurons.

\paragraph{Three coordinates.}
The same underlying quantity is described throughout the paper in
three equivalent coordinates, each adapted to a different aspect
of the analysis:
\begin{equation}
\begin{aligned}
\label{eq:tau_def}
\bar{\mu}_q &\in (0,\infty)
\quad\text{(asymptotic decay rate)}, \\
\zeta_q &= \log\bar{\mu}_q \in \mathbb{R}
\quad\text{(log-effective decay rate)}, \\
\tau_q &= \bar{\mu}_q^{-1} \;=\; e^{-\zeta_q} \in (0,\infty)
\quad\text{(effective time scale)}.
\end{aligned}
\end{equation}
The effective time scale $\tau_q$ is the natural coordinate for
the envelope and memory interpretation (Section~\ref{sec:envelope_laws_from_tails}): it is the characteristic memory horizon of neuron~$q$.
We will move freely between the three coordinates, using $\zeta$
when formulating the stochastic dynamics and $\tau$ when
discussing envelopes and tail behavior, and we refer to
distributions over $\tau$ as time-scale spectra.

\subsection{Training time and coarse-graining}
\label{sec:sgd_coarse_time}

The quantities $\bar{\mu}_q$, $\zeta_q$, and the empirical spectrum $\hat{\rho}_H$
are induced functionals of the network parameters: they depend on long
products of recurrent Jacobians and gating factors, and they change as
SGD updates those parameters.
While individual gradient steps can produce substantial short-time
fluctuations, the collective evolution of these quantities is slow
when observed over many updates.
The underlying gradient dynamics are summarized in
Appendix~\ref{app:SGD} and will not be repeated here.
Instead, we clarify the notion of \emph{training time} used in the
main text and the conditions under which a coarse-grained description
of these slow variables becomes appropriate.

Optimization unfolds over discrete parameter updates, whereas
the decay rates $\bar{\mu}_q$ and the spectrum $\hat{\rho}_H$
depend on averages over long temporal horizons and many updates.
In the same spirit as previous work modeling SGD as a continuous-time
process in parameter space~\citep{mandt2017sgd,li2017stochastic},
we model the slow evolution of the log-decay
rates~$\zeta_q$~\eqref{eq:log_effective_LR} as a continuous-time
stochastic process on a \emph{coarse training time} variable~$t$.
The goal is to provide a tractable stochastic model whose stationary
distribution captures the late-training spectrum $\hat{\rho}_H$ and
the tail structure that governs envelope decay.
Conceptually, one unit of coarse-grained training time corresponds to a block of SGD updates, so that $\bar{\mu}_q$ and $\hat{\rho}_H$ evolve only slightly.
All time derivatives, stochastic differential equations, and population densities appearing in subsequent sections are understood with respect to this coarse-grained training time, which is distinct from the sequence-time index that appears in $\mu^{(q)}_{t,\ell}$ and in the BPTT notation of Appendix~\ref{app:SGD}.

\subsection{Foundational assumptions}
\label{sec:foundational_assumptions}

The empirical
spectrum~$\hat{\rho}_H$~\eqref{eq:log-effective_empirical_distribution}
is a random, finite-dimensional measure that changes from run to run.
To develop a tractable theory, we require two structural hypotheses
that together justify replacing $\hat{\rho}_H$ by a deterministic,
quasi-stationary population density $\rho_\infty(\zeta)$.

\begin{assumption}[Quasi-stationary spectrum]
\label{ass:quasi_stationarity}
In the late-training regime, the empirical
spectrum~$\hat{\rho}_H(\zeta,t)$ evolves sufficiently slowly on the
coarse training-time scale that it admits an approximately
time-independent description over the time horizons relevant for
gradient transport.
\end{assumption}

This assumption restricts the analysis to the late-training regime, in which optimization trajectories evolve slowly within broad regions of parameter space and parameter updates produce only gradual changes in macroscopic observables.
Recent analyses of gradient-based optimization in trained deep networks provide support for this picture at the level of the underlying dynamics.
For example, \citet{Bonnaire_2024} show that gradient-descent dynamics in high-dimensional non-convex landscapes can exhibit aging, with relaxation times that increase with the age of the dynamics, and can remain trapped near marginally stable threshold states.
Via a mean-first-passage-time analysis on multifractal landscapes, \citet{ly2025multifractal} similarly find that the optimizer preferentially settles in smoother basins containing flatter minima, with escape times that lengthen as the basin smooths.
While these results are formulated at the level of parameter or collective-variable dynamics, they support the expectation that macroscopic observables depending smoothly on the parameters, such as the empirical spectrum $\hat{\rho}_H(\zeta,t)$, also evolve on slow time scales in late training.
Complementarily, \citet{yaida2019fluctuation} established fluctuation--dissipation relations characterizing approximate stationary states of SGD, showing that, in late training phases, noise and curvature balance to produce stable stochastic fluctuations.

The quasi-stationary assumption breaks down during early training phases, under severe nonstationarities, or in regimes of rapid loss-landscape reorganization.
However, our analysis explicitly targets the late-training regime and focuses on the tail structure of time scales reached at that stage.

\begin{assumption}[Population-level concentration]
\label{ass:mean_field}
In the late-training regime, the empirical
spectrum~$\hat{\rho}_H(\zeta,t)$ defined in
\eqref{eq:log-effective_empirical_distribution} concentrates as
$H\to\infty$ in the sense that the empirical probability measure converges weakly.
That is,
\begin{equation}
\label{eq:population_density_limit}
\hat{\rho}_H(\zeta,t)\,d\zeta
\;\Rightarrow\;
\rho(\zeta,t)\,d\zeta,
\qquad H\to\infty,
\end{equation}
in probability, in the sense of weak convergence of probability
measures~\citep{durrett2019probability}. Equivalently, for every
bounded continuous test function~$f$,
\begin{equation}
\int_{\mathbb R} f(\zeta)\hat{\rho}_H(\zeta,t)\,d\zeta
\;\xrightarrow{\;P\;}\;
\int_{\mathbb R} f(\zeta)\rho(\zeta,t)\,d\zeta .
\end{equation}
Here $\rho(\zeta,t)$ is a deterministic population density that is no
longer random and no longer a sum of delta functions.
From this point onward, $\rho(\zeta,t)$ always refers to this
deterministic large-width limit, not to the finite-width empirical
measure~$\hat{\rho}_H$.
\end{assumption}

The concentration assumption requires each neuron's log-decay rate
$\zeta_q$ to be a well-defined scalar.
This is established by the first-order diagonal expansion of gradient
transport
products~\citep{livi2025timescale,livi2026learnability}: the expansion
retains the dominant zeroth-order gate-product contribution together
with first-order diagonal corrections, while discarding off-diagonal
cross-neuron terms. Numerical simulations confirm that the perturbative regime is satisfied in the architectures considered here~\citep{livi2025timescale}.
Under the GELR factorization~\eqref{eq:GELR_factorization}, $\zeta_q$
depends on $\Gamma^{(q)}_{t,\ell}$, which is primarily determined by
each neuron's own gate products, while the bounded adaptive base
rate $\Lambda^{(q)}_{r,\ell}$ acts neuron-wise and introduces no
cross-neuron coupling.
These observations justify treating the $\{\zeta_q\}$ as a population
of weakly coupled scalar degrees of freedom.

This is a mean-field-style modeling hypothesis. As width grows, the
finite empirical spectrum is replaced by a deterministic population measure.
Classical mean-field results for wide networks \citep{mei2018mean,sirignano2020mean,nguyen2020rigorous} provide the closest precedent for such deterministic large-width descriptions.
Here, the object of interest is the derived spectrum of log-effective decay rates obtained from the GELR construction, rather than the parameter distribution or network output itself, so we state the concentration as an explicit assumption.

Assumption~\ref{ass:quasi_stationarity} is supported indirectly by late-training results from the literature and directly by the empirical drift-closure validation of Section~\ref{sec:exp_access_route}, which shows that the estimated restoring drift in the populated far-left tail is stable enough over the sampled late-training window to admit an approximately time-homogeneous description.
Assumption~\ref{ass:mean_field}, by contrast, is supported primarily
by the width-scaling diagnostics of
Appendix~\ref{app:mixture_and_large_width_validation}, which show
concentration of the empirical time-scale distribution and
stabilization of the intensive envelope as width increases.

Under Assumptions~\ref{ass:quasi_stationarity} and~\ref{ass:mean_field}, the deterministic population density
$\rho(\zeta,t)$ admits an approximately time-independent limit
$\rho_\infty(\zeta)$, i.e. the stationary spectrum of log-effective decay rates.
The remainder of the paper is devoted to modeling the dynamics that
generate $\rho_\infty$, characterizing its tail properties as a
function of control parameters, deriving the associated time-scale spectrum, and deriving the resulting implications in terms of envelope scaling laws.

\section{Coarse-grained stochastic model and phase structure}
\label{sec:stochastic_process_evolution_logeffectiveLR}

We now formalize the central hypothesis by introducing a coarse-grained stochastic model for the evolution of the log-effective decay rates~\eqref{eq:log_effective_LR}.
The technical underpinnings of this section are developed in two
complementary appendices. Appendix~\ref{app:levy_generator} specifies
the L\'evy generator used throughout and discusses its modeling scope.
Appendix~\ref{app:nonlocal_balance} derives the nonlocal stationary
balance and the resulting characteristic equation. It identifies the
spectral exponent~$\beta$ implicitly as the unique admissible positive
root, when such a root exists, and derives an explicit threshold
$\eta_J^*$ for its existence.

\subsection{Stochastic model for the log-effective decay rates}
\label{sec:stochastic_modeling}

Rather than modeling microscopic SGD parameter updates, we work directly with the emergent population of log-effective decay rates $\zeta_q(t)$, whose empirical spectrum~$\hat{\rho}_H$~\eqref{eq:log-effective_empirical_distribution} encodes the macroscopic temporal organization relevant for learnability.
Considering Assumptions~\ref{ass:quasi_stationarity} and~\ref{ass:mean_field}, it suffices to model a single representative variable $\zeta(t)$ whose law generates $\rho(\zeta,t)$ through the associated forward equation; in the quasi-stationary regime this yields the stationary spectrum $\rho_\infty(\zeta)$.

We model the evolution of $\zeta(t)$ by a one-dimensional SDE with
deterministic drift, Gaussian diffusion, and L\'evy jump
forcing~\citep{applebaum2009levy,sato1999levy}:
\begin{equation}
\label{eq:zeta_SDE_mixed}
d\zeta(t)
=
F(\zeta(t))\,dt
+
\sqrt{2\eta_G}\,dW_t
+
dJ_t,
\end{equation}
where $W_t$ is standard Brownian motion and $J_t$ is a pure-jump L\'evy process with the tempered-stable L\'evy measure specified below.
Such drift--diffusion--jump SDEs are standard in the L\'evy-driven SDE literature, including models combining Brownian diffusion with pure-jump L\'evy forcing, and potential-driven systems perturbed by stable L\'evy processes~\citep{gloter2018jump,imkeller2006first}.

The three terms of the SDE~\eqref{eq:zeta_SDE_mixed} encode distinct physical mechanisms.
The deterministic drift $F(\zeta)$ models the net restoring tendency imposed by architecture and training: it pushes log-decay rates back toward a typical range, preventing the population from drifting indefinitely toward extreme values.
The Brownian term $\sqrt{2\eta_G}\,dW_t$, with diffusion intensity
$\eta_G>0$, captures the cumulative effect of many small stochastic
perturbations~\citep{mandt2017sgd,li2017stochastic}.
The jump term $dJ_t$ models intermittent large-scale deviations
in $\zeta$ that the Gaussian background cannot account for.
Because the model is mesoscopic, the heavy-tailed forcing it summarizes, parametrized by $(\eta_J,\alpha_{\mathrm{jump}})$, abstracts from the microscopic origin of these deviations.
It may arise from minibatch gradient fluctuations propagated through recurrent dynamics~\citep{simsekli2019tail,gurbuzbalaban2021heavy,hodgkinson2021multiplicative}, from weight matrices with heavy-tailed spectra~\citep{martin2021implicit}, from the distribution of parameter updates~\citep{zhang2025heavytailed}, from data fluctuations or from any other microscopic contribution whose net effect on the $\zeta$ increments is heavy-tailed. All such contributions enter the model only through their aggregated statistics.

Concretely, $J_t$ is modeled as a symmetric tempered $\alpha$-stable L\'evy process: its jumps follow a symmetric $\alpha$-stable distribution with stability index $\alpha_{\mathrm{jump}}\in(0,2)$, modified by an exponential tempering factor that suppresses arbitrarily large excursions~\citep{rosinski2007tempering,kuchler2013tempered}.
The increments of $J_t$ are characterized by the symmetric tempered L\'evy measure:
\begin{equation}
\label{eq:tempered_levy_measure}
\nu_{\alpha_{\mathrm{jump}},\lambda}(dy)
=
\eta_J\,c_{\alpha_{\mathrm{jump}}}
\frac{e^{-\lambda|y|}}{|y|^{1+\alpha_{\mathrm{jump}}}}\,dy,
\qquad
\alpha_{\mathrm{jump}}\in(0,2),\ \lambda>0.
\end{equation}
In this expression, $\eta_J\ge 0$ is the jump amplitude that controls the overall intensity of the jump component:
when $\eta_J=0$ the jump term is absent and only Gaussian diffusion remains.
The stability index $\alpha_{\mathrm{jump}}\in(0,2)$ governs the intermediate-scale heaviness of the jump-size distribution; smaller values correspond to more frequent large jumps.
The tempering parameter $\lambda>0$ exponentially suppresses
jumps beyond the scale $1/\lambda$, reflecting the physical
constraint that bounded activations and finite hidden-state
dimension prevent arbitrarily large shifts in a neuron's
log-decay rate. Mathematically, this suppression gives the
tempered L\'evy measure finite exponential moments
\citep{kuchler2013tempered,rosinski2007tempering}. The constant
$c_{\alpha_{\mathrm{jump}}}$ fixes the normalization convention
for this stable-like jump measure.

All parameters in the stochastic model are \emph{effective} quantities that emerge from the specific experimental realization of the architecture--optimizer pair and should not be confused with hyperparameters.
The modeling details justifying the choice of the mixed drift--diffusion--jump SDE~\eqref{eq:zeta_SDE_mixed} are discussed in Appendix~\ref{app:levy_generator}.

\paragraph{Restoring drift.}
As part of the coarse-grained closure, we assume that the effective
drift $F(\zeta)$ is restoring in the late-training regime, so that
the log-effective decay rates do not drift indefinitely and a
stationary population description remains admissible. Two distinct
features of~$F$ will enter the stationary analysis, each controlling
a different asymptotic regime of the stationary density and therefore
a different phase of the model: a near-equilibrium linearization
around the bulk, and a structural closure in the far-left tail.

\emph{Near the bulk.} We assume that there exists a
typical log-decay rate~$\zeta^*$ (an interior equilibrium of
the deterministic part of the dynamics) at which $F(\zeta^*)=0$, with
$F(\zeta)<0$ for $\zeta>\zeta^*$ and $F(\zeta)>0$ for $\zeta<\zeta^*$.
The value~$\zeta^*$ is an emergent quantity of the coarse-grained
model, the log-decay rate around which the bulk of the population
concentrates in steady state, and appears as the center of the
Gaussian bulk in the collapsed-regime analysis; we only use the fact that such
a point exists, not a specific expression for it.
Taylor-expanding $F$ to first order around~$\zeta^*$ yields
\begin{equation}
\label{eq:drift_linearization}
F(\zeta)\;\approx\;-\gamma\,(\zeta-\zeta^*),
\qquad \gamma = -F'(\zeta^*)>0,
\end{equation}
so that $\zeta(t)$ behaves locally as a mean-reverting
OU process with restoring rate~$\gamma$ and
equilibrium level~$\zeta^*$, see e.g.~\citep{Bonnaire_2024,mandt2017sgd} for similar assumptions.
Positivity of~$\gamma$ is exactly the statement that $\zeta^*$ is a stable fixed point of the deterministic drift.
This linear-drift regime controls the collapsed-regime analysis in Section~\ref{sec:collapsed_regime}.

\emph{In the far-left tail}, corresponding to large
effective time scales $\tau$~\eqref{eq:tau_def}, we
instead adopt the structural asymptotic closure
\begin{equation}
\label{eq:drift_saturation}
F(\zeta)\;=\;\kappa + o(1),
\qquad\kappa>0,
\quad\text{as }\zeta\to-\infty,
\end{equation}
where~$\kappa$ plays the role of the asymptotic restoring
strength acting on neurons with extremely long effective
time scales. Equation~\eqref{eq:drift_saturation} should be
read as a modeling assumption on the far-tail behavior of
the coarse-grained drift.

The two closures rest on different parts of~$F$ and are
therefore logically independent. The collapsed-regime
analysis depends only on the local restoring geometry
of~$F$ near~$\zeta^*$ through the
linearization~\eqref{eq:drift_linearization}, and does not
invoke the far-tail closure~\eqref{eq:drift_saturation} at
all. The anti-collapsed analysis depends on~$\kappa$ but not on the local rate~$\gamma$ at~$\zeta^*$. A single drift profile~$F$ can carry both features simultaneously: a stable zero with local slope~$\gamma$ near~$\zeta^*$ and an asymptotic plateau~$\kappa$ as $\zeta\to-\infty$.

A detailed derivation of the near-equilibrium linearization is given in Appendix~\ref{app:drift_structure}, and a targeted empirical validation of the far-tail closure~\eqref{eq:drift_saturation} is given in Section~\ref{sec:exp_access_route}.
The complementary Gaussian-confining null in Appendix~\ref{app:gaussian_null} shows why both ingredients matter.
With Gaussian-only forcing and genuinely confining non-saturating drift, the stationary time-scale spectrum remains non-power-law and no power-law envelope can arise.

\paragraph{Infinitesimal generator and its role.}
The SDE~\eqref{eq:zeta_SDE_mixed} specifies how a representative log-decay rate moves in time, but the phase theory requires a stationary equation whose tail can be analyzed.
We therefore introduce the infinitesimal generator $\mathcal{L}_{\omega}$: it packages the drift, diffusion, and jump mechanisms into the operator that acts on smooth compactly supported
test functions~$\varphi$; its adjoint governs the evolution of the
population density (Appendix~\ref{app:sde_to_generator}).
For the process~\eqref{eq:zeta_SDE_mixed}, the generator takes the form
\begin{equation}
\label{eq:generator_definition}
\mathcal{L}_{\omega}\varphi(\zeta)
=
F(\zeta)\,\partial_\zeta \varphi(\zeta)
+
\eta_G\,\partial_{\zeta\zeta}\varphi(\zeta)
+
\mathcal{J}_{\alpha_{\mathrm{jump}},\lambda}[\varphi](\zeta).
\end{equation}
Each term has a direct interpretation.
The first-order derivative
$F(\zeta)\,\partial_\zeta\varphi$ captures the effect of the
deterministic drift, which pushes the process toward its typical range.
The second-order derivative
$\eta_G\,\partial_{\zeta\zeta}\varphi$ captures how Gaussian diffusion
spreads the distribution locally (this is the standard diffusion
operator with intensity~$\eta_G$).
The nonlocal operator
$\mathcal{J}_{\alpha_{\mathrm{jump}},\lambda}[\varphi]$ is the
compensated L\'evy jump operator.
Its explicit form and L\'evy--Khintchine representation are given in Appendix~\ref{app:levy_generator}.

At this level of generality, the generator~\eqref{eq:generator_definition}
acts on compactly supported test functions~$\varphi$ and may appear
removed from the concrete quantity of interest, i.e. $\rho_\infty(\zeta)$.
Its analytical payoff comes after passing to the adjoint forward
equation for the density: in the stationary far-left tail,
substituting the density ansatz $\rho_\infty(\zeta)\sim c\,e^{\beta\zeta}$ reduces the nonlocal
balance to an algebraic characteristic equation for the spectral
exponent~$\beta$ (Section~\ref{sec:anticollapsed_regime}; Appendix~\ref{app:nonlocal_balance}).
The five control parameters of the generator are collected into the
control-parameter vector
\begin{equation}
\label{eq:control_parameter_vector}
\omega=(\kappa,\eta_G,\eta_J,\alpha_{\mathrm{jump}},\lambda),
\end{equation}
comprising the asymptotic far-left-tail drift strength~$\kappa$
from~\eqref{eq:drift_saturation}, the Gaussian diffusion
intensity~$\eta_G$, the jump amplitude~$\eta_J$, the
intermediate heavy-tail index~$\alpha_{\mathrm{jump}}$, and the
tempering parameter~$\lambda$.
Here $\omega$ collects the effective parameters relevant for the
far-left-tail phase structure of the model; the collapsed-regime Gaussian-bulk specialization (Section~\ref{sec:collapsed_regime}; Appendix~\ref{app:exponential_envelope_full_details}) additionally depends on the
local linearization rate~$\gamma=-F'(\zeta^*)$ at the
bulk equilibrium~$\zeta^*$ introduced
in~\eqref{eq:drift_linearization}.
This split reflects a genuine feature of the model:
the anti-collapsed tail exponent is insensitive to the detailed
shape of~$F$ near~$\zeta^*$ and sees only its far-tail
value~$\kappa$, while the collapsed Gaussian bulk sees only the
local slope~$\gamma$ and is insensitive to the far-tail
saturation value.
We therefore keep~$\omega$ as the phase-structure control vector
and treat~$\gamma$ as a separate local bulk parameter rather than
adding it to~$\omega$.

The term \emph{control parameter} is used in the dynamical-systems
sense: $\omega$ parametrizes the family of stationary densities
$\rho_\infty(\zeta;\omega)$ in the far-tail regime and
thereby controls the phase structure.
It is not a set of practitioner-chosen hyperparameters.
Each component of~$\omega$ is an emergent effective quantity
that summarizes the net effect of architecture, optimizer,
and data distribution on the mesoscopic dynamics of the
log-effective decay rates.
For instance, the far-left-tail drift strength~$\kappa$ reflects
how strongly the architecture and training objective restore
log-decay rates that have drifted into the large-time-scale tail, and the
jump parameters $(\eta_J, \alpha_{\mathrm{jump}}, \lambda)$
encode the aggregate heavy-tailed fluctuations as filtered through the coupled dynamics.
In particular, $\alpha_{\mathrm{jump}}$ summarizes the heaviness of coarse increments and
does not need to coincide with the microscopic fluctuations.

\paragraph{Population-level forward equation.}
The generator~\eqref{eq:generator_definition} describes how smooth observables evolve under the stochastic dynamics.
To obtain an equation for the population density $\rho(\zeta,t)$, we pass to the adjoint description of the same evolution.
The density $\rho$ evolves according to the nonlocal Fokker--Planck equation~\citep{applebaum2009levy}
\begin{equation}
\label{eq:general_forward_equation}
\partial_t \rho(\zeta,t)
=
\mathcal{L}_{\omega}^{*}\rho(\zeta,t)
=
-\partial_\zeta(F(\zeta)\rho(\zeta,t))
+
\eta_G\,\partial_{\zeta\zeta}\rho(\zeta,t)
+
\mathcal{I}^*_{\alpha_{\mathrm{jump}},\lambda}[\rho](\zeta,t).
\end{equation}
The derivation of~\eqref{eq:general_forward_equation} from the generator~\eqref{eq:generator_definition} is given in Appendix~\ref{app:characteristic_derivation}.
The three terms describe how probability mass is pushed by the restoring drift, spread locally by Gaussian diffusion, and redistributed nonlocally by jumps through the adjoint integral operator $\mathcal{I}^*_{\alpha_{\mathrm{jump}},\lambda}$.
In the late-training regime, the density changes slowly and $\partial_t\rho\approx 0$.
The stationary spectrum $\rho_\infty(\zeta)$ is the density at which these three mechanisms balance exactly, yielding $\mathcal{L}_{\omega}^{*}\rho_\infty = 0$.
In the jump-driven regime, this stationary density should be understood as a non-equilibrium steady state: it is a stationarity condition for the population density, not a detailed-balance assumption.
Probability mass can circulate through nonlocal jump redistribution and restoring drift, while the density remains time-independent.
Under the constant-drift closure~\eqref{eq:drift_saturation}, an exponential tail ansatz reduces the nonlocal Fokker--Planck equation \eqref{eq:general_forward_equation} to an algebraic characteristic equation.
Its unique admissible positive root, when it exists, is the spectral exponent~$\beta$; its boundary condition yields the explicit threshold~$\eta_J^*$.

\subsection{Phase structure of the stationary time-scale spectrum}
\label{sec:phase_structure}

The stationary time-scale spectrum $p_{\infty}(\tau)$, obtained from the stationary log-effective decay rate spectrum $\rho_\infty(\zeta)$ through the change of variables~\eqref{eq:tau_def}, organizes into qualitatively distinct phases as the control parameters $\omega$ vary (Figure~\ref{fig:phase_diagram}).
Two boundaries structure the phase diagram.
The first separates the collapsed regime, where the spectrum is concentrated around a characteristic scale, from the anti-collapsed regime, where the stationary time-scale distribution develops a power-law tail whose exponent is the spectral exponent $\beta(\omega)$.
In what follows, we write $\beta(\omega)$ when the dependence on the control parameters is relevant and abbreviate to~$\beta$ when $\omega$ is held fixed or not relevant for the discussion, as in the asymptotic calculations of Section~\ref{sec:envelope_laws_from_tails}.
The notation $\beta(\omega)$ does not imply a closed form: on the anti-collapsed branch, $\beta$ is defined implicitly by the characteristic equation (Appendix~\ref{app:nonlocal_balance}), and the results of this section depend only on its monotonicity in the control parameters.
The existence threshold $\eta_J^*$ depends on the remaining control parameters and vanishes when $\lambda\ge\kappa/\eta_G$.
The second is an internal boundary within the anti-collapsed regime at $\beta=1$, separating concentrated anti-collapse ($\beta>1$, finite mean time scale) from broad anti-collapse ($\beta<1$, divergent mean time scale).
\begin{figure}[tp!]
\centering
\begin{tikzpicture}[scale=1.0]
  \draw[->] (0,0) -- (7.5,0) node[below] {$\eta_J$ \small{(jump intensity)}};
  \draw[->] (0,0) -- (0,6.5) node[left] {$\kappa$ \small{(drift strength)}};

  \draw[very thick, red]
    plot[smooth, domain=0.8:7.0, samples=50]
    (\x, {0.75 + 3.2/\x})
    node[above left, black] {\small $\beta=1$};

  \node[align=center] at (2.2, 5.4)
    {\textbf{Concentrated}\\[2pt]
     \small $\beta>1$\\[1pt]
     \small finite $\mathbb{E}[\tau]$\\[1pt]
     \small steep $f(\ell)\sim\ell^{-\beta}$};

  \node[align=center] at (3.0, 0.8)
    {\textbf{Broad}\\[1pt]
     \small $\beta<1$, \;$\mathbb{E}[\tau]=\infty$\\[1pt]
     \small shallow $f(\ell)\sim\ell^{-\beta}$};

  \draw[<-, thick, gray, dashed] (1.5, 2.0) -- (5.5, 2.0);
  \node[gray, anchor=south] at (3.7, 2.0) {\small $\beta$ decreases};

  \draw[thick, blue, dotted] (-0.06, 0.3) -- (-0.06, 6.2);
  \node[blue, rotate=90, anchor=south] at (-0.07, 3.2){\small Collapsed ($\eta_J\!=\!0$)};

  \node[red, anchor=west] at (2.0, 2.4) {\small $\mathcal{M}_c$};

  \node[anchor=south] at (3.8, 6.6)
    {\small \emph{Anti-collapsed regime} ($\eta_J>\eta_J^*$)};
\end{tikzpicture}
\caption{Schematic phase diagram in the $(\eta_J,\kappa)$ plane for $\lambda\ge\kappa/\eta_G$, the regime in which the existence threshold $\eta_J^*=0$ and the anti-collapsed regime occupies the entire interior $\eta_J>0$.
The time-scale distribution has a power-law tail $p_\infty(\tau)\sim c\,\tau^{-1-\beta}$.
The red curve is the critical manifold $\mathcal{M}_c$, separating concentrated anti-collapse ($\beta>1$, finite mean, steep power-law envelope) from broad anti-collapse ($\beta<1$, divergent mean, shallow power-law envelope).
To keep the schematic focused on the primary jump-driven route, the $\eta_J=0$ axis (blue dotted) displays the canonical Gaussian-confining pure-diffusion representative of the collapsed regime. The special diffusion-only power-law branch obtained by retaining the saturated far-left drift closure is discussed separately in Section~\ref{sec:existence_threshold} and Appendix~\ref{app:limiting_cases_diffusion_only}.
Increasing $\eta_J$ monotonically decreases $\beta$ (Appendix~\ref{app:nonlocal_balance}).
When $\lambda<\kappa/\eta_G$, a nonzero threshold $\eta_J^*>0$ separates the collapsed and anti-collapsed regions along the $\eta_J$ axis; the marginal equality case is omitted from this schematic and discussed in Section~\ref{sec:boundary_monotonicity}.}
\label{fig:phase_diagram}
\end{figure}

The remainder of this section treats each regime in turn.
Section~\ref{sec:collapsed_regime} describes the collapsed regime and its canonical pure-diffusion representative.
Section~\ref{sec:anticollapsed_regime} introduces the anti-collapsed regime ($\eta_J>\eta_J^*$), introduces the characteristic equation that defines~$\beta$, and describes the concentrated and broad sub-regimes.
The full derivation and existence/uniqueness proof for the spectral exponent are given in Appendix~\ref{app:nonlocal_balance}.
Finally, Section~\ref{sec:log_regular_regime} introduces the \emph{log-regular regime}, i.e. the limiting regime corresponding to $\beta\downarrow0$.

\subsubsection{Collapsed regime}
\label{sec:collapsed_regime}

The collapsed regime is the side of control-parameter space outside the jump-driven anti-collapsed branch identified by the characteristic equation.
When $\eta_J^*>0$, the mixed jump model lies in this regime for $0<\eta_J<\eta_J^*$; equality is a marginal boundary rather than part of either regime.
The pure-diffusion boundary $\eta_J=0$ is not classified by the absence of jumps alone: its stationary tail also depends on the far-left geometry of the drift.
The canonical collapsed representative considered here extends the near-bulk linear restoring behavior~\eqref{eq:drift_linearization} into a genuinely confining drift.
Its stationary log-rate density is Gaussian in the OU case, or more generally decays faster than exponentially under the Gaussian-confining null of Proposition~\ref{prop:gaussian_null}; the corresponding tail of the time-scale density is therefore not regularly varying\footnote{Informally, a regularly varying tail is a power law up to a slowly varying correction: it behaves like $x^{-\rho}$ times a factor that changes more slowly than any power of $x$ (for instance a constant or a logarithmic term), with the pure power law as the base case~\cite{feller1971introduction,bingham1989regular,nair2022fundamentals}.}.

If instead the drift retains the saturated far-left closure~\eqref{eq:drift_saturation} when $\eta_J=0$, pure diffusion admits the special power-law branch derived in Appendix~\ref{app:limiting_cases_diffusion_only}, with exponent $\beta_{\mathrm{diff}}=\kappa/\eta_G$.
Thus, the near-bulk OU linearization and the far-left saturation describe different regions of the drift and may coexist; it is the realized far-left geometry that determines which diffusion-only behavior applies.
For the canonical OU representative, the explicit stationary density and the associated envelope-decay derivation are given in Appendix~\ref{app:exponential_envelope_full_details}.

When $\eta_J^*>0$ and $0<\eta_J<\eta_J^*$, jumps may make the stationary density non-Gaussian, but the far-left characteristic equation selects no interior anti-collapse tail mode. These configurations therefore remain on the collapsed side.

The collapsed regime is not pathological.
Many well-trained recurrent networks operate in this regime and achieve useful temporal learning over windows set by $\tau_\ast$.
The limitation is that the temporal reach is bounded by the characteristic scale, and the envelope decays too rapidly to support extended multi-scale structure and thus long-range learning.

\subsubsection{Anti-collapsed regime}
\label{sec:anticollapsed_regime}

Within the proposed stochastic model, sufficiently strong jump intensity changes the far-tail structure of $\rho_\infty(\zeta)$ qualitatively: the stationary density develops an exponential left tail, and the time-scale distribution acquires a power-law right tail.
This power-law structure is the mathematical signature of heavy-tailed forcing in the generator.
More precisely, under the stationary far-left-tail closure, the characteristic equation admits the positive spectral exponent associated with anti-collapse when the jump intensity exceeds the explicit threshold $\eta_J>\eta_J^*$ (Proposition~\ref{lem:Phi_properties}); the threshold vanishes when $\lambda\ge\kappa/\eta_G$, in which case any $\eta_J>0$ suffices.

\paragraph{Characteristic equation and spectral exponent.}

The tail structure is determined by a nonlocal balance
between constant restoring drift and stochastic forcing
in the stationary forward equation~\eqref{eq:general_forward_equation}.
In the far-left-tail regime ($\zeta\to-\infty$), substituting
an exponential ansatz $\rho_\infty(\zeta)\sim c\,e^{\beta\zeta}$
into the stationary Fokker--Planck equation~\eqref{eq:general_forward_equation}
reduces the
integro-differential problem to the algebraic characteristic function
\begin{equation}
\label{eq:spectral_characteristic_function}
\Phi(\beta;\omega)
=
\underbrace{\eta_G\beta^2}_{\text{diffusion}}
+
\underbrace{\Psi^*(\beta;\omega)}_{\text{jumps}}
-
\underbrace{\kappa\beta}_{\text{drift}},
\end{equation}
where $\Psi^*$ is the adjoint jump symbol defined in
Eq.~\eqref{eq:appendix_adjoint_jump_symbol}.
Each term encodes one of the three mechanisms competing
in the far tail of the stationary density:
Gaussian diffusion, L\'evy jumps, and deterministic
restoring drift.
Appendix~\ref{app:nonlocal_balance} establishes that $\Phi(\cdot\,;\omega)$
is strictly convex on $[0,\lambda)$ with
$\Phi(0;\omega)=0$,
$\Phi'(0;\omega)<0$,
and $\Phi(\lambda;\omega)$ finite
(for the symmetric tempered L\'evy measure).
A unique strictly positive solution $\beta\in(0,\lambda)$ of the characteristic equation $\Phi(\beta; \omega)=0$ exists if and only if $\Phi(\lambda;\omega)>0$, which induces an explicit threshold $\eta_J^*$ on the required jump intensity~$\eta_J$ (Eq.~\eqref{eq:appendix_threshold_etaJ}).

\paragraph{Emergent power-law tail and time-scale distribution.}

The positive root $\beta$ selected by the characteristic equation is therefore interpreted through the same exponential far-left-tail ansatz used to derive~\eqref{eq:spectral_characteristic_function}.
Within the far-left-tail closure~\eqref{eq:drift_saturation}, that ansatz gives the admissible stationary tail
\begin{equation}
\label{eq:stationary_spectrum_left_tail}
\rho_\infty(\zeta)
\sim
c\,e^{\beta\,\zeta},
\qquad
\zeta\to -\infty.
\end{equation}

Passing from the log-effective decay rate to the effective time scale via $\tau=\bar{\mu}^{-1}=e^{-\zeta}$~\eqref{eq:tau_def}, the density transformation
$p_\infty(\tau) = \rho_\infty(\zeta)/|d\tau/d\zeta|$, combined with~\eqref{eq:stationary_spectrum_left_tail},
converts the exponential left tail in~$\zeta$ into a
power-law right tail~\footnote{Explicitly:
$|d\tau/d\zeta|=|{-e^{-\zeta}}|=\tau$, so
$p_\infty(\tau)=\rho_\infty(-\log\tau)/\tau$.
Substituting~\eqref{eq:stationary_spectrum_left_tail}
gives $\rho_\infty(-\log\tau)\sim c\,e^{-\beta\log\tau}
= c\,\tau^{-\beta}$ as $\tau\to\infty$, hence
$p_\infty(\tau)\sim c\,\tau^{-\beta}/\tau
= c\,\tau^{-1-\beta}$.} for the time-scale distribution:
\begin{equation}
\label{eq:stationary_timescale_power_law}
p_\infty(\tau)
\sim
c\,\tau^{-1-\beta},
\qquad
\tau\to\infty,
\end{equation}
where $c>0$ is the asymptotic tail amplitude.
The restriction $\beta>0$ is essential. At $\beta=0$,
\eqref{eq:stationary_timescale_power_law} would reduce to
$p_\infty(\tau)\propto \tau^{-1}$, which is not integrable on
$[1,\infty)$ and therefore cannot define a stationary time-scale density.

This is the central structural result of the anti-collapsed regime: whenever $\beta>0$ exists, the stationary time-scale distribution has a power-law right tail.

\begin{remark}[Observational scaling window]
\label{rem:scaling_window}
The power-law tail~\eqref{eq:stationary_timescale_power_law} extends to infinity
in the model, but in finite networks the observable range
of the power-law decay is bounded.
The maximum observable effective time scale is limited by the
experimental horizon, so in practice \(\tau_{\max}\) cannot exceed the
sequence length scale \(O(T)\). It is also affected by the effective
tempering scale \(\lambda\), which controls how strongly large
log-effective decay rate excursions are suppressed in the stochastic generator.
The minimum time scale is set by architectural and numerical constraints.
The power-law tail of $p_\infty(\tau)$ is therefore observable
only over a finite window
$\tau_{\min}\leq \tau \leq \tau_{\max}$;
beyond this window, finite-size corrections dominate.
The implications of this truncation for the macroscopic
envelope are analyzed in
Section~\ref{sec:envelope_laws_from_tails}.
\end{remark}

\paragraph{Concentrated and broad sub-regimes.}

Within anti-collapse, the magnitude of $\beta$ governs the degree of spectral broadening.
Two sub-regimes arise:
\begin{itemize}
\item \textbf{Concentrated anti-collapse ($\beta>1$).}
The mean time scale $\mathbb{E}[\tau]=\int \tau\,p_\infty(\tau)\,d\tau$ is finite. More generally, the moment $\mathbb{E}[\tau^r]$ is finite for $r<\beta$ and diverges for $r\ge\beta$.
The spectrum concentrates around a well-defined typical scale, and the power-law tail, while present, decays steeply and carries relatively little mass.
The power-law tail region is narrow when $\beta\gg 1$ and widens as $\beta$ decreases toward~$1$.

\item \textbf{Broad anti-collapse ($\beta<1$).}
The mean time scale diverges.
The power-law tail is shallow enough that probability mass extends over a broad range of time scales, with no characteristic scale dominating.
The spectrum supports genuinely multi-scale temporal structure.
\end{itemize}

\paragraph{Critical manifold ($\beta=1$).}

The boundary
\begin{equation}
\label{eq:critical_manifold_definition}
\mathcal M_c
=
\{\omega : \beta(\omega) = 1\}
\end{equation}
is a critical manifold in control-parameter space characterizing the anti-collapsed regime, separating the concentrated ($\beta>1$) and broad ($\beta<1$) sub-regimes.
Since $\beta(\omega)$ varies smoothly in $\omega$, the constraint $\beta=1$ generically defines a codimension--one hypersurface.
At $\beta=1$ the spectrum retains its power-law form, so the transition is continuous.
What changes discontinuously is the moment structure of the time-scale distribution: the mean time scale is finite for $\beta>1$ and diverges for $\beta\le 1$.
The critical manifold therefore marks the onset of a qualitatively different statistical regime in which no characteristic time scale dominates the spectrum.
Because this hypersurface has measure zero in control-parameter space, generic training dynamics do not operate exactly on it; trained networks lie on one side of the boundary, although they may approach it arbitrarily closely.

\subsubsection{Log-regular regime ($\beta\downarrow 0$ limit)}
\label{sec:log_regular_regime}

Beyond the critical manifold $\beta=1$, the anti-collapsed taxonomy has a second boundary at $\beta\downarrow 0$.
By Proposition~\ref{lem:Phi_properties}, whenever the spectral exponent exists ($\eta_J>\eta_J^*$) it satisfies $\beta\in(0,\lambda)$ strictly, so $\beta=0$ is never reached at any finite point in control-parameter space.
The boundary is approached but not attained within the symmetric tempered L\'evy framework studied here.

We refer to the limiting behavior at this boundary as the log-regular regime. The explicit tail model that characterizes this regime and its associated envelope decay are introduced in Section~\ref{sec:envelope_laws_from_tails}.
Because this regime sits at a limit not realized by the stochastic model, we include it for taxonomic completeness only: its tail and envelope companions complete the spectrum--envelope correspondence, but no stable phase of the present generator populates it.

\subsection{Existence threshold and regime accessibility}
\label{sec:existence_threshold}

\paragraph{Existence threshold for the spectral exponent.}
An interior spectral exponent $\beta\in(0,\lambda)$, and hence the jump-driven anti-collapsed branch, exists when the jump intensity exceeds the threshold $\eta_J^*$ given explicitly by Proposition~\ref{lem:Phi_properties}.
Its value depends on the drift and the tempering: $\eta_J^*=0$ when $\lambda\ge\kappa/\eta_G$, and $\eta_J^*>0$ when $\lambda<\kappa/\eta_G$.

When $\lambda\ge\kappa/\eta_G$, the threshold vanishes, so the jump-driven anti-collapsed branch extends continuously to the pure-diffusion boundary $\eta_J=0$.
The latter is a special limiting case: if the far-left closure is retained, pure diffusion realizes a power-law tail with exponent $\beta_{\mathrm{diff}}=\kappa/\eta_G$ set by the drift--diffusion balance alone.
The parameter $\lambda$ does not belong to the pure-diffusion model.
The condition $\lambda\ge\kappa/\eta_G$ instead concerns the nearby mixed model with $\eta_J>0$ and $\beta_{\mathrm{diff}}$ is a valid spectral exponent, allowing the interior jump-driven branch to approach the saturated-drift diffusion-only solution continuously as $\eta_J\downarrow0$.
Further details are given in Appendix~\ref{app:limiting_cases_diffusion_only}.

When $\lambda<\kappa/\eta_G$, the threshold is strictly positive.
Jump intensities $0<\eta_J<\eta_J^*$ lie on the collapsed side: jumps perturb the bulk of $\rho_\infty(\zeta)$ away from the pure-diffusion Gaussian form, but no interior spectral exponent is selected and the far-left-tail closure does not produce a regularly varying power-law time-scale tail.
At $\eta_J=\eta_J^*$, the characteristic equation reaches the marginal endpoint $\beta=\lambda$; for $\eta_J>\eta_J^*$, the interior anti-collapsed branch exists.

\paragraph{Tempering controls accessibility of the full anti-collapsed taxonomy.}
Since the spectral exponent satisfies $\beta\in(0,\lambda)$, the full phase taxonomy, including both sub-regimes and the critical manifold, requires $\lambda>1$.
When $\lambda\le1$, every interior root lies in $(0,1)$, so any anti-collapsed state is necessarily broad; concentrated anti-collapse and the critical manifold $\beta=1$ are inaccessible.
Apart from the marginal endpoint at a positive existence threshold, the model therefore moves between the collapsed side and the broad anti-collapsed phase.
The condition $\lambda>1$ is thus necessary for the concentrated regime and the critical manifold to be accessible.

\subsection{Boundary structure and monotonicity}
\label{sec:boundary_monotonicity}

\paragraph{Marginal existence boundary.}
When $\eta_J^*>0$, jump intensities strictly below the threshold belong to the collapsed regime, while those strictly above it belong to the anti-collapsed regime.
Exactly at $\eta_J=\eta_J^*$, the characteristic equation has the endpoint root $\beta=\lambda$ but no interior root.
We therefore treat equality as a marginal boundary case rather than as part of either open regime.
The characteristic equation identifies this algebraic endpoint root but does not by itself determine the corresponding stationary tail, so we leave the endpoint asymptotic open (Appendix~\ref{app:characteristic_derivation}).

As $\eta_J\downarrow\eta_J^*$ from above, the interior spectral exponent approaches the endpoint, $\beta\uparrow\lambda$; below the threshold, no positive root remains.
In the opposite limit $\eta_J\to\infty$, the root moves toward zero, $\beta\downarrow0$, and the tail becomes shallower.
Empirically, the boundary may be difficult to resolve when $\lambda$ is large, because the corresponding near-boundary power laws are very steep over finite observational windows.

\paragraph{Monotonicity in noise strength.}
For $\eta_J>\eta_J^*$, as $\eta_J$ increases (with other control parameters fixed), $\Psi^*(\beta;\omega)$ increases pointwise for each $\beta>0$. By the strict convexity of $\Phi$ established in Proposition~\ref{lem:Phi_properties}, the root $\beta$ therefore decreases monotonically with $\eta_J$. Therefore, a stronger jump forcing (larger $\eta_J$) lowers $\beta$ and broadens the time-scale spectrum.

\section{Canonical envelope decay laws from tail spectra}
\label{sec:envelope_laws_from_tails}

The phase taxonomy of Section~\ref{sec:phase_structure} characterizes the stationary time-scale spectrum through the tail behavior of $p_\infty(\tau)$.
The macroscopic envelope $f(\ell)$ introduced in Section~\ref{sec:micro-macro} aggregates the contribution of all time scales and therefore inherits its asymptotic decay from $p_\infty(\tau)$.
The goal of this section is to make this correspondence mathematically explicit and derive the resulting envelope scaling laws from specific tail models of $p_\infty(\tau)$.
In this framework, anti-collapsed spectra produce regularly varying power-law envelopes, whereas collapsed spectra remain outside that scaling class. The canonical collapsed representative is effectively exponential over its median-dominated range and has a non-power-law log-normal asymptotic at larger lags; the log-regular ansatz produces a logarithmic envelope.
These canonical forms correspond to the canonical envelope decay laws identified in the learnability-window analysis~\citep{livi2026learnability}.

\subsection{From effective learning rates to the Laplace representation}
\label{sec:micro-macro}

\subsubsection{Mixture-of-exponentials representation and macroscopic envelope}

The macroscopic envelope is defined as
\begin{equation}
f(\ell) = \|\mu_{t,\ell}\|_1
=
\sum_{q=1}^H
\left|\mu^{(q)}_{t,\ell}\right|
=
\sum_{q=1}^H
\Lambda^{(q)}_{r,\ell} \left|\Gamma^{(q)}_{t,\ell}\right|.
\end{equation}
In the late quasi-stationary regime and for the class of gated
RNNs considered here (Appendix~\ref{app:gated_rnns}),
the transport factor of each individual neuron decays
exponentially at long lags with a neuron-specific effective
time scale $\tau_q$~\eqref{eq:tau_def}.
More precisely, for sufficiently large $\ell$ the per-neuron
contribution satisfies
\begin{equation}
\label{eq:mixture_of_exponentials_prefactor}
|\mu^{(q)}_{t,\ell}|
\approx
\Lambda^{(q)}_{r,\ell}\,
\exp(-\ell/\tau_q)
\times \tilde{L}_q(\ell),
\end{equation}
where $\tilde{L}_q(\ell)$ captures sub-exponential corrections
originating from gate sensitivities and higher-order mixing
terms, and $\Lambda^{(q)}_{r,\ell}>0$ is the bounded adaptive base rate.
The base rate modulates the amplitude of each neuron's
contribution but does not modify the dominant exponential
decay rate $\tau_q^{-1}$, which is determined entirely by the
transport factor~$\Gamma^{(q)}_{t,\ell}$.

Retaining only the dominant exponential component of each
per-neuron contribution and summing over the population,
$f(\ell)$ is approximated as follows:
\begin{equation}
\label{eq:mixture_of_exponentials}
f(\ell) = \|\mu_{t,\ell}\|_1
\;\approx\;
\sum_{q=1}^H
\Lambda^{(q)}_{r,\ell}\,
\exp(-\ell/\tau_q).
\end{equation}
Hence, the macroscopic envelope $f(\ell)$ is a mixture of exponentials, with
architecture-dependent time scales $\{\tau_q\}$ and optimizer-dependent amplitudes $\{\Lambda^{(q)}_{r,\ell}\}$.
Empirical results supporting this representation are provided in Appendix~\ref{app:mixture_exp_validation}.

\subsubsection{Large-network-width representation}

To remove the trivial linear scaling with network width from $f(\ell)$, we consider the \emph{intensive} macroscopic envelope
\begin{equation}
\label{eq:intensive_envelope}
f(\ell)
\;=\;
\frac{1}{H}\,\|\mu_{t,\ell}\|_1.
\end{equation}
From now on, the notation $f(\ell)$ refers to Eq.~\ref{eq:intensive_envelope} with the approximation in Eq.~\ref{eq:mixture_of_exponentials}.

\paragraph{From finite mixture to continuous integral.}
Taking the large-width limit ($H\rightarrow\infty$) for $f(\ell)$ \eqref{eq:intensive_envelope}, the population-level concentration (Assumption~\ref{ass:mean_field}) applies to the joint empirical distribution $(\tau_q,\Lambda^{(q)}_{r,\ell})$.
Hence, this limit gives
\begin{equation*}
f(\ell)
\;=\;
\mathbb{E}\!\left[\Lambda^{(q)}_{r,\ell}\,e^{-\ell/\tau_q}\right].
\end{equation*}
Because the exponential kernel $e^{-\ell/\tau}$ depends only on the time scale $\tau_q$, the adaptive base rate enters only through its conditional mean at each time scale, while the $\tau$-marginal $p_\infty(\tau)$ controls the geometry of the envelope in the time-scale coordinate.
The resulting \emph{$\Lambda$-weighted Laplace representation} is
\begin{equation}
\label{eq:envelope_integral_weighted}
f(\ell)
\;=\;
\int_0^\infty
\bar{\Lambda}(\tau)\,
e^{-\ell/\tau}\,
p_\infty(\tau)\,d\tau,
\end{equation}
where
\begin{equation}
\label{eq:lambda_bar_def}
\bar{\Lambda}(\tau)
=
\mathbb{E}\!\left[\Lambda^{(q)}_{r,\ell}\,\big|\,\tau_q=\tau\right]
\end{equation}
is the conditional mean adaptive base rate at time
scale~$\tau$.
Intuitively, any variation of the optimizer-induced
preconditioner across neurons sharing a common time scale
is invisible to the envelope: the Laplace kernel cannot resolve
$\Lambda$-fluctuations at fixed~$\tau$, so only their
conditional mean~$\bar{\Lambda}(\tau)$ survives the
averaging.
In practice, $\bar{\Lambda}(\tau)$ need not be computed explicitly: because the Rayleigh projection is uniformly
bounded, $\lambda_{\min}\le\Lambda^{(q)}_{r,\ell}\le\lambda_{\max}$ for all $q$ and $\ell$~\citep{livi2026learnability},
the function $\bar{\Lambda}(\tau)$ inherits the same bounds (hence the $\ell$-dependence is suppressed), and the analysis proceeds via the unweighted envelope $f_0(\ell)$ introduced below.

\paragraph{Sandwich bound and optimizer independence.}
The uniform bound~\citep{livi2026learnability} on $\bar{\Lambda}(\tau)$ implies that the weighted envelope~\eqref{eq:envelope_integral_weighted} is sandwiched between two multiples of the \emph{unweighted envelope}
\begin{equation}
\label{eq:envelope_integral}
f_0(\ell)
=
\int_0^\infty
e^{-\ell/\tau}\,
p_\infty(\tau)\,d\tau,
\end{equation}
namely,
\begin{equation}
\label{eq:sandwich}
\lambda_{\min}\,f_0(\ell)
\;\le\;
f(\ell)
\;\le\;
\lambda_{\max}\,f_0(\ell).
\end{equation}
The sandwich~\eqref{eq:sandwich} ensures that $f(\ell)$ and $f_0(\ell)$ share the same asymptotic scaling class: if $f_0(\ell)\sim c\,g(\ell)$ then $f(\ell)$ is bounded above and below by positive multiples of $g(\ell)$ for all large~$\ell$.
The scaling class is therefore optimizer-independent; only the prefactor absorbs the bounded optimizer-dependent factor.

This passage from the finite mixture to the continuous integral is checked numerically in Appendix~\ref{app:mixture_and_large_width_validation}, where both the per-neuron exponential approximation (Appendix~\ref{app:mixture_exp_validation}) and the population-level concentration (Appendix~\ref{app:width_validation}) are validated.

\subsubsection{Standard Laplace form}

The asymptotic results below are stated for the unweighted envelope $f_0(\ell)$.
To apply classical Tauberian theorems~\citep{feller1971introduction,bingham1989regular,nair2022fundamentals}, we rewrite~\eqref{eq:envelope_integral} in the standard Laplace form.
The natural variable is the asymptotic decay rate~$\bar{\mu}=\tau^{-1}$, given that $e^{-\ell/\tau}=e^{-\ell\bar{\mu}}$ is already a Laplace kernel in the rate.
Rates and time scales are dual descriptions of the same quantity~\eqref{eq:tau_def}; the analysis moves to time scales~$\tau$ when interpreting the spectrum, and returns to rates~$\bar{\mu}$ when the Laplace structure is needed.
Since $|d\tau/d\bar{\mu}|=\bar{\mu}^{-2}$, conservation of probability mass gives the rate density
\begin{equation}
\label{eq:change_tau_u}
p_{\bar{\mu}}(\bar{\mu})=\bar{\mu}^{-2}\,p_\infty(1/\bar{\mu}),
\end{equation}
and~\eqref{eq:envelope_integral} becomes
\begin{equation}
\label{eq:envelope_Laplace_u}
f_0(\ell)=\int_0^\infty e^{-\ell\bar{\mu}}\,p_{\bar{\mu}}(\bar{\mu})\,d\bar{\mu},
\end{equation}
which is the Laplace transform of $p_{\bar{\mu}}$.
Consequently, the asymptotic decay of $f_0(\ell)$ for $\ell\to\infty$ is determined by the behavior of $p_{\bar{\mu}}(\bar{\mu})$ near $\bar{\mu}=0$; equivalently via~\eqref{eq:tau_def} by the behavior of $p_\infty(\tau)$ as $\tau\to\infty$.
The classical Tauberian theorem for Laplace transforms \citep{feller1971introduction} makes this correspondence precise.
Appendix~\ref{app:envelope_tail_calculations} contains the full derivations for all envelope decay laws discussed below, together with the tail-class definitions (slowly varying, regularly varying, log-regularly varying) used in the derivations.

\subsection{Exponential envelope from the canonical collapsed representative}
\label{sec:exponential_envelope}

The canonical collapsed representative does not have the regularly varying right tail required for a power-law envelope.
The clean representative considered here is the pure-diffusion OU case, whose envelope is effectively exponential throughout the median-dominated lag range and crosses to a non-power-law, log-normal-type asymptotic at larger lags (Remark~\ref{rem:collapsed_range}).
In this representative regime, the stationary density $\rho_\infty(\zeta)$ is Gaussian (centered at~$\zeta^*$ with variance $\eta_G/\gamma$, where $\gamma$ is the local restoring rate of the drift near the bulk equilibrium~\eqref{eq:drift_linearization}), and therefore the corresponding time-scale distribution $p_\infty(\tau)$ is log-normal. It is concentrated around the typical scale $\tau_\ast=e^{-\mathbb{E}[\zeta]}$, has all moments finite, and has a right tail that decays faster than any polynomial.

When $\eta_J^*>0$ and $0<\eta_J<\eta_J^*$, jump forcing is present and the stationary density need not be Gaussian.
The characteristic equation selects no interior anti-collapse tail mode, but the exact asymptotic envelope of the generic sub-threshold jump process is not derived here.
Nevertheless, all collapsed configurations examined in Section~\ref{sec:empirical} exhibit concentrated spectra and clearly exponential envelopes over the measured lag range.
This supports the canonical OU representative as a useful operational description of the collapsed regime in finite-horizon experiments.

When the log-normal variance $\sigma^2=\eta_G/\gamma$ is small, the integral representation~\eqref{eq:envelope_integral} is dominated by values of $\tau$ in a neighborhood of the median $\tau_\ast$ for all lags in the median-dominated regime $\ell\sigma^2/\tau_\ast\ll 1$.
Expanding $1/\tau$ about $\tau_\ast$ then gives
\begin{equation}
\label{eq:collapsed_decay}
f_0(\ell)\sim e^{-\ell/\tau_\ast}.
\end{equation}
This approximation covers an operationally wide range of lags when the local restoring rate is strong relative to diffusive noise, i.e. when $\sigma^2 = \eta_G/\gamma$ is small (Remark~\ref{rem:collapsed_range}). Thus, the envelope decays effectively as a single exponential throughout the median-dominated lag range $\ell\sigma^2/\tau_\ast\ll 1$. This is an operational approximation, not a strict $\ell\to\infty$ equivalence for the log-normal mixture (see Remark~\ref{rem:collapsed_range} for the crossover behavior at longer lags).
Setting $\lambda=e^{-1/\tau_\ast}\in(0,1)$, this is equivalently written in the geometric form $f_0(\ell)\sim\lambda^\ell$, which is the convention used in the learnability analysis~\citep{livi2026learnability}.
The two representations are interchangeable; the continuous form $e^{-\ell/\tau_\ast}$ makes the dependence on the dominant time scale explicit, while the geometric form $\lambda^\ell$ is natural for discrete-lag sample-complexity calculations.

\subsection{Power-law envelope from anti-collapsed regime}
\label{sec:algebraic_envelope}

When the jump intensity exceeds the existence threshold $\eta_J^*$, the stationary time-scale distribution has a regularly varying right tail
\begin{equation}
\label{eq:power_tail_tau}
p_\infty(\tau)
\sim
c\,\tau^{-1-\beta},
\qquad
\beta>0,
\
\tau\to\infty,
\end{equation}
as already established in Eq.~\ref{eq:stationary_timescale_power_law}, Section~\ref{sec:anticollapsed_regime}.
We retain $c$ explicitly because the Tauberian correspondence preserves it.
Using the change of variable \eqref{eq:change_tau_u} gives the density of the small-rate behavior:
\begin{equation}
\label{eq:small_u_density}
p_{\bar{\mu}}(\bar{\mu})
\sim
c\,\bar{\mu}^{\beta-1},
\qquad
\bar{\mu}\to0 .
\end{equation}
Since the Laplace transform~\eqref{eq:envelope_Laplace_u} concentrates near $\bar{\mu}=0$ when $\ell\to\infty$, the asymptotic decay of $f_0(\ell)$ is determined by the small-$\bar{\mu}$ behavior of $p_{\bar{\mu}}$.
Applying case~(i) of Proposition~\ref{prop:tauberian} to the asymptotic form~\eqref{eq:small_u_density} yields
\begin{equation}
\label{eq:power_decay}
f_0(\ell)
\sim
c\,\Gamma(\beta)\,\ell^{-\beta},
\qquad \beta>0, \ \ell\to\infty,
\end{equation}
where $\Gamma(\beta)$ denotes the Gamma function~\eqref{eq:app_gamma_integral}.
Since $\Gamma(\beta)$ is a finite positive constant for each
$\beta>0$, it enters as a prefactor that does not affect the
polynomial scaling class.
This result establishes that the same spectral exponent
$\beta$ that governs the phase structure of the anti-collapsed regime also determines the power-law decay of the macroscopic envelope.

\subsection{Logarithmic envelope from log-regular regime (ansatz tail, $\beta\downarrow 0$)}
\label{sec:logarithmic_envelope}

A log-regularly varying tail of the form
\begin{equation}
\label{eq:app_logregular_tail}
p_\infty(\tau)
\sim
\frac{1}{\tau(\log\tau)^{1+\vartheta}},
\qquad \tau\to\infty,\ \vartheta>0,
\end{equation}
is the borderline integrable regularly varying tail: the logarithmic factor makes the otherwise nonnormalizable $\tau^{-1}$ density admissible and produces a sub-polynomial Laplace envelope~\citep{feller1971introduction}.

Case~(ii) of Proposition~\ref{prop:tauberian}, applied to the log-regularly varying tail~\eqref{eq:app_logregular_tail}, then gives (see Appendix~\ref{app:log_envelope_derivation_full} for the full calculation)
\begin{equation}
\label{eq:critical_decay}
f_0(\ell)\sim \frac{1}{\vartheta}\,(\log \ell)^{-\vartheta},
\qquad \ell\to\infty.
\end{equation}

Unlike the exponential and power-law cases, the log-regularly varying tail~\eqref{eq:app_logregular_tail} is not derived from the proposed stochastic model.
It is the ansatz at the $\beta\downarrow 0$ boundary, motivated by the observation that, as $\beta\downarrow 0$, the power-law exponent vanishes and the tail approaches $\tau^{-1}$ modulated by a slowly varying factor; the form~\eqref{eq:app_logregular_tail} is the canonical representative of the log-regularly varying class~\citep{feller1971introduction}.
Within the symmetric tempered L\'evy framework, whenever the spectral exponent exists ($\eta_J>\eta_J^*$) it satisfies $\beta\in(0,\lambda)$ strictly (Proposition~\ref{lem:Phi_properties}), so the model never reaches this class.

\subsection{Observable envelopes, phase boundaries, and the limits of spectral broadening}
\label{sec:observable_envelopes}

The phase structure, stationary tails, and canonical envelope behaviors are collected in Table~\ref{tab:envelope_classes}.
The two phase boundaries (collapsed/anti-collapsed and concentrated/broad) are listed separately.
\begin{table}[tp]
\centering
\begin{tabular}{@{}llll@{}}
\toprule
\textbf{Regime} & \textbf{Condition} & \textbf{Tail $p_\infty(\tau)$} & \textbf{Canonical envelope $f_0(\ell)$} \\
\midrule
Broad anti-collapse & $\eta_J>\eta_J^*$, $\beta<1$ & $\tau^{-1-\beta}$, $\mathbb{E}[\tau]=\infty$ & $\ell^{-\beta}$ \\[3pt]
Critical manifold & $\eta_J>\eta_J^*$, $\beta=1$ & $\tau^{-2}$, $\mathbb{E}[\tau]=\infty$ & $\ell^{-1}$ \\[3pt]
Concentrated anti-collapse & $\eta_J>\eta_J^*$, $\beta>1$ & $\tau^{-1-\beta}$, finite $\mathbb{E}[\tau]$ & $\ell^{-\beta}$ \\[3pt]
Collapsed & $0<\eta_J<\eta_J^*$ & no power-law tail & $e^{-\ell/\tau_\ast}$ \\[3pt]
\midrule
Log-regular (ansatz) & $\eta_J>\eta_J^*$, $\beta\downarrow 0$ & $ \tau^{-1}(\log\tau)^{-1-\vartheta} $ & $1/\vartheta(\log\ell)^{-\vartheta}$ \\
\bottomrule
\end{tabular}
\caption{The first four rows summarize the open collapsed and anti-collapsed regimes together with the internal critical manifold at $\beta=1$.
The marginal existence boundary $\eta_J=\eta_J^*>0$ is omitted because its detailed stationary tail and envelope are not determined by the interior-mode analysis.
The critical manifold ($\beta=1$) is the boundary at which the mean time scale diverges; the envelope transitions continuously through $f_0(\ell)\sim\ell^{-1}$.
For the collapsed row, the condition denotes the strictly sub-threshold mixed model when $\eta_J^*>0$, while the displayed envelope is the operational behavior of the canonical Gaussian-confining OU representative at $\eta_J=0$.
The log-regular regime (bottom) is included via an ansatz tail at the $\beta\downarrow 0$ limit and is not attainable within the proposed stochastic model.}
\label{tab:envelope_classes}
\end{table}

\paragraph{Finite-network truncation.}
The canonical envelope laws derived above describe idealized asymptotic behavior.
In a finite network, the observable time-scale spectrum occupies a finite window $[\tau_{\min},\tau_{\max}]$. The lower endpoint is set by architectural and numerical constraints, while the upper endpoint is limited by sequence length, hidden dimension, and numerical precision.
In the anti-collapsed regime ($\eta_J>\eta_J^*$), we therefore model the finite-window spectrum by the truncated power-law density
\begin{equation}
\label{eq:truncated_timescale_density}
p_\infty^{\mathrm{trunc}}(\tau)
=
c\,\tau^{-1-\beta}\,
\mathbf 1_{\{\tau_{\min}\le\tau\le\tau_{\max}\}},
\end{equation}
where $c>0$ normalizes the density on this interval.
The corresponding unweighted envelope is
\begin{equation}
\label{eq:truncated_envelope}
f_0(\ell)
=
\int_{\tau_{\min}}^{\tau_{\max}}
e^{-\ell/\tau}\,c\,\tau^{-1-\beta}\,d\tau .
\end{equation}
This integral exhibits two asymptotic regimes.

\emph{Power-law regime ($\tau_{\min}\ll\ell\ll\tau_{\max}$).}
Extending the integral in~\eqref{eq:truncated_envelope} to $(0,\infty)$ gives
\begin{equation}
\int_0^\infty
e^{-\ell/\tau}\,c\,\tau^{-1-\beta}\,d\tau
=
c\,\Gamma(\beta)\,\ell^{-\beta}.
\end{equation}
The added contribution below $\tau_{\min}$ is exponentially small when $\ell\gg\tau_{\min}$, with order $\tau_{\min}^{1-\beta}\ell^{-1}e^{-\ell/\tau_{\min}}$.
The added contribution above $\tau_{\max}$ satisfies
\begin{equation}
\int_{\tau_{\max}}^\infty
e^{-\ell/\tau}\,c\,\tau^{-1-\beta}\,d\tau
\le
\frac{c}{\beta\,\tau_{\max}^{\beta}},
\end{equation}
whose size relative to $\ell^{-\beta}$ is of order $(\ell/\tau_{\max})^\beta$.
Both endpoint corrections are therefore negligible when $\tau_{\min}\ll\ell\ll\tau_{\max}$, and
\begin{equation}
f_0(\ell)
\sim
c\,\Gamma(\beta)\,\ell^{-\beta},
\qquad
\tau_{\min}\ll\ell\ll\tau_{\max}.
\end{equation}

\emph{Cutoff regime ($\ell\gg\tau_{\max}$).}
For large~$\ell$, the kernel $e^{-\ell/\tau}$ suppresses time scales below the upper endpoint, so the integral is dominated by $\tau=\tau_{\max}$.
Applying the boundary-maximum form of Laplace's method~\citep{de2010asymptotic} to~\eqref{eq:truncated_envelope}, with $h(\tau)=-1/\tau$ maximized at~$\tau_{\max}$, gives
\begin{equation}
f_0(\ell)
\sim
\frac{c\,\tau_{\max}^{-1-\beta}}
     {\ell\,h'(\tau_{\max})}
e^{\ell h(\tau_{\max})}
=
c\,\tau_{\max}^{1-\beta}\,
\ell^{-1}e^{-\ell/\tau_{\max}},
\qquad
h'(\tau_{\max})=\tau_{\max}^{-2}.
\end{equation}
Thus, beyond the largest supported time scale, the envelope is exponentially suppressed, with an algebraic prefactor.

The crossover occurs at a lag of order $\ell^\ast\approx\tau_{\max}$.
For $\tau_{\min}\ll\ell\ll\tau_{\max}$, the finite endpoints do not alter the power-law scaling.
For $\ell\gg\tau_{\max}$, the kernel concentrates near $\tau_{\max}$ and the envelope crosses to exponential cutoff behavior.

A convenient phenomenological fitting form that captures this crossover is
\begin{equation}
\label{eq:tempered_interpolant}
f_{\mathrm{int}}(\ell) = A\,\ell^{-\beta}e^{-\ell/\tau_{\max}},
\end{equation}
where $A$ absorbs the tail amplitude and the bounded optimizer-dependent factor from~\eqref{eq:sandwich}.
This interpolant is not derived from \eqref{eq:truncated_envelope}: it reproduces the power-law regime and the exponential cutoff rate, but not the exact algebraic prefactor $\ell^{-1}$ of the endpoint asymptotic.

\paragraph{Multimodality and phase classification.}
In finite networks, the stationary spectrum $p_\infty(\tau)$ may be
multimodal due to neurons specializing into different temporal roles.
However, since $f(\ell)$ is a Laplace--type transform of $p_\infty(\tau)$,
its large-lag asymptotics are determined solely by the
heaviest tail of the distribution.
Multimodality therefore affects only short-lag prefactors
and does not change the asymptotic envelope class.

\paragraph{Why sub-polynomial envelopes might be unreachable.}
The logarithmic envelope $f_0(\ell)\sim(\log\ell)^{-\vartheta}$ (Table~\ref{tab:envelope_classes}) would require $\beta\downarrow 0$, but whenever the spectral exponent exists ($\eta_J>\eta_J^*$), Proposition~\ref{lem:Phi_properties} constrains $\beta\in(0,\lambda)$ strictly.
When $\eta_J^*>0$ and $0<\eta_J<\eta_J^*$, no positive root is selected and the system lies on the collapsed side.
At equality, the characteristic equation has only the endpoint root $\beta=\lambda$, not the limiting behavior $\beta\downarrow0$. Neither case therefore realizes the log-regular limit considered here.

Approaching $\beta=0$ within the anti-collapsed branch requires a limiting choice of the control parameters, such as $\kappa\to0$ or $\eta_J\to\infty$, rather than a generic finite interior setting.
Finite networks impose an additional restriction.
Their observable time-scale spectrum is supported only on $[\tau_{\min},\tau_{\max}]$, and as $\beta\downarrow0$ the normalized truncated power law approaches $p_\infty(\tau)\propto1/\tau$ on this fixed interval.
This corresponds to approximately uniform probability mass per logarithmic time-scale interval, not to an asymptotic tail extending to arbitrarily large~$\tau$.
Moreover, the endpoint calculation above shows that for $\ell\gg\tau_{\max}$ the envelope necessarily crosses to exponential cutoff behavior.
A finite system may therefore display an increasingly shallow power law over its observable scaling window, but it cannot realize a logarithmic envelope as a strict large-lag asymptotic.

A further difficulty is that, as $\beta$ approaches zero, the route to a stable log-regular phase becomes increasingly poorly conditioned: ordinary finite changes in the effective control parameters (results not shown) produce little reliable leverage on the exponent itself, while small changes in the far-tail balance can visibly reshape the stationary spectrum. Thus, the deep broad-anti-collapse boundary is not simply a matter of tuning $\beta$ smaller: within the present stochastic model, it is an operationally fragile limit, hard to reach reproducibly and hard to stabilize once reached.
We offer this as a physics intuition for why the $\beta\downarrow 0$ edge may be dynamically inaccessible within the present stochastic model, not as an impossibility result for the log-regular regime as such.

These observations suggest that, within the proposed mesoscopic stochastic model, the power-law envelope $f_0(\ell)\sim\ell^{-\beta}$ with $\beta>0$ is the regime that anti-collapse robustly delivers, and that the log-regular regime is unlikely to emerge as a stable dynamical phase here.
Whether other admissible mesoscopic models, with modified drift saturation, different jump structure, or other ingredients, could yield genuinely accessible log-regular phases is an open question we do not address here.

\section{Empirical validation}
\label{sec:empirical}

In the proposed stochastic model, two co-occurring conditions form the necessary-and-sufficient criterion for the proposed jump-driven route to anti-collapse: a sufficiently heavy-tailed stochastic forcing channel and a positive, asymptotically flat far-left restoring drift, both realizable by the finite architecture--optimizer pair.
Two experiments isolate the two logical directions of this claim by considering the architectures described in Appendix~\ref{app:gated_rnns}.

The first experiment in Section~\ref{sec:exp_negative_control_main} is the falsification arm.
It uses a structurally constrained pair (ConstGate, whose gates are frozen at initialization) to test whether anti-collapse still fails when the heavy-tailed forcing channel is supplied externally.
This is the intended dissociation process in which forcing is artificially present but the capacity is structurally absent: forcing alone should not be sufficient if the architecture--optimizer pair cannot access the regime.
The second experiment in Section~\ref{sec:exp_access_route} is the verification arm.
Using the minimal trainable-gate architecture that can represent per-unit time scales (DiagGate), with SharedGate kept as a reference, it tests whether the proposed access route is actually experimentally realized. More precisely, the experiment tests whether heavy-tailed forcing co-occurs with a populated far-left log-rate spectrum, leading to a broad time-scale spectrum, a positive finite-window saturating restoring drift, and a power-law envelope (with an expected exponential cut-off).

Finally, we note that the experimental configurations below are the largest systematic multi-seed, multi-checkpoint runs we could carry out with the available compute.
The code is available in Appendix~\ref{app:code} for full replication.

\subsection{Task and training setup}
\label{sec:empirical_shared_setup}

\paragraph{Task.}
All experiments train recurrent networks on a synthetic long-memory
regression task with Gaussian input $x_t\sim\mathcal{N}(0,I_D)$ and a
target built from delayed one-dimensional projections,
\begin{equation}
\label{eq:long_memory_task}
y_t = \sum_{k=1}^{K} c_k\,u^\top x_{t-\ell_k} + \varepsilon_t,
\qquad \varepsilon_t\sim\mathcal{N}(0,\sigma_\varepsilon^2),
\end{equation}
with $u$ a fixed unit direction and a geometric coefficient schedule $c_k=c_0 r^{k}$.
The reported experiments use the heavy-tailed-lag variant, in which the $K$ target lags are drawn once from a truncated Pareto law $p(\ell)\;\propto\;\ell^{-(\alpha_{\mathrm{data}}+1)}, \ell\in[\ell_{\min},\ell_{\max}]$, (integer-rounded and de-duplicated) and held fixed across all sequences, batches, and seeds, so smaller $\alpha_{\mathrm{data}}$ places more mass at long lags.
The full-scale configuration on which all empirical claims rest is $H=512$, $T=1280$, $D=16$, $\alpha_{\mathrm{data}}=0.6$, $K=16$, $[\ell_{\min},\ell_{\max}]=[8,640]$, $c_0=0.6$, $r=0.85$, and $\sigma_\varepsilon=0.3$.

\paragraph{Training.}
Training uses AdamW~\citep{loshchilov2019decoupled} (learning rate $10^{-3}$, weight decay $10^{-4}$, default moments; global $\ell_2$ gradient clipping at $1.0$). Each experiment uses the ten random seeds $\{47,83,12,69,31,104,218,337,451,592\}$, a training set of $N_{\mathrm{train}}=8000$ and a diagnostic set of $N_{\mathrm{diag}}=6000$ sequences independently drawn from the same task instance and fixed across seeds, mini-batch size used for training of $380$ and for diagnostic $256$, and dense checkpointing every $40$ epochs.

\subsection{Structural negative control via a frozen-gate architecture}
\label{sec:exp_negative_control_main}

Whether a trained architecture--optimizer pair realizes the anti-collapsed regime
depends not only on the route conditions established in this paper, but also on the
pair's capacity to populate and maintain a broad time-scale spectrum.
ConstGate lacks that capacity by construction: its gates are frozen at
initialization, so trainable time scales are removed and the achievable
spectrum is anchored to the fixed-gate skeleton, with only first- and
higher-order learning-rate corrections available to create spectral heterogeneity.
The experiment asks whether this severely constrained pair fails to enter (and sustain) the anti-collapsed regime even when heavy-tailed forcing is supplied externally in the relevant update channel.

We run two matched paths on the task.
\emph{Path A} is the intervention variant: after each optimizer step, soft-tapered symmetric $\alpha$-stable forcing (configured here with $\alpha_{\mathrm{inj}}=1.5$) is added to the post-Adam update of the slow neurons (those affecting the far-left tail).
The forward pass and the loss gradient are left untouched.
The perturbation enters only the training-time update of the slow-mode parameters: the rows, indexed by the slow neurons, of the trainable weight matrices whose leading dimension runs over hidden neurons.
The slow neurons are the hidden neurons in the far-left log-rate slice $\zeta_q(t)\le\zeta_{q_{\mathrm{low}}}$ (equivalently, the neurons with the largest time scales), and these rows are the channel through which the effective-rate process $\zeta_q$ evolves.
We certify the intervention by applying the calibration-anchored tail estimator to the realized post-update increments of these slow rows.
The primary statistic is the moment estimator of the extreme-value index of \citet{dekkers1989moment}, which extends Hill's estimator to the full real line and therefore remains valid for light-tailed as well as heavy-tailed residuals.
We bias-correct it by matched-sample calibration against synthetic symmetric $\alpha$-stable laws, and report the resulting in-range effective tail index $\hat\alpha_{\mathrm{eff}}$ (for a regularly varying tail the extreme-value index is positive and the tail index is its reciprocal). The Hill estimator, defined only for heavy tails, is retained as a reliability cross-check~\citep{hill1975simple}.
The statistical certification is rejection of the matched-$n$ Gaussian light-tail null, while the continuous effect size is reported as the calibrated effective tail index $\hat\alpha_{\mathrm{eff}}$.
Because ConstGate's gates are frozen, the injection enters its trainable input/recurrent weights only, and we do not require ConstGate to exhibit heavy realized log-rate motion $\Delta\zeta_q$. If the frozen gate prevents a certified heavy update driver from becoming heavy motion in the log-rate process, that suppression is itself the capacity barrier.
\emph{Path B} is the matched spontaneous baseline: ConstGate is trained with no intervention, and the forcing index, far-left drift plateau, spectrum, and envelope class are read directly from the realized trajectory.

\paragraph{Far-left drift diagnostic.}
For consecutive late-training checkpoints, we write
$\Delta\zeta_q(t)=\zeta_q(t+\Delta t)-\zeta_q(t)$ for the one-checkpoint
increment of unit~$q$. For a lower-tail fraction $q_{\mathrm{low}}$, let
$\zeta_{q_{\mathrm{low}}}$ denote the empirical
$q_{\mathrm{low}}$-quantile of the pooled late-window values of
$\zeta_q(t)$. We collect all one-checkpoint increments
$d_i=\Delta\zeta_q(t)$ whose starting value satisfies
$\zeta_q(t)\le\zeta_{q_{\mathrm{low}}}$, and write their order
statistics as $d_{(1)}\le\cdots\le d_{(n)}$.
The reported plateau estimate is the trimmed mean $\hat\kappa_{\mathrm{tail}}(q_{\mathrm{low}}) = \frac{1}{n-2m} \sum_{i=m+1}^{n-m} d_{(i)},\ m=\lfloor \rho n\rfloor,\ \rho=0.1$.
Positive values correspond to inward motion (i.e. a restoring effect) of the slow neurons back toward the bulk of the log-rate spectrum.
We report the primary cut $q_{\mathrm{low}}=0.10$ and a robustness sweep $q_{\mathrm{low}}\in\{0.03,0.05,0.10,0.15,0.20\}$.
Uncertainty is estimated by a block bootstrap over checkpoint-transition blocks, with the tail quantile re-estimated inside each bootstrap sample.

\paragraph{Results.}
The result is a clean negative control across ten seeds
(Figure~\ref{fig:exp1_summary}). The intervention does what it is designed to
do in the measured coordinate: the matched-$n$ Gaussian null is rejected in all
ten Path~A seeds (Gaussian-boundary $p\le 0.003$), with calibrated index
$\hat\alpha_{\mathrm{eff}}=1.749\pm0.009$, against
$\hat\alpha_{\mathrm{eff}}=1.997\pm0.001$ for the spontaneous baseline
(null rejected in only $3/10$ seeds, all with tail index estimates very close to the Gaussian boundary).
The two paths reach the same final training loss ($\approx0.322$), so the result is not a training-failure artifact.
Both paths remain collapsed in all ten seeds (Figure~\ref{fig:exp1_summary_spectrum}). The log-rate spectrum tracks the Gaussian reference, indicating a light-tailed distribution in~$\zeta$; accordingly, the time-scale spectra remain concentrated and closely follow the log-normal reference, with no resolved power-law window in their complementary cumulative distribution functions (CCDFs).
Consistently with theory, the envelope decay is exponential over the measured lag range (Figure~\ref{fig:exp1_summary_envelope}).

The drift diagnostic confirms the result (Figure~\ref{fig:exp1_summary_drift}).
The far-left plateau $\hat\kappa_{\mathrm{tail}}$ is negative across the entire $q_{\mathrm{low}}$ sweep in both paths.
At the primary cut $q_{\mathrm{low}}=0.10$, Path~A gives
$\hat\kappa_{\mathrm{tail}}=-7.98\times10^{-3}$ with 90\% CI
$[-8.70,-7.22]\times10^{-3}$, while Path~B gives
$\hat\kappa_{\mathrm{tail}}=-6.79\times10^{-3}$ with 90\% CI
$[-7.51,-6.10]\times10^{-3}$.
At this primary cut, the tail slice is flat and compatible with a constant model in both paths, but with the wrong, outward sign; the positivity check fails at every cut.
Supplying heavy forcing therefore does not move the frozen-gate pair toward a positive restoring plateau.

The negative control experiment clearly shows that ConstGate does not convert the artificial heavy-tailed forcing into the far-left drift and spectral geometry signatures of the anti-collapsed regime.

\begin{figure}[thp!]
\centering

\begin{subfigure}[t]{0.96\linewidth}
\centering
\includegraphics[width=\linewidth]{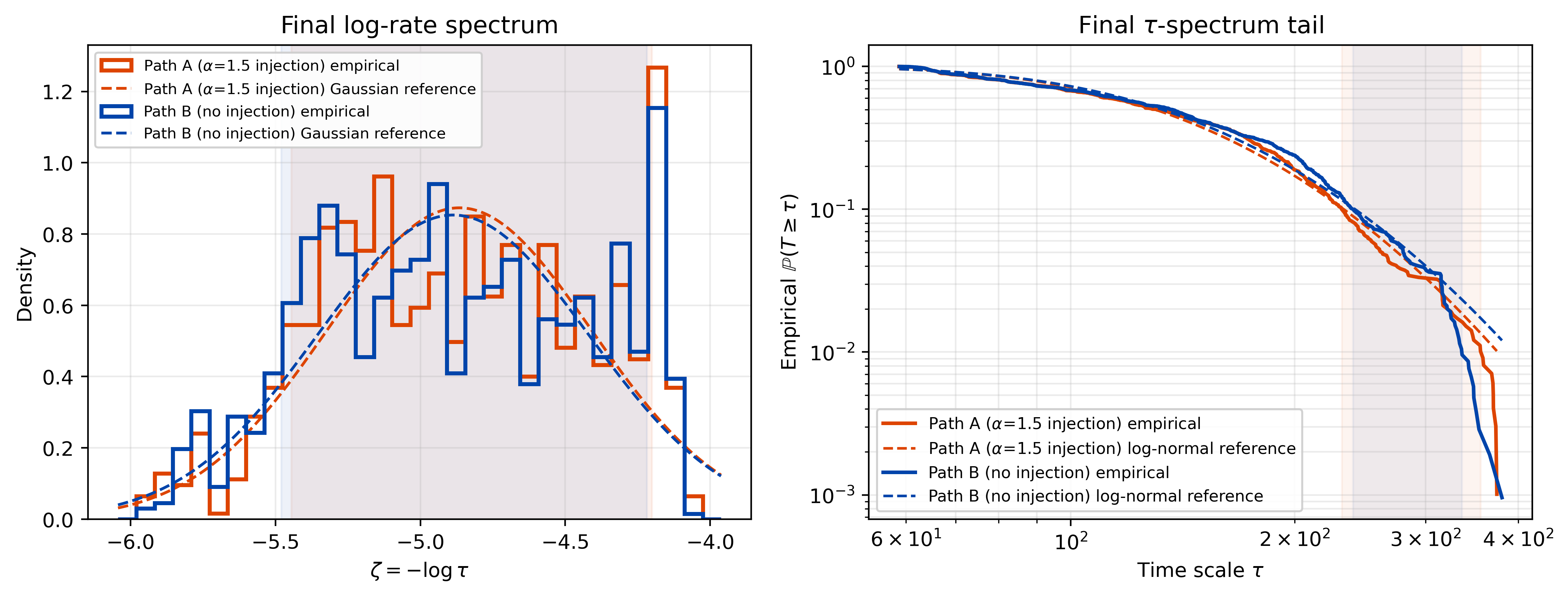}
\caption{Final log-rate spectrum and $\tau$-CCDF: concentrated with no resolved power-law window.}
\label{fig:exp1_summary_spectrum}
\end{subfigure}

\vspace{0.9em}

\begin{subfigure}[t]{0.48\linewidth}
\centering
\includegraphics[width=\linewidth]{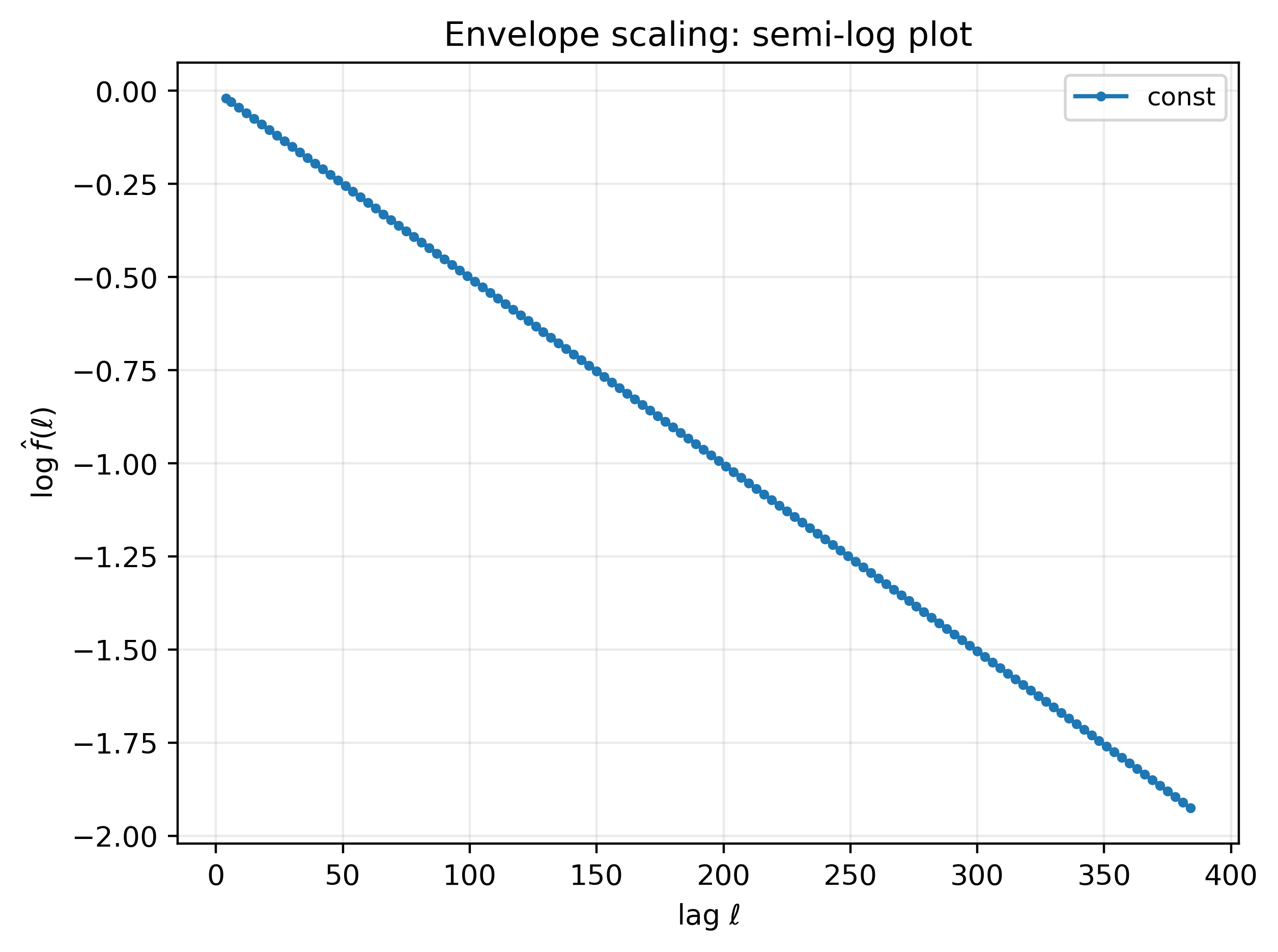}
\caption{Envelope $f(\ell)$ at convergence.}
\label{fig:exp1_summary_envelope}
\end{subfigure}
\hfill
\begin{subfigure}[t]{0.48\linewidth}
\centering
\includegraphics[width=\linewidth]{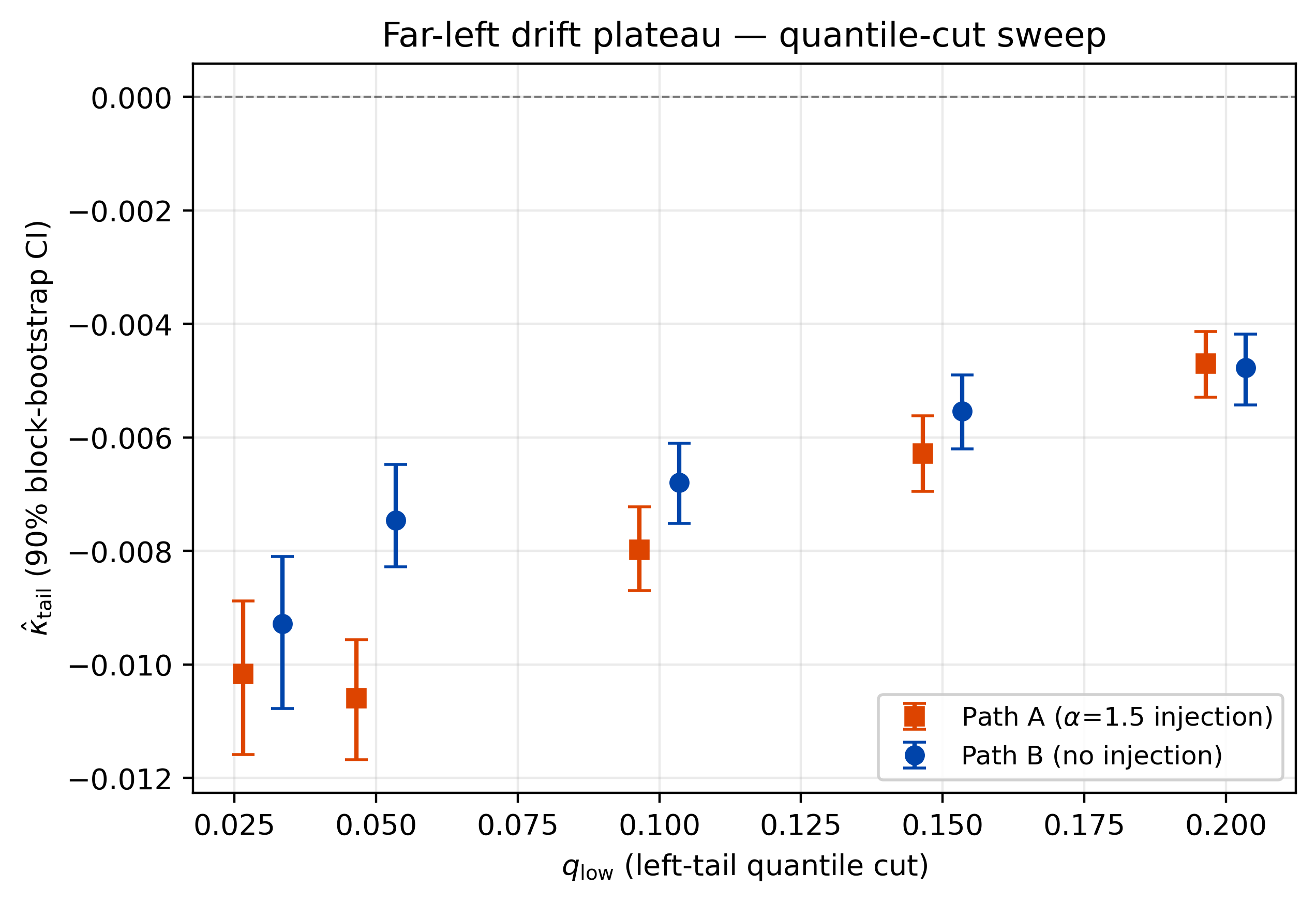}
\caption{Far-left drift plateau $\hat\kappa_{\mathrm{tail}}$ across the
$q_{\mathrm{low}}$ sweep: negative at every cut in both paths.}
\label{fig:exp1_summary_drift}
\end{subfigure}

\caption{Structural negative control on ConstGate.
Even when artificially injected heavy-tailed forcing is added into the slow-mode channel (Path~A), the frozen-gate model stays collapsed: the spectrum remains concentrated, the envelope stays exponential, and the far-left drift stays negative.}
\label{fig:exp1_summary}
\end{figure}

\subsection{Access route through trainable diagonal gates}
\label{sec:exp_access_route}

This experiment asks whether the access route that ConstGate fails to
realize becomes available once the model has per-unit spectral control.
We train SharedGate and DiagGate with AdamW on the same heavy-tailed-lag
task. SharedGate has one trainable gate shared across all recurrent
neurons: it can adapt a global decay scale, but cannot assign distinct
time scales to different neurons. DiagGate has one trainable gate per unit
and is the minimal architecture in this family with explicit per-unit
spectral degrees of freedom. The comparison is not a general capacity
ranking; it tests whether the route ingredients identified by the theory
appear together when heterogeneous spectral control is available.

\paragraph{Route diagnostics.}
We read the access route through four measurements.
First, the far-left drift diagnostic tests for a positive,
approximately saturating restoring plateau
$\hat\kappa_{\mathrm{tail}}(q_{\mathrm{low}})$.
We also display the associated conditional drift $\widehat F(\zeta) = \mathbb{E}[\Delta\zeta_q(t)\mid \zeta_q(t)=\zeta]$, estimated by binning late-window log-rate values and taking a trimmed mean of the corresponding increments.
The scalar $\hat\kappa_{\mathrm{tail}}(q_{\mathrm{low}})$ is the same drift observable averaged over the lower-tail slice $\zeta_q(t)\le \zeta_{q_{\mathrm{low}}}$.
Then, the slow log-rate increments are tested against a matched-$n$ Gaussian light-tail null, with effect size reported as the calibrated tail index $\hat\alpha_{\mathrm{eff}}$ using the estimator described in Section~\ref{sec:exp_negative_control_main}.
The estimated log-rate density should populate the far-left tail, and the time-scale CCDF should exhibit a resolved scaling window.
Finally, the macroscopic envelope should depart from the exponential class in a way consistent with the time-scale tail.

It is worth stressing that we do not estimate $\eta_J$ or $\eta_J^\ast$ directly.
Instead, the threshold is read through its observable stationary consequences: a resolved heavy time-scale tail, the corresponding non-exponential envelope, and the forcing and drift diagnostics predicted by the generator.
Therefore, the forcing diagnostic discussed below provides evidence of a heavy-tailed drive present in the experiment, not a scalar measurement of $\eta_J$ or $\eta_J^\ast$.
This is important because, in the saturating-drift case, the anti-collapsed branch can extend arbitrarily close to the pure-Gaussian-diffusion boundary when the remaining generator balance is favorable.

\paragraph{Results.}

The drift diagnostics give the first route-level check.
Across the lower-tail sweep, DiagGate exhibits a positive restoring drift
on the far-left slice, with
$\hat\kappa_{\mathrm{tail}}(0.10)=3.3\times10^{-3}$
and a 90\% confidence interval (CI) $[2.7,4.0]\times10^{-3}$ (Figure~\ref{fig:exp2_drift_plateau}). SharedGate also has a small
positive estimate, but the magnitude is substantially weaker.
Thus, the drift diagnostic distinguishes the two models quantitatively: DiagGate
produces a much stronger restoring force in the slow tail, while
SharedGate remains close to a narrow, weak-drift regime.
The detailed DiagGate drift profile clarifies what the plateau summary compresses.
The conditional drift $\widehat F(\zeta)$ is strongly inward at the far-left edge
and decreases toward the bulk rather than forming a perfectly flat
constant over the entire displayed range
(Figure~\ref{fig:exp2_diag_conditional_drift}). We therefore read the
diagnostic as evidence for a positive finite-window saturating restoring
geometry, not as an exact test of an asymptotic constant.
The late-window moment traces are approximately stable (Figure~\ref{fig:exp2_diag_late_moments}), supporting the interpretation that the drift is being measured in a late-training regime rather than during a transient expansion of the spectrum.


The forcing diagnostic uses the same instrument as the negative control
(Section~\ref{sec:exp_negative_control_main}): a matched-$n$ Gaussian null for
significance, the calibrated effective tail index $\hat\alpha_{\mathrm{eff}}$ for
the continuous effect size, and the Hill estimate $\hat\alpha_{\mathrm{Hill}}$ as an extreme-tail-sensitive cross-check.
We lead with the coordinate the SDE forcing acts on directly: the drift-subtracted far-left log-effective decay rate increments.
For each model, we form the one-checkpoint far-left log-rate increments $\Delta\zeta_q$ and subtract the estimated conditional drift, $r=\Delta\zeta_q-\widehat F(\zeta_q)$, then compare these residuals against a matched Gaussian reference.
For DiagGate, the residual survival lies far above the matched Gaussian over the displayed tail, the quantile--quantile (QQ) plot shows standardized excursions more than an order of magnitude beyond the Gaussian range, and the extreme-tail cross-check is heavy, $\hat\alpha_{\mathrm{Hill}}\approx1.5$ (Figures~\ref{fig:exp2_diag_zeta_residual_logsurvival} and~\ref{fig:exp2_diag_zeta_residual_qq}).
SharedGate shows no such signature: its residuals stay within the Gaussian range and the cross-check saturates at the light-tail boundary, $\hat\alpha_{\mathrm{Hill}}\approx2$
(Figures~\ref{fig:exp2_shared_zeta_residual_logsurvival} and~\ref{fig:exp2_shared_zeta_residual_qq}).
Therefore, the residuals obtained from the drift statistics separate the two architectures cleanly: a heavy-tailed forcing component for DiagGate and a near-Gaussian residual for SharedGate.
The trajectory-level view confirms this from the opposite side (Figure~\ref{fig:exp2_forcing_trajectory}).
The calibrated index $\hat\alpha_{\mathrm{eff}}$ stays near the Gaussian boundary
($\alpha=2$) for both models throughout training, ending at $1.982\pm0.004$ for
DiagGate and $1.989\pm0.004$ for SharedGate, so its magnitude does not separate them.
The matched-$n$ Gaussian null is nonetheless rejected, in all random seeds for DiagGate (median $p\approx0.003$) and in $5$ seeds for SharedGate.
At this sample size, even a small non-Gaussian component is detectable, so the rejection confirms its presence without measuring its size.
We therefore read $\hat\alpha_{\mathrm{eff}}$ and the trajectory as a consistency audit.

Finally, the spectrum and envelope diagnostics show whether the route ingredients produce the stationary consequence predicted by the theory.
SharedGate remains concentrated near a single log-rate scale: its final log-rate density is narrow, and the corresponding time-scale CCDF decays rapidly rather than forming a scaling window (Figure~\ref{fig:exp2_time_scale_spectrum}).
DiagGate, by contrast, populates a broad far-left log-rate tail and produces a slowly decaying time-scale CCDF over the resolved range.
The dashed guide in Figure~\ref{fig:exp2_time_scale_spectrum} fits the pre-cutoff shoulder of the DiagGate CCDF; the subsequent downward bend is the expected finite-size/finite-context cutoff.

We now take into account the related envelope scaling behaviour. For DiagGate, the estimated envelope $\hat f(\ell)$ is well described on log--log axes by a finite-window power law, with the expected cutoff close to the sequence length (Figure~\ref{fig:exp2_envelope_loglog_comparison}).
SharedGate does not show the same behavior and the corresponding envelope is visibly exponential, confirming the results for the time-scale spectrum.

Quantitatively, the fitted average time-scale tail exponent for DiagGate is $\hat\beta_{\mathrm{tail}}\approx0.42$, with a seed-level 95\% CI $[0.27,0.57]$.
The corresponding envelope-side estimate is $\hat\beta_{\mathrm{env}}\approx0.29$, with seed-level 95\% CI $[0.28,0.30]$ (Figure~\ref{fig:exp2_beta_env_trajectory}).
Both estimates are below one, placing the observed DiagGate spectrum in the broad anti-collapsed regime.
The theory predicts that, once the anti-collapsed regime has stabilized, the stationary time-scale spectrum and the macroscopic envelope are governed by the same spectral exponent in the asymptotic scaling window.
Figure~\ref{fig:exp2_beta_env_trajectory} compares the two finite-window estimates along training and focuses the statistical comparison on the final checkpoint, where the late-window diagnostics indicate quasi-stationarity.
At the final checkpoint, the seed-level CIs overlap and paired tests across seeds (two-sided paired $t$-test, with Wilcoxon signed-rank as a nonparametric check) do not reject equality of the two estimates at the 5\% level. We therefore interpret the agreement as a finite-window consistency check between the time-scale spectrum and the envelope class.

\begin{figure}[tph!]
  \centering

  \begin{subfigure}{0.92\textwidth}
    \centering
    \includegraphics[width=\linewidth]{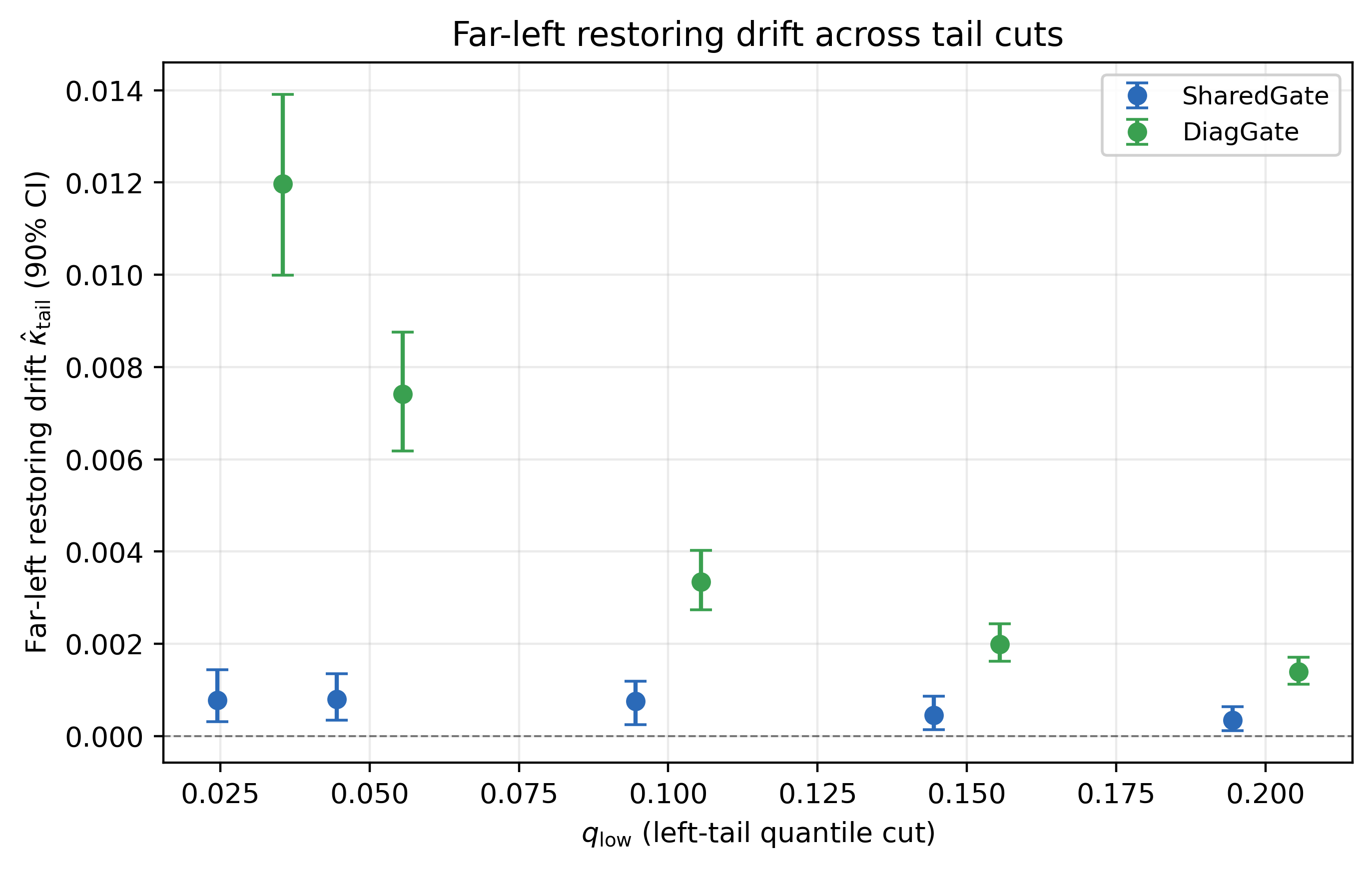}
    \caption{Far-left restoring drift across lower-tail cuts.}
    \label{fig:exp2_drift_plateau}
  \end{subfigure}

  \vspace{0.8em}

  \begin{subfigure}{0.48\textwidth}
    \centering
    \includegraphics[width=\linewidth]{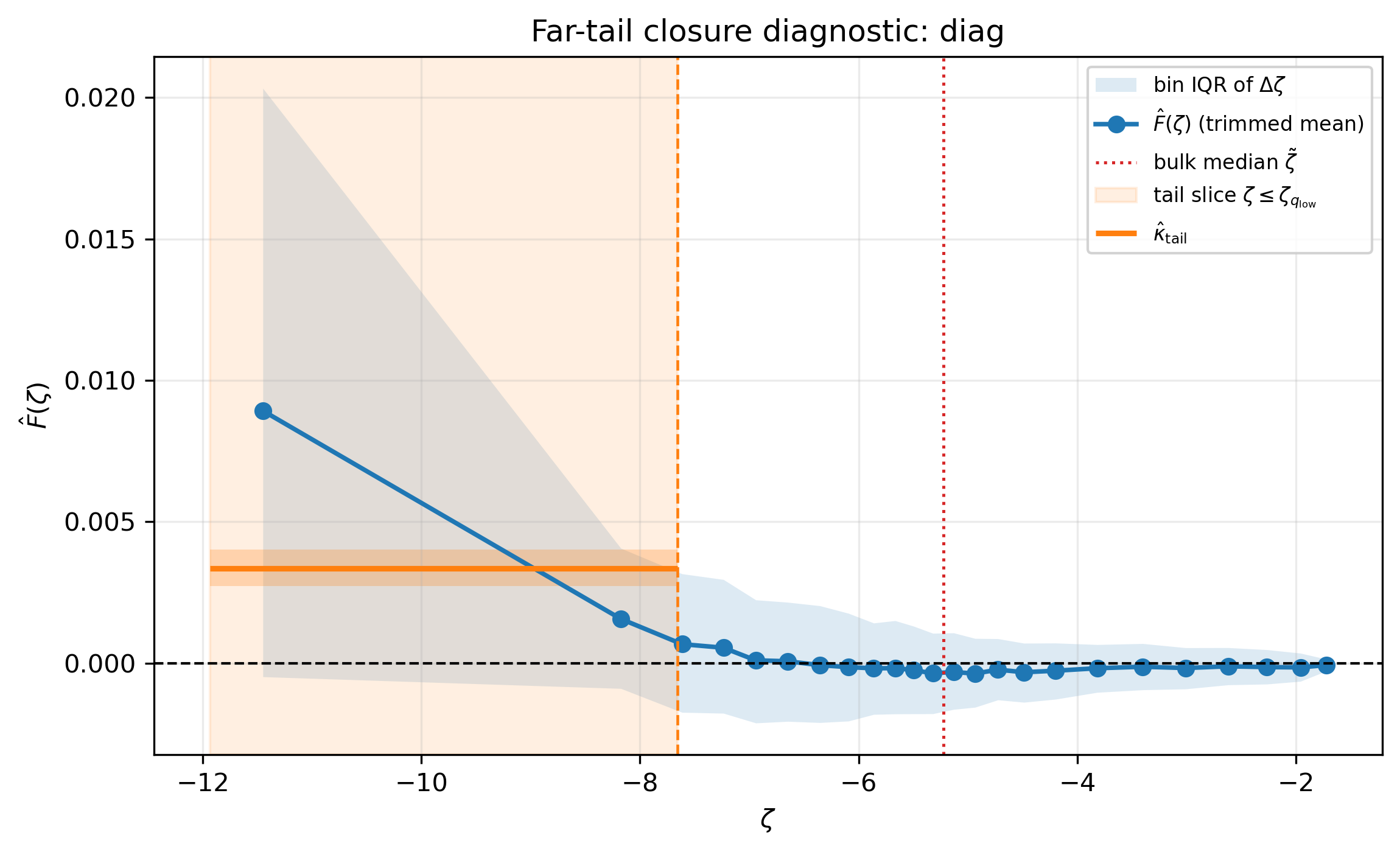}
    \caption{Conditional drift profile for DiagGate.}
    \label{fig:exp2_diag_conditional_drift}
  \end{subfigure}
  \hfill
  \begin{subfigure}{0.48\textwidth}
    \centering
    \includegraphics[width=\linewidth]{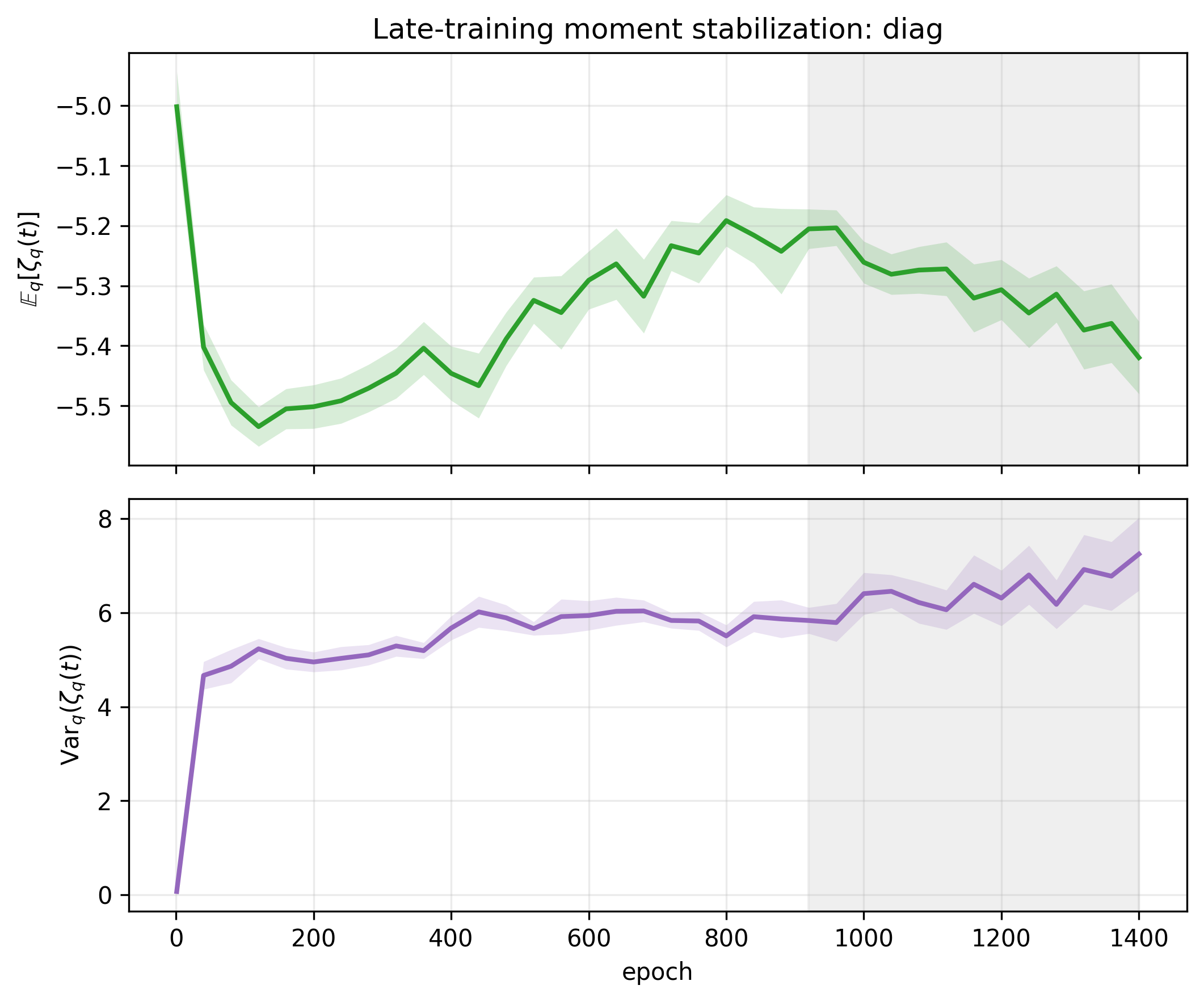}
    \caption{Late-window moment stability for DiagGate.}
    \label{fig:exp2_diag_late_moments}
  \end{subfigure}

  \caption{Drift diagnostics. The top panel compares the far-left
  restoring drift estimate across tail cuts. The lower panels show the
  DiagGate conditional drift profile and late-window moment stability used to
  assess the finite-window drift geometry.}
  \label{fig:exp2_drift_diagnostics}
\end{figure}

\begin{figure}[tph!]
  \centering

  \begin{subfigure}{0.48\textwidth}
    \centering
    \includegraphics[width=\linewidth]{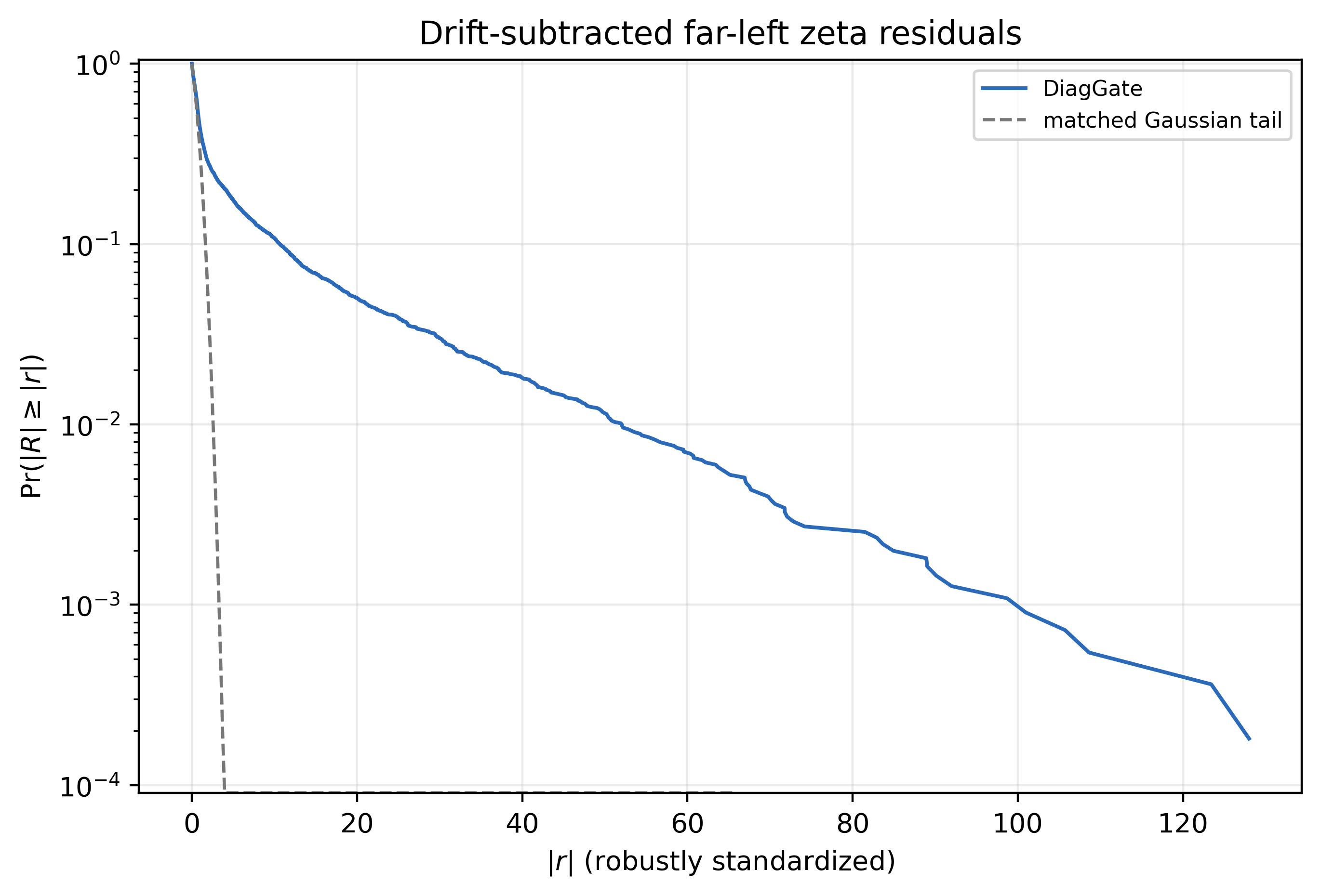}
    \caption{DiagGate residual log-survival.}
    \label{fig:exp2_diag_zeta_residual_logsurvival}
  \end{subfigure}
  \hfill
  \begin{subfigure}{0.48\textwidth}
    \centering
    \includegraphics[width=\linewidth]{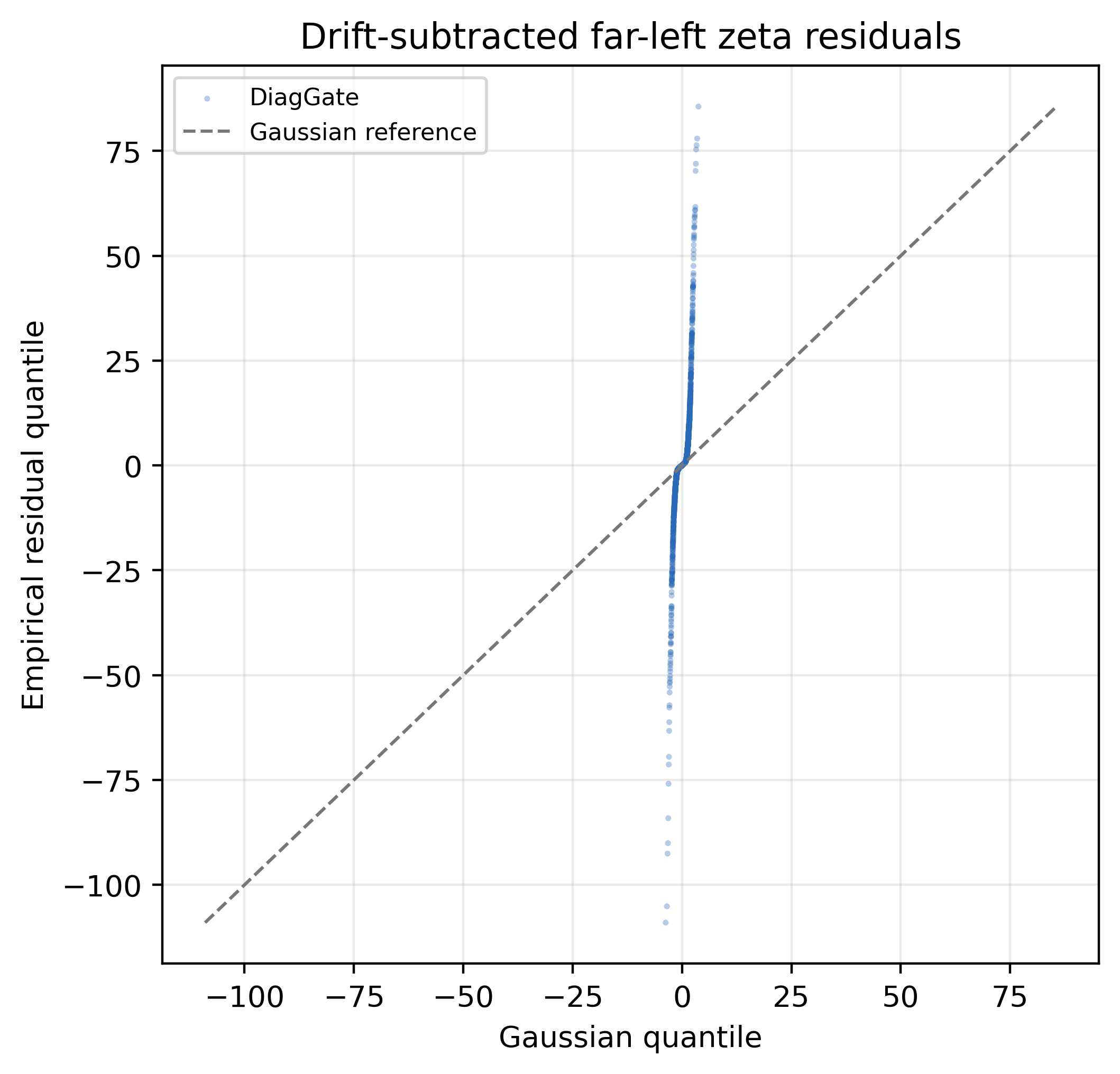}
    \caption{DiagGate residual QQ plot.}
    \label{fig:exp2_diag_zeta_residual_qq}
  \end{subfigure}

  \vspace{0.8em}
  
  \begin{subfigure}{0.48\textwidth}
    \centering
    \includegraphics[width=\linewidth]{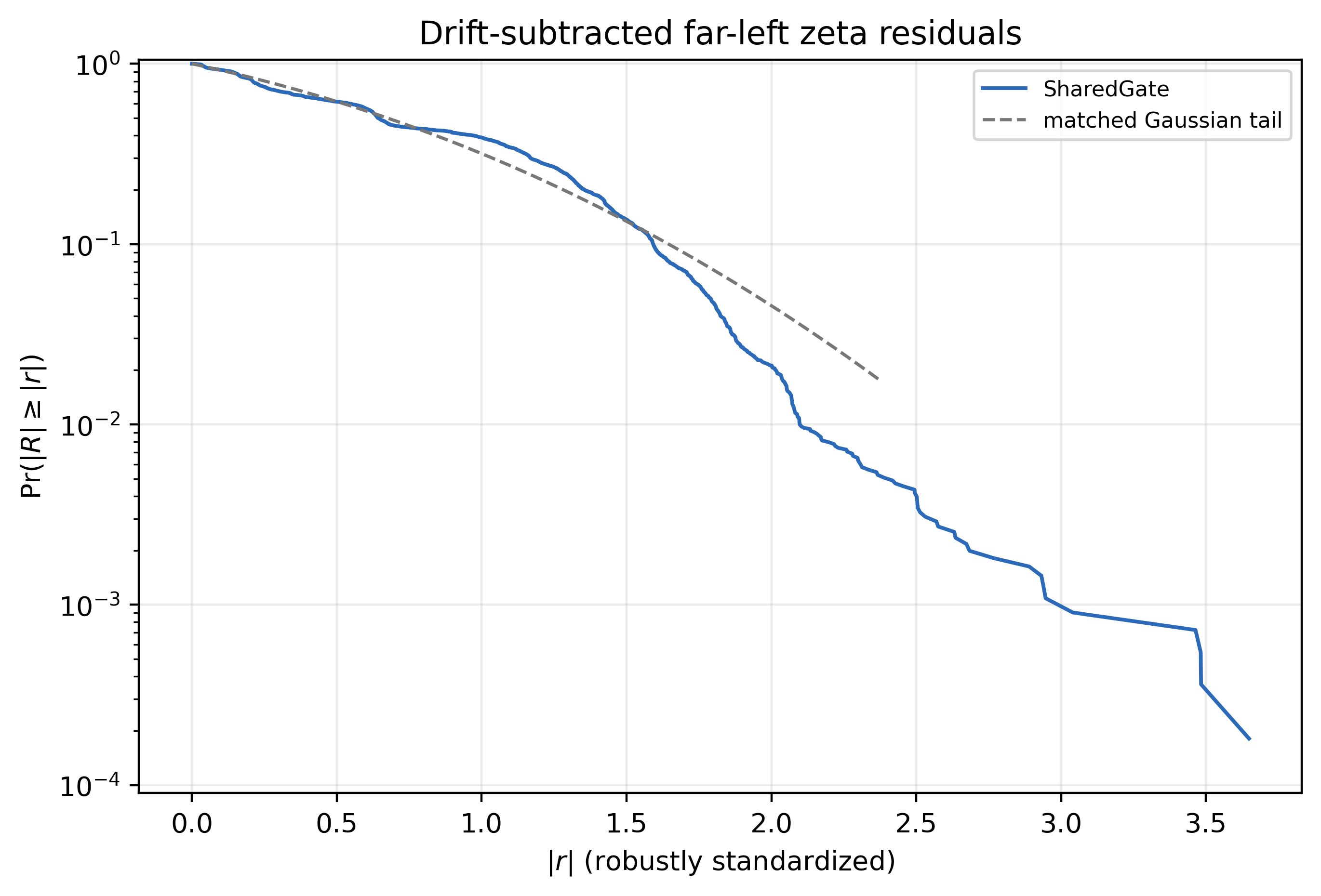}
    \caption{SharedGate residual log-survival.}
    \label{fig:exp2_shared_zeta_residual_logsurvival}
  \end{subfigure}
  \hfill
  \begin{subfigure}{0.48\textwidth}
    \centering
    \includegraphics[width=\linewidth]{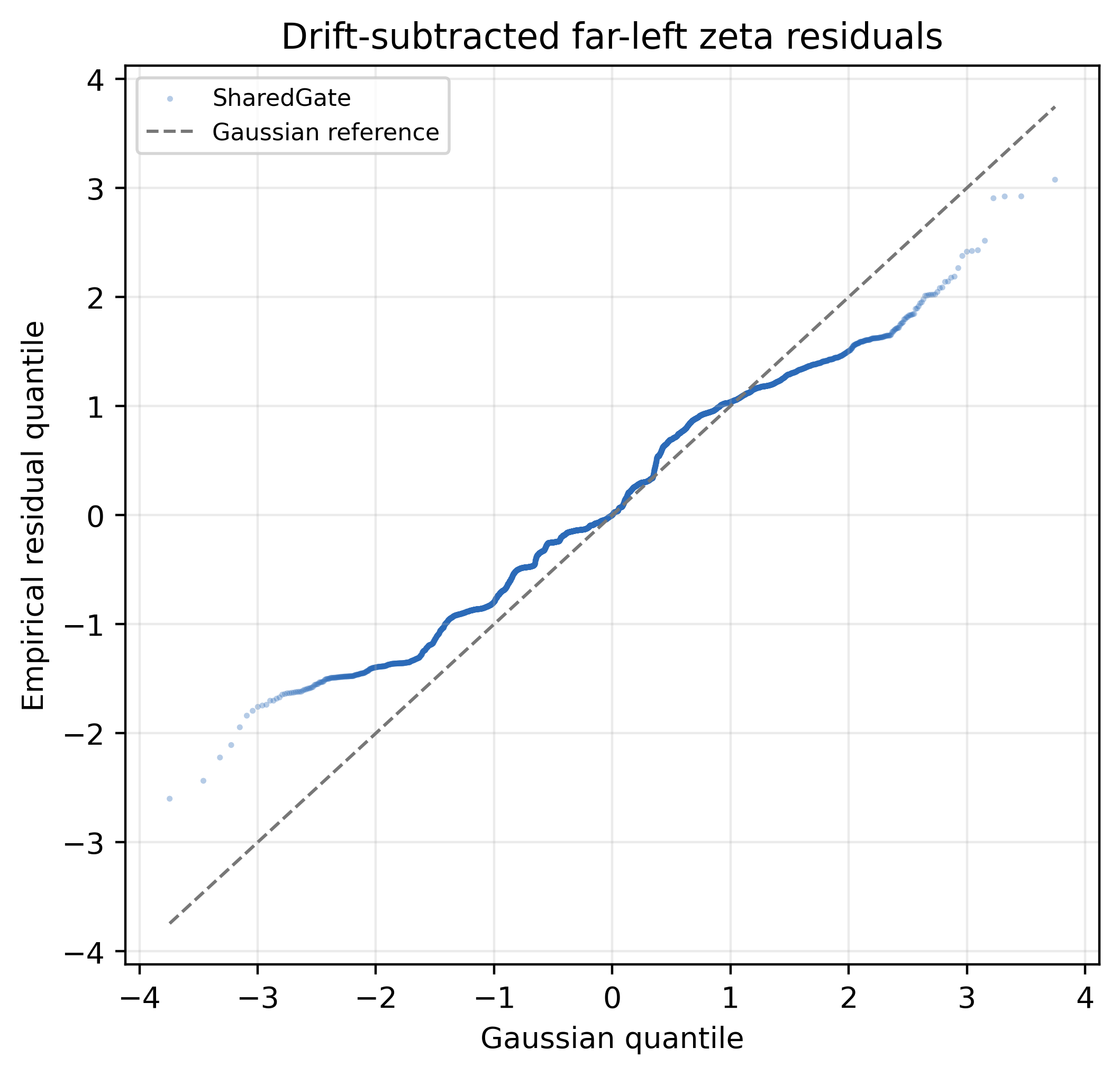}
    \caption{SharedGate residual QQ plot.}
    \label{fig:exp2_shared_zeta_residual_qq}
  \end{subfigure}

  \caption{Drift-subtracted far-left log-rate residuals for SharedGate and DiagGate. Here, $r=\Delta\zeta-\widehat F(\zeta)$ is the one-checkpoint log-rate increment after subtracting the estimated conditional drift, compared against matched Gaussian references.}
  \label{fig:exp2_forcing_diagnostics}
\end{figure}

\begin{figure}[tph!]
  \centering

  \begin{subfigure}{0.92\textwidth}
    \centering
    \includegraphics[width=\linewidth]{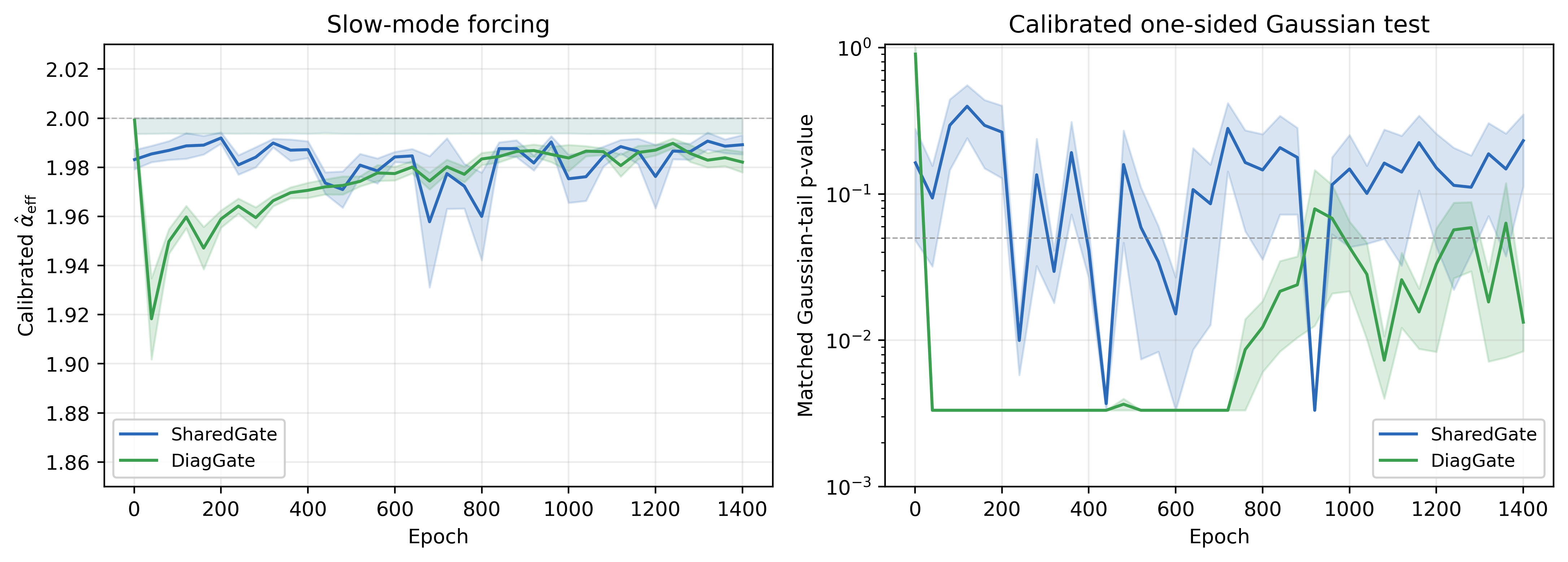}
  \end{subfigure}
  \caption{The panel reports the calibrated slow-mode tail-index $\hat{\alpha}_{\mathrm{eff}}$ across training.}
  \label{fig:exp2_forcing_trajectory}
\end{figure}

\begin{figure}[tph!]
  \centering

  \begin{subfigure}{0.92\textwidth}
    \centering
    \includegraphics[width=\linewidth]{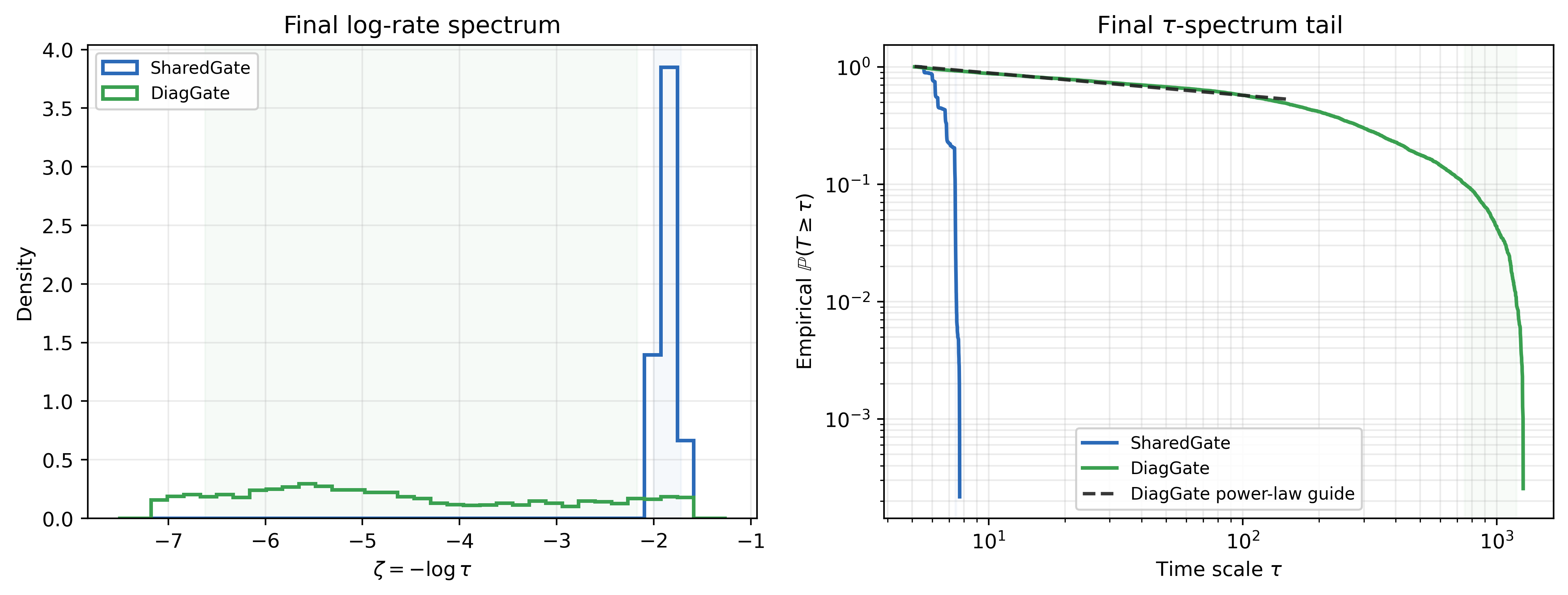}
    \caption{Final log-rate and time-scale spectra.}
    \label{fig:exp2_time_scale_spectrum}
  \end{subfigure}

  \vspace{0.8em}

  \begin{subfigure}{0.48\textwidth}
    \centering
    \includegraphics[width=\linewidth]{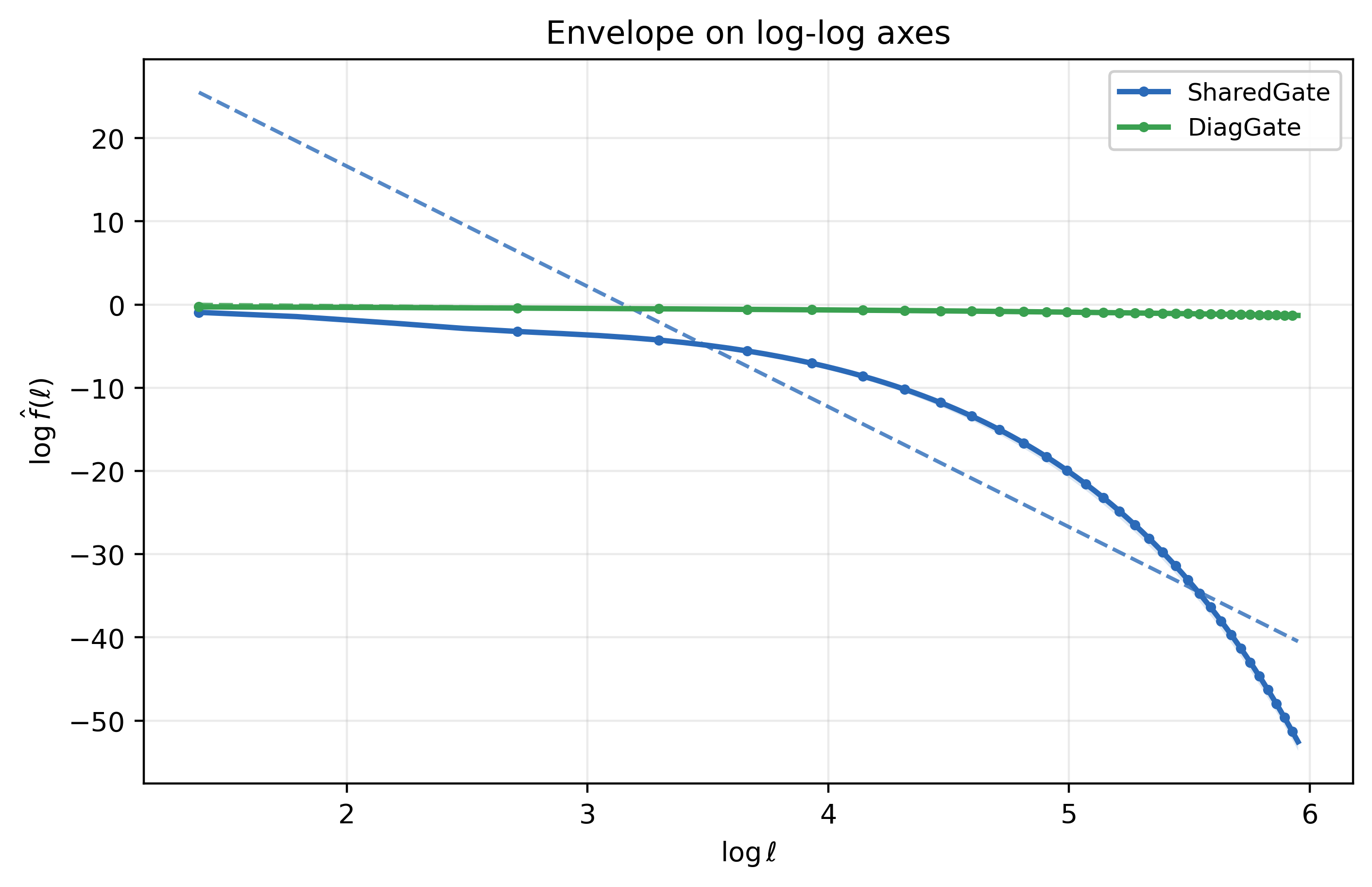}
    \caption{Macroscopic envelope on log-log axes.}
    \label{fig:exp2_envelope_loglog_comparison}
  \end{subfigure}
  \hfill
  \begin{subfigure}{0.48\textwidth}
    \centering
    \includegraphics[width=\linewidth]{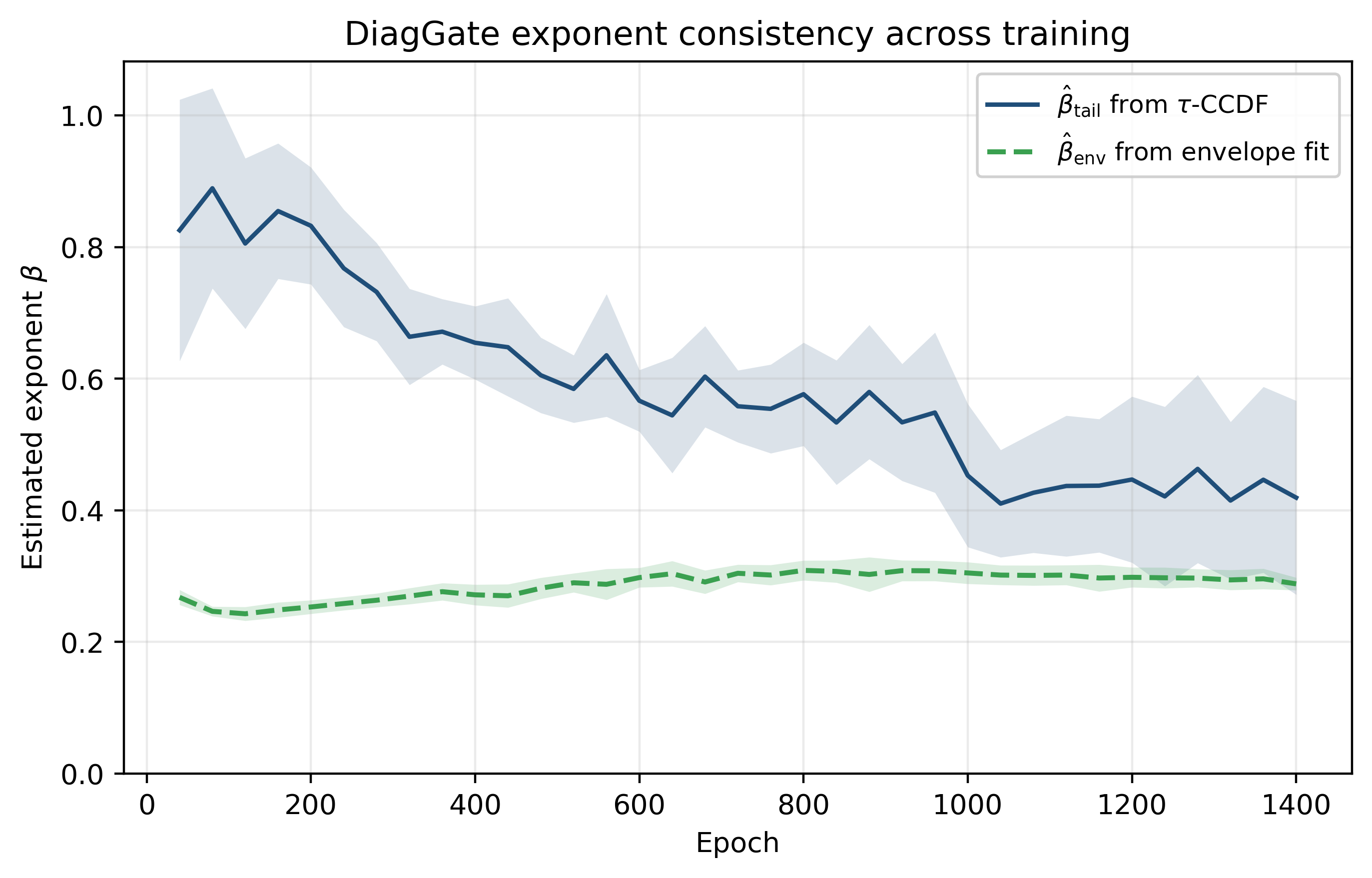}
    \caption{Spectral exponent for DiagGate's time-scale spectrum ($\hat\beta_{\mathrm{tail}}$) and envelope ($\hat\beta_{\mathrm{env}}$) estimated over training iterations.}
    \label{fig:exp2_beta_env_trajectory}
  \end{subfigure}

  \caption{Spectrum and envelope diagnostics. The top panel shows the
  final pooled log-rate spectrum and time-scale CCDF. The lower panels compare
  the decay of the envelopes and the estimated spectral exponents.}
  \label{fig:exp2_spectrum_envelope}
\end{figure}

Both architectures train stably to a comparably low final training loss (DiagGate $\approx0.1$, SharedGate $\approx0.12$ across seeds), so the diagnostic contrast reflects their late-training dynamics rather than a difference in trainability.
The results support the access-route interpretation: DiagGate is the architecture in which the forcing, drift, spectrum, and envelope diagnostics appear together, whereas SharedGate, equally trainable, does not realize the anti-collapsed stationary signature.

\clearpage
\section{Discussion}
\label{sec:discussion}

We developed a stochastic model of how training shapes the effective time scales of the coupled state and parameter dynamics in recurrent neural networks.
Our results demonstrate a possible route to the onset of the anti-collapsed regime: a regime in which the time-scale spectrum of a gated RNN does not collapse onto short scales but retains a broad spread.
Three results carry the framework.
\emph{An effective coarse-grained stochastic model}.
The dynamics of effective time scales is described by a stochastic process with three physically motivated ingredients: a restoring drift inherited from the deterministic structure of the architecture, a Gaussian baseline noise from light-tailed minibatch fluctuations, and a variable-amplitude tempered-L\'evy jump component that summarizes the heavy-tailed forcing.
Although this is not the only admissible mesoscopic surrogate, we argue that it is the simplest one that is both mechanistically interpretable and analytically tractable, so that the induced phase structure and its existence boundary can be characterized analytically.
\emph{A rigorous spectrum--envelope correspondence}.
The asymptotic decay class of the macroscopic envelope $f(\ell)$ is fixed by the tail of the stationary time-scale density $p_\infty(\tau)$ through a Laplace transform formalism.
This mathematical correspondence identifies the spectral exponent $\beta$ as a physical observable that can be estimated either from the tail of the time-scale population or from the decay of the envelope, and the two routes are constrained to agree up to experimental uncertainty.
\emph{A sharp phase structure with $\beta$ as a controllable order parameter}.
The stochastic generator separates an anti-collapsed regime, in which the spectrum has a regularly varying tail and the envelope decays as a power law, from a collapsed regime outside that tail class. For the canonical OU representative, the collapsed envelope is operationally exponential before crossing to a non-power-law log-normal asymptotic.
A codimension-one critical manifold at $\beta=1$ partitions anti-collapse into a concentrated sub-regime ($\beta>1$) and a broad sub-regime ($\beta<1$).
Log-regular tails are formally inside the envelope taxonomy but not dynamically reachable under the present stochastic model.

The experiments provide two complementary tests of this phase picture.
The frozen-gate negative control shows that heavy-tailed forcing by itself
does not produce anti-collapse: even when a calibrated heavy-tailed update
channel is supplied externally, ConstGate remains spectrally collapsed,
its envelope stays exponential, and its far-left drift plateau has the wrong sign.
Conversely, the trainable diagonal-gate experiment shows the route ingredients appearing together in the architecture that can assign distinct time scales to different neurons. DiagGate develops a populated far-left log-rate tail, non-Gaussian slow-mode statistics, a positive finite-window restoring drift in the tail, a broad time-scale spectrum, and a slowly decaying envelope consistent with the spectral tail; the SharedGate reference does not show this joint signature.
These results do not estimate the latent threshold parameters of the generator directly, but they validate the observable stationary consequences predicted by the stochastic model introduced in this paper.

\paragraph{Future directions.}

Several extensions of the present framework stand out, each of which sharpens a question the analysis raises but does not finish answering.

\emph{Capacity as a formal predicate.}
The phase analysis identifies the necessary-and-sufficient criterion for the proposed far-tail route to anti-collapse.
Whether a particular architecture--optimizer pair realizes the route additionally
depends on its \emph{capacity}, i.e. the actual ability of that pair, evaluated under a specified task, data distribution, and experimental conditions, to generate, populate, and maintain a broad spectrum of effective time scales.
The empirical access-route tests used in this paper make realizability operationally testable but do not predict it from first principles.
A quantitative theory that formally defines capacity and the related realizability prediction would close the gap between the conditions identified here and the additional requirements imposed by realistic experimental conditions.

\emph{Sub-polynomial regimes: theoretical accessibility and practical realizability.}
The log-regular regime ($\beta\downarrow 0$) sits inside the asymptotic envelope taxonomy as a well-defined ansatz class, but is not realized as a non-degenerate phase by the present tempered-L\'evy model with asymptotically constant far-left restoring drift.
On its anti-collapsed branch, the model instead yields a regularly varying power-law tail and the corresponding polynomially decaying envelope.
Accessing genuinely sub-polynomial envelopes as non-degenerate phases of the learning dynamics is therefore both a theoretical and an empirical question.
On the theoretical side, the drift--noise model used here is the simplest one consistent with the phenomenology, and extensions of it are the natural place to look for log-regular spectra.
The most direct candidate is a critically saturated drift in which the far-left restoring force approaches its asymptotic balance only through slow corrections, rebalancing the stationary tail equation toward the log-regular class.
On the empirical side, the question is whether realistic training dynamics can hold such a regime stably enough for the corresponding envelope decay to be observed: finite-width truncation, concentration near maximal time scales, and the operational fragility of the $\beta\downarrow 0$ boundary may all prevent it from appearing as a non-degenerate phase.
Settling this would simultaneously close a gap in the asymptotic envelope taxonomy and, by reaching even slower envelope decay, unlock significantly milder sample-complexity scaling for long-range learning.

\emph{Multistable and history-dependent generators.}
A separate extension is to relax the assumption that the
mesoscopic dynamics select a single stationary density. If the
effective generator has multiple invariant sectors, or if its
coefficients depend self-consistently on the evolving time-scale
population, different training histories could converge to
different stationary spectra under the same nominal
architecture--optimizer pair. In that setting, the phase portrait
would become multistable or history-dependent rather than a single-generator classification.

\emph{Cross-architecture generalization.} Although the theory is developed and tested on gated RNNs, the drift--noise phase structure and the spectrum--envelope correspondence are architecture-agnostic and apply, in principle, to any differentiable deep-learning model trained with SGD-like optimization.
Sequence models such as state-space models and transformers are natural targets to extend the theory~\citep{sieber2024understanding}.
The predicted scaling laws and phase structure are expected to transfer to other models once the notion of effective learning rates, envelope and related learnability quantities are properly mapped.

\paragraph{Self-organization and a control-theoretic perspective.}
Taken together, the learnability theory and the anti-collapse mechanism point to a self-organization picture of how training navigates the phase diagram.
The collapsed regime is not pathological in itself: when a task requires only short-range temporal structure, its operationally exponential envelope over the characteristic-scale range is entirely adequate, and gradient-based learning succeeds there without difficulty.
Over the median-dominated range relevant to finite-horizon learning, the canonical collapsed envelope is exponential and incurs an exponential sample-complexity cost. At its strict log-normal asymptotic, the cost becomes quasi-polynomial rather than exponential, but remains super-polynomial. The anti-collapsed power-law envelope therefore retains the qualitative advantage central to the paper: polynomial sample complexity.

This sample-complexity pressure is what makes anti-collapse a self-organizing target. Under long-range tasks, the joint dynamics of architecture and optimizer is selectively pushed toward the anti-collapsed regime whenever the conditions allow.
The aggregated heavy-tailed stochastic forcing that drives the transition is, on this reading, not an incidental nuisance to be smoothed away, but the mechanism by which training escapes the collapsed phase into a regime with polynomial sample complexity.
This view resonates with recent evidence that near-optimal performance does not need to concentrate on finely tuned critical manifolds but can occupy extended regions of (control-)parameter space characterized by heterogeneous sensitivities~\citep{bauer2025optimization}. We argue that the anti-collapse phase is a plausible and formal instance of that picture.

A complementary benefit is structural: anti-collapse is incompatible with an identifiable family of failure modes.
Vanishing- and exploding-gradient phenomena are specific, pathological instances of collapsed-envelope behaviour.
The learnability theory of \citet{livi2026learnability} shows that both failure modes can arise only when the macroscopic envelope decays exponentially, and are therefore dynamically excluded outside the collapsed phase.
Enforcing anti-collapse does not guarantee convergence to a task-specific optimum, but it does provide a structural guarantee against the gradient pathologies whose finite-time signature is a fast-exponential envelope decay.

This raises a natural question. Why design deep learning systems
that \emph{rely on} self-organization to enter the anti-collapsed
regime when, in principle, the regime could be enforced by construction?
A control-theoretic perspective on training suggests one answer.
Instead of treating optimization as the primary instrument and the resulting time-scale spectrum as an emergent byproduct, one can take the drift--noise balance itself as the primary design object and constrain the coupled dynamics to live in the anti-collapsed regime.
Given this enforceable constraint, the data and task-specific loss drive task performance within that regime.
The framework developed here supplies the mechanistic substrate: it identifies the order parameter $\beta$, the route and conditions to access the anti-collapsed regime, and the diagnostic axes by which the regime can be measured and certified in practice.

Both readings, self-organization into anti-collapse and anti-collapse enforced by
construction, share the same mechanistic substrate, which we conjecture applies to
any differentiable deep-learning system trained with SGD-like optimization.
Establishing this would place the present analysis within the broader effort toward
a scientific theory of deep learning~\citep{simon2026scientifictheorydeeplearning}.
Beyond explanation, it could turn that theory into a foundation for engineering:
by exposing the order parameter, the access conditions, and the failure modes a
regime excludes, it would supply concrete principles for designing deep-learning
systems whose long-range learning is provably robust by construction.

\clearpage
\bibliographystyle{abbrvnat}
\bibliography{bibliography.bib}

\clearpage
\appendix

\section{Optimization via stochastic gradient descent}
\label{app:SGD}

This appendix summarizes the optimization and BPTT framework that gives rise to the effective learning rates $\mu^{(q)}_{t,\ell}$ introduced in Section~\ref{sec:asymptotic_decay_log_effective_LR}.
The full derivation and empirical validation are discussed in~\citet{livi2026learnability}.

Training RNNs follows standard gradient-based
optimization~\citep{ruder2016overview}.
Under plain SGD with learning rate $\mu>0$, the update at
iteration~$r$ is
\begin{equation}
\label{eq:sgd_update_app}
\theta_{r+1}
\;=\;
\theta_r
-\mu\,\nabla_\theta \mathcal{L}(\theta_r),
\qquad
\mathcal{L}
\;=\;
\sum_{t=1}^{T}\mathcal{E}_t,
\end{equation}
where $\mathcal{E}_t$ is the instantaneous loss at sequence time~$t$.
In practice, gradients are averaged over a randomly selected mini-batch of independent sequences.
Under an adaptive optimizer (Adam, RMSprop, etc.), the global rate is replaced by a diagonal preconditioner:
\begin{equation}
\label{eq:adaptive_update_app}
\theta_{r+1}
\;=\;
\theta_r
-\Lambda_r\,\nabla_\theta \mathcal{L}(\theta_r),
\qquad
\Lambda_r
=\mathrm{diag}(\lambda_{1,r},\dots,\lambda_{P,r}),
\end{equation}
where $P=\dim\theta$ and $\lambda_{i,r}>0$ are per-parameter adaptive
rates determined by the optimizer state.

In recurrent architectures, computing $\nabla_\theta\mathcal{L}$ requires unrolling the dynamics through time.
Let $h_t$ denote the recurrent state at step~$t$, and define the one-step state Jacobian $J_j=\partial h_j/\partial h_{j-1}$ and the parameter--state Jacobian $B_\ell(\theta)=\partial h_\ell/\partial\theta$.
The gradient contribution of the loss at time~$t$ is
\begin{equation}
\label{eq:bptt_one_step_app}
\frac{\partial\mathcal{E}_t}{\partial\theta}
=
\delta_t^\top
\sum_{\ell=1}^{t}
\mathcal{M}_{t,\ell}\,B_\ell(\theta),
\qquad
\mathcal{M}_{t,\ell}
=
\prod_{j=\ell+1}^{t}J_j,
\end{equation}
where $\delta_t=\partial\mathcal{E}_t/\partial h_t$ is the local loss gradient.
The state-transition Jacobian product $\mathcal{M}_{t,\ell}$ transports gradient information backward from step~$t$ to step~$\ell$; its structure determines how temporal credit is distributed across lags.

As in \citet{livi2026learnability}, in Eq.~\eqref{eq:bptt_one_step_app}, $\ell$ is an absolute source-time index. In other sections, $\ell$ denotes a temporal lag relative to the reference time~$t$.
To avoid introducing duplicate notation, we retain the same symbols after this re-indexing. Whenever $\ell$ denotes a temporal lag in a quantity inherited from the source-time BPTT formulation, that quantity is understood after replacing its source-time index by $t-\ell$; for example, $\mathcal{M}_{t,\ell}$ and $B_\ell$ denote $\mathcal{M}_{t,t-\ell}$ and $B_{t-\ell}$, respectively.

\section{Gated RNN architectures and transport factors}
\label{app:gated_rnns}

This appendix collects the recurrent dynamics and the resulting neuron-wise transport factors $\Gamma^{(q)}_{t,\ell}$ for the gated architectures used in this paper.
Under the GELR factorization~\eqref{eq:GELR_factorization}, the per-neuron effective learning rate is $\mu^{(q)}_{t,\ell}=\Lambda^{(q)}_{r,\ell}\,\Gamma^{(q)}_{t,\ell}$, where $\Lambda^{(q)}_{r,\ell}$ is the bounded adaptive base rate and $\Gamma^{(q)}_{t,\ell}$ encodes the state-space transport induced by gating.

For each architecture, $\Gamma^{(q)}_{t,\ell}$ is obtained from a first-order expansion~\citep{livi2025timescale,livi2026learnability} of the Jacobian product $\mathcal{M}_{t,\ell}$ and takes the form
\begin{equation}
\label{eq:transport_factor_general}
\Gamma^{(q)}_{t,\ell}
\;=\;
\gamma^{(0,q)}_{t,\ell}
\;+\;
\gamma^{(1,q)}_{t,\ell},
\end{equation}
where $\gamma^{(0,q)}_{t,\ell}$ is a zeroth-order gate-product and $\gamma^{(1,q)}_{t,\ell}$ collects first-order diagonal corrections.
Full derivations of the one-step Jacobians, the expansion, and the details of the zeroth and first-order terms are given in~\citet{livi2026learnability,livi2025timescale}.

\subsection{Gated RNNs}
\label{app:diag_gated}

Throughout, $x_t\in\mathbb{R}^D$ is the input, $h_t\in\mathbb{R}^H$ the hidden state, and $\odot$ denotes the Hadamard product.
All nonlinearities act elementwise.
These three models share the update template
\begin{equation}
\label{eq:baseline_dynamics}
h_t
\;=\;
(1-s_t)\odot h_{t-1}
\;+\;
s_t\odot \tanh(W_hx_t+U_hh_{t-1}+b_h),
\end{equation}
but differ in how the gate $s_t$ is produced:
\begin{itemize}[nosep]
\item \textbf{DiagGate.}\;
  $s_t=\sigma(W_sx_t+U_sh_{t-1}+b_s)\in(0,1)^H$
  \quad(per-neuron gate).
\item \textbf{SharedGate.}\;
  $s_t=\sigma(w_s^\top x_t+u_s^\top h_{t-1}+b_s)\in(0,1)$
  \quad(global scalar gate).
\item \textbf{ConstGate.}\;
  $s_t=s\in(0,1)$
  \quad(fixed scalar, not learned).
\end{itemize}

\subsection{Product structure and bounded transport}
\label{app:product_structure}

This appendix derives the asymptotic decay
rate~$\bar{\mu}_q$~\eqref{eq:mu_asymptotic_decay_rate} from the
structure of the recurrent transport. Throughout, $\ell$ is read
as the temporal displacement (lag) over which gradient
information decays, consistent with~\eqref{eq:mu_asymptotic_decay_rate};
the BPTT-style index range $\{\ell+1,\dots,t\}$ inherited
from~\eqref{eq:bptt_one_step_app} is retained as a notational
convenience.
The argument has three ingredients, established in turn in the
paragraphs below:
(i) the optimizer-amplitude factor $\Lambda^{(q)}_{r,\ell}$
contributes negligibly and drops out of the rate;
(ii) the zeroth-order gate-product transport produces a finite,
strictly positive rate under standing conditions on the gate
process;
(iii) first-order perturbative corrections to the gate-product
representation are subexponential in $\ell$ and do not modify
the leading rate.

\paragraph{Optimizer-amplitude contribution.}
We first show that the optimizer amplitude drops out of the
rate. The GELR factorization~\eqref{eq:GELR_factorization}
writes the effective learning rate as a product of optimizer
amplitude and transport factor; substituting
into~\eqref{eq:mu_asymptotic_decay_rate} and using linearity of
expectation,
\[
\bar{\mu}_q
=
-\lim_{\ell\to\infty}\frac{1}{\ell}
\Bigl(
\mathbb{E}\!\left[\log|\Lambda^{(q)}_{r,\ell}|\right]
+\mathbb{E}\!\left[\log|\Gamma^{(q)}_{t,\ell}|\right]
\Bigr).
\]
The optimizer-amplitude term contributes
$(1/\ell)\,\mathbb{E}\!\left[\log|\Lambda^{(q)}_{r,\ell}|\right]\to 0$
because $\Lambda^{(q)}_{r,\ell}$ is bounded above and bounded
away from zero in the late-training regime, which makes
$\log|\Lambda^{(q)}_{r,\ell}|$ uniformly $O(1)$ in $\ell$. The
asymptotic rate is therefore determined entirely by the
transport factor:
\[
\bar{\mu}_q
=
-\lim_{\ell\to\infty}\frac{1}{\ell}\,
\mathbb{E}\!\left[\log|\Gamma^{(q)}_{t,\ell}|\right].
\]
The transport factor has a zeroth-order gate-product
contribution plus first-order (and higher) perturbative
corrections, and we compute the limit in two steps: the next
paragraph extracts the leading rate from the zeroth-order gate
product alone, and the final paragraph shows that the
first-order corrections to that rate are subexponential in
$\ell$ and therefore do not modify the leading exponential
decay rate.

\paragraph{Zeroth-order rate.}
We first compute the leading asymptotic rate from the
zeroth-order contribution to the transport factor. Across all
five architectures, this contribution takes the generic
gate-product form
\begin{equation}
\label{eq:generic_product}
\gamma^{(0,q)}_{t,\ell}
\;=\;
\prod_{j=\ell+1}^{t}\chi_{j,q},
\qquad
\chi_{j,q}\in(0,1),
\end{equation}
where $\chi_{j,q}$ is a gate-derived retention factor specific
to each architecture ($1{-}s_{j,q}$ for the baseline models,
$f_{j,q}$ for LSTM, $1{-}z_{j,q}$ for GRU). Each finite-lag
factor is contractive, so $\gamma^{(0,q)}_{t,\ell}\in(0,1)$, and
its log-magnitude is a sum of $\ell$ retention-log terms,
$\log|\gamma^{(0,q)}_{t,\ell}|=\sum_{j=\ell+1}^{t}\log\chi_{j,q}$.

Since the asymptotic rate~\eqref{eq:mu_asymptotic_decay_rate}
is defined as a limit of expectations, we now compute
$\mathbb{E}\!\left[\log|\gamma^{(0,q)}_{t,\ell}|\right]$.
Two standing conditions on the gate process in the
late-training regime make the calculation tractable:
(i) quasi-stationarity, meaning that the marginal law of
$\chi_{j,q}$ is approximately invariant in $j$ across the
late-training window, so that per-summand expectations are
approximately constant;
(ii) finite stationary log moment,
$\mathbb{E}_\infty[|\log\chi_q|]<\infty$, which ensures that the
per-summand expectation $\mathbb{E}[\log\chi_{j,q}]$ is finite.
Linearity of expectation applied to the sum representation of
$\log|\gamma^{(0,q)}_{t,\ell}|$ gives
\[
\mathbb{E}\!\left[\log|\gamma^{(0,q)}_{t,\ell}|\right]
=\sum_{j=\ell+1}^{t}\mathbb{E}[\log\chi_{j,q}].
\]
Under quasi-stationarity, the marginal expectations $a_{j,q}=\mathbb{E}[\log\chi_{j,q}]$ approach the limiting stationary value $a_{\infty,q}=\mathbb{E}_\infty[\log\chi_q]$ in the late-training window.
Therefore, their average deviation vanishes,
\begin{equation}
\frac{1}{\ell}\sum_{j}\bigl(a_{j,q}-a_{\infty,q}\bigr)\longrightarrow 0,
\end{equation}
or equivalently $\sum_j(a_{j,q}-a_{\infty,q})=o(\ell)$.
Here, the sum is understood over the $\ell$ consecutive factors of the lag-$\ell$ transport, using the source-time re-indexing convention stated after Eq.~\eqref{eq:bptt_one_step_app}.
By Ces\`aro averaging, the cumulative deviation is sublinear in the lag, $\sum_j(a_{j,q}-a_{\infty,q})=o(\ell)$.
The premise that $a_{j,q}\to a_{\infty,q}$ over the late-training window is supported empirically by the same drift-closure analysis (Section~\ref{sec:exp_access_route}) that supports Assumption~\ref{ass:quasi_stationarity}. The late-window second moment of the per-unit log-decay-rate distribution $\mathrm{Var}_q(\zeta_q(t))$ is stable, confirming that the gate process settles into a quasi-stationary regime before the averaging is invoked.
Substituting yields
\begin{equation}
\label{eq:zeroth_order_log_identity}
\mathbb{E}\!\left[\log|\gamma^{(0,q)}_{t,\ell}|\right]
=\ell\,\mathbb{E}_\infty[\log\chi_q]+o(\ell),
\end{equation}
where the $o(\ell)$ term is precisely the cumulative deviation
of the marginal expectations from their limiting stationary value.
Dividing~\eqref{eq:zeroth_order_log_identity} by $\ell$ and
letting $\ell\to\infty$ yields a finite zeroth-order asymptotic
rate, strictly positive when
$\mathbb{E}_\infty[\log\chi_q]<0$ (non-degenerate average
contraction). This positivity condition is automatic in the
architectures considered here: the retention factor $\chi_{j,q}$
is produced by sigmoid-output gate operations
($1-\sigma(\cdot)$ for the baseline gates, $f_{j,q}=\sigma(\cdot)$
for LSTM, $1-z_{j,q}=1-\sigma(\cdot)$ for GRU) that strictly lie
in $(0,1)$ for any finite pre-activation, so $\log\chi_q<0$
almost surely under the stationary law and consequently
$\mathbb{E}_\infty[\log\chi_q]<0$.

\paragraph{First-order corrections are subexponential.}
We now show that the first-order corrections to the zeroth-order
rate are subexponential in $\ell$ and therefore do not modify
the leading exponential decay rate. The first-order diagonal
expansion of the recurrent Jacobian product writes each one-step
transport factor as $\chi_{j,q}+\varepsilon\,b_{j,q}$, where
$\varepsilon$ is a small perturbation amplitude and $b_{j,q}$
collects the bounded first-order contributions from gate
sensitivity and recurrent mixing~\citep{livi2025timescale}. We work in the perturbative
regime, in which $\varepsilon$ is small enough that the
first-order expansion is meaningful and the $b_{j,q}$ are
uniformly bounded:
\begin{equation}
\label{eq:bjq_uniform_bound}
\sup_{j}|b_{j,q}|\le M<\infty.
\end{equation}
Substituting $\chi_{j,q}+\varepsilon\,b_{j,q}$ for each one-step
factor in $\Gamma^{(q)}_{t,\ell}$ and expanding to first
order~\citep{livi2025timescale} yields
\begin{equation}
\label{eq:transport_first_order}
\Gamma^{(q)}_{t,\ell}
=\gamma^{(0,q)}_{t,\ell}
+\varepsilon\,\gamma^{(1,q)}_{t,\ell}+O(\varepsilon^2),
\qquad
\gamma^{(1,q)}_{t,\ell}
=\sum_{m=\ell+1}^{t}b_{m,q}
\!\!\prod_{\substack{j=\ell+1\\ j\ne m}}^{t}\!\!\chi_{j,q},
\end{equation}
i.e.\ the first-order envelope $\gamma^{(1,q)}_{t,\ell}$ is a
sum of $\ell$ contributions, each obtained by replacing a single
zeroth-order retention factor $\chi_{m,q}$ by its perturbative
counterpart $b_{m,q}$ and preserving the remaining $\ell-1$
contractive retention factors.

To bound $|\gamma^{(1,q)}_{t,\ell}|$ in terms of
$|\gamma^{(0,q)}_{t,\ell}|$, we proceed term by term. By the
uniform bound~\eqref{eq:bjq_uniform_bound} and the
factor-by-factor identity
$\prod_{j\ne m}\chi_{j,q}=\gamma^{(0,q)}_{t,\ell}/\chi_{m,q}$,
each summand in~\eqref{eq:transport_first_order} satisfies
\[
\left|b_{m,q}\!\!\prod_{\substack{j=\ell+1\\ j\ne m}}^{t}\!\!\chi_{j,q}\right|
\;\le\;M\,\frac{\gamma^{(0,q)}_{t,\ell}}{\chi_{m,q}}.
\]
We further assume non-degeneracy of the gate process in the
late-training regime, in the sense
\begin{equation}
\label{eq:nondegeneracy_chi}
\inf_{j}\chi_{j,q}\ge c>0,
\end{equation}
which bounds every $\chi_{m,q}$ from below by~$c$, so each
summand is at most $(M/c)\,\gamma^{(0,q)}_{t,\ell}$. Summing
over the $\ell$ values of $m$ gives the multiplicative bound
\begin{equation}
\label{eq:transport_first_order_bound}
\bigl|\gamma^{(1,q)}_{t,\ell}\bigr|
\;\le\;\frac{M}{c}\,\ell\,\bigl|\gamma^{(0,q)}_{t,\ell}\bigr|.
\end{equation}

We now convert the
expansion~\eqref{eq:transport_first_order} into a logarithmic
statement, since the asymptotic rate
$\bar{\mu}_q$~\eqref{eq:mu_asymptotic_decay_rate} is defined in
log-space and we want to express the first-order correction as
an \emph{additive} contribution to $\log|\Gamma^{(q)}_{t,\ell}|$.
Factoring $\gamma^{(0,q)}_{t,\ell}$ out
of~\eqref{eq:transport_first_order} gives
\[
\Gamma^{(q)}_{t,\ell}
=\gamma^{(0,q)}_{t,\ell}\!\left[\,1+\varepsilon\,
\frac{\gamma^{(1,q)}_{t,\ell}}{\gamma^{(0,q)}_{t,\ell}}\,\right]
+O(\varepsilon^2);
\]
taking absolute values and using $|ab|=|a||b|$,
\[
\bigl|\Gamma^{(q)}_{t,\ell}\bigr|
=\bigl|\gamma^{(0,q)}_{t,\ell}\bigr|\cdot
\left|\,1+\varepsilon\,
\frac{\gamma^{(1,q)}_{t,\ell}}{\gamma^{(0,q)}_{t,\ell}}\,\right|
+O(\varepsilon^2);
\]
and applying $\log$ together with $\log|ab|=\log|a|+\log|b|$ on
the leading product, with the $O(\varepsilon^2)$ remainder
absorbed additively under $\log$ in the perturbative regime,
yields
\begin{equation}
\label{eq:transport_log_decomposition}
\log\bigl|\Gamma^{(q)}_{t,\ell}\bigr|
=\log\bigl|\gamma^{(0,q)}_{t,\ell}\bigr|
+\log\!\left|\,1+\varepsilon\,
\frac{\gamma^{(1,q)}_{t,\ell}}{\gamma^{(0,q)}_{t,\ell}}\,\right|
+O(\varepsilon^2).
\end{equation}
The multiplicative
bound~\eqref{eq:transport_first_order_bound} controls the
relative first-order correction by
$\bigl|\varepsilon\,\gamma^{(1,q)}_{t,\ell}/\gamma^{(0,q)}_{t,\ell}\bigr|
\le(M\varepsilon/c)\,\ell$, so in the perturbative regime
\begin{equation}
\label{eq:transport_log_correction_bound}
\log\!\left|\,1+\varepsilon\,
\frac{\gamma^{(1,q)}_{t,\ell}}{\gamma^{(0,q)}_{t,\ell}}\,\right|
=O(\log\ell).
\end{equation}
To extract the asymptotic rate
$\bar{\mu}_q$~\eqref{eq:mu_asymptotic_decay_rate}, we divide
\eqref{eq:transport_log_decomposition} by $\ell$ and let
$\ell\to\infty$.
By~\eqref{eq:transport_log_correction_bound}, the first-order
correction contributes $O(\log\ell/\ell)\to 0$ in this limit,
and the leading exponential decay rate $\bar{\mu}_q$ is
therefore governed by the zeroth-order gate product alone.

\section{Numerical validation of the mixture representation and large-width concentration}
\label{app:mixture_and_large_width_validation}
 
The continuous envelope representation for large-width
networks rests on two structural approximations.
The first is the mixture-of-exponentials
approximation~\eqref{eq:mixture_of_exponentials}, which
replaces each neuron's full transport factor by a single
dominant exponential with time scale~$\tau_q$.
The second is the population-limit
passage~\eqref{eq:envelope_integral_weighted} from the
finite sum $H^{-1}\sum_{q=1}^H\exp(-\ell/\tau_q)$ to the
continuous integral $\int\exp(-\ell/\tau)\,p_\infty(\tau)\,d\tau$,
which requires that the empirical time-scale distribution
concentrates around a stable population law as $H$ grows.
This appendix validates each approximation separately:
Section~\ref{app:mixture_exp_validation} tests the
per-neuron exponential approximation at fixed width,
and Section~\ref{app:width_validation} tests the
population-level concentration as width increases.
 
\subsection{Mixture-of-exponentials validation}
\label{app:mixture_exp_validation}

This subsection tests whether replacing each neuron's full
transport factor by a single dominant exponential
$\exp(-\ell/\tau_q)$ faithfully reproduces the shape of
the intensive envelope at fixed width~$H$.
The comparison is between the actual intensive envelope
(computed from the full per-neuron transport factors) and
the mixture envelope (computed from the fitted time scales~$\tau_q$ alone).

We evaluate DiagGate as a representative architecture (Appendix~\ref{app:gated_rnns}) on a synthetic delayed-regression task with $H=64$, $T=300$, $D=8$, and $N_{\mathrm{seq}}=256$ for 200~epochs, under three optimizers (SGD, Adam, RMSProp) \citep{ruder2016overview} and five random seeds.
For each trained network, the per-neuron transport factors
$|\mu^{(q)}_{t,\ell}|$ are computed from the first-order diagonal expansion at 35~lags spanning $1 \le \ell \le 140$.
For each neuron~$q$, the asymptotic decay rate $\bar{\mu}_q$ is extracted by linear regression of $\log|\mu^{(q)}_{t,\ell}|$ on $\ell$ at large lags ($\ell \ge 16$), and the time scale $\tau_q = 1/\bar{\mu}_q$ is recorded.
 
Two envelopes are compared: the \emph{actual} intensive envelope $f(\ell) = H^{-1}\sum_q |\mu^{(q)}_{t,\ell}|$ and the \emph{mixture} envelope $f_{\mathrm{mix}}(\ell) = H^{-1}\sum_q \exp(-\ell/\tau_q)$.
Following~\citet{livi2026learnability}, we report the Spearman rank correlation (testing whether the two envelopes rank lags identically) and the Pearson correlation on the $\log_{10}$-transformed envelopes (testing whether the decay shapes are linearly related in log-space).

\begin{table}[h]
\centering
\small
\begin{tabular}{l l c c c}
\toprule
Architecture & Optimizer & Spearman & Pearson (min) & $R^2_{\mathrm{exp}}$ (median) \\
\midrule
DiagGate  & SGD     & $1$ & $1$ & $1$ \\
DiagGate  & Adam    & $1$ & $0.99$ & $1$ \\
DiagGate  & RMSProp & $1$ & $1$ & $1$ \\
\bottomrule
\end{tabular}
\caption{Mixture-of-exponentials validation for DiagGate with 3~optimizers, 5~seeds.
Spearman and Pearson report the worst case across the 5~seeds per configuration.
Median per-neuron coefficient of determination $R^2$ of the exponential slope fit, measuring how well each neuron's transport decays as a single exponential.}
\label{tab:mixture_validation}
\end{table}

The Spearman rank correlation is 1 in all runs without exception: the mixture preserves the monotonic decay ordering under every optimizer tested over the sampled lag range ($1 \le \ell \le 140$).
This confirms that the per-neuron exponential approximation faithfully captures the decay structure of the envelope, which is the prerequisite for the population-limit passage tested in Section~\ref{app:width_validation}.
 
For architectures with simple gating (DiagGate), the
single-exponential-per-neuron model is essentially exact
($R^2 \approx 1$), and the mixture reproduces the actual envelope to
high fidelity (Pearson $\ge 0.999$) across all optimizers.

\subsection{Large-width population concentration}
\label{app:width_validation}
Given that the per-neuron exponential approximation is faithful (Section~\ref{app:mixture_exp_validation}), the remaining question is whether the intensive envelope $f(\ell) = H^{-1}\sum_{q=1}^H\exp(-\ell/\tau_q)$ can be reliably described by the continuous integral $\int\exp(-\ell/\tau)\,p_\infty(\tau)\,d\tau$ as $H$~grows.
This requires that the empirical time-scale distribution $H^{-1}\sum_q\delta_{\tau_q}$ concentrates around a stable population law~$p_\infty(\tau)$.
We test this by varying $H$ and measuring both the convergence of the time-scale distribution and the stability of the intensive envelope.
We use again DiagGate trained with fixed-learning-rate SGD ($\mathrm{lr}=10^{-3}$) at six widths\\
$H \in \{16, 32, 64, 128, 256, 512\}$, each with 5~random seeds ($T=300$, $D=8$, $N_{\mathrm{seq}}=128$, 200~epochs).
The $H=512$ runs serve as the reference population for both the distributional and envelope diagnostics reported below.
We use two diagnostics.
The first tracks the empirical time-scale distribution itself:
at each width, we compute the Wasserstein-1 distance~$W_1$ between the pooled-across-seeds $\log\tau$ histogram and the $H=512$ reference histogram, and record the within-seed median $\tau_{\mathrm{med}}$ together with its cross-seed standard deviation.
Both quantities are expected to shrink as~$H$ grows if the empirical spectrum is concentrating to a stable population law.
Table~\ref{tab:width_convergence} reports the results.
The informative reading is not the absolute magnitude of $\tau_{\mathrm{med}}$, which is architecture-dependent, with DiagGate's per-neuron gates admitting very slow retention modes ($\tau_{\mathrm{med}}\approx 200$). The important result is the rate at which $W_1$ and the cross-seed spread of $\tau_{\mathrm{med}}$ decay with~$H$.
For DiagGate, $W_1$ falls from $0.114$ at $H=16$ to $0.013$ at $H=256$, nearly an order of magnitude, while the cross-seed spread of $\tau_{\mathrm{med}}$ contracts from $\pm 9.0$ to $\pm 0.5$.
Fitting a power law to the cross-seed variability of the median gives $\mathrm{std}(\tau_{\mathrm{med}})\sim H^{-1.01}$, thus slightly deviating from the $H^{-1/2}$ law-of-large-numbers rate expected from independent neuron-wise draws from $p_\infty$.
\begin{table}[h]
\centering
\small
\begin{tabular}{r c c}
\toprule
$H$ & $W_1$ & $\tau_{\mathrm{med}} \pm \mathrm{std}$ \\
\midrule
16  & 0.114 & $184.5 \pm 9.0$ \\
32  & 0.104 & $201.8 \pm 7.4$ \\
64  & 0.059 & $197.1 \pm 3.2$ \\
128 & 0.037 & $200.0 \pm 2.4$ \\
256 & 0.013 & $198.9 \pm 0.5$ \\
512 & ---   & $199.1 \pm 0.4$ \\
\bottomrule
\end{tabular}
\caption{Concentration of the $\log\tau$ distribution to the
$H=512$ reference (DiagGate).
$W_1$: Wasserstein-1 distance between the pooled-across-seeds
empirical $\log\tau$ histogram at width~$H$ and the reference
histogram; ``---'' marks the reference width itself
(self-comparison).
$\tau_{\mathrm{med}}$: median $\tau$ within each seed, reported
as mean~$\pm$~standard deviation across the 5~seeds.
The informative signature of concentration is the rate at
which~$W_1$ and the cross-seed standard deviation shrink with $H$.}
\label{tab:width_convergence}
\end{table}

The second diagnostic tracks the intensive envelope directly.
For each width and each seed we compute $f(\ell)=H^{-1}\|\mu_{t,\ell}\|_1$ as a function of lag~$\ell$ and seed-average to obtain $f_H(\ell)$.
We then report three complementary numbers.
The Pearson correlation coefficient~$r$ between $\log_{10}f_H(\ell)$ and $\log_{10}f_{H=512}(\ell)$, with lag~$\ell$ treated as the sample axis, measures how closely the shape of the envelope as a function of lag at finite width matches the shape at the reference width.
Because Pearson correlation is affine-invariant in each of its arguments, $r$ is insensitive to a uniform vertical shift or rescaling of $\log f_H$ relative to $\log f_{H=512}$: it detects changes in relative lag-to-lag variation but not in overall level.
To also catch absolute level agreement, we report the log-sup and log-RMSE errors
\begin{align}
\Delta_\infty(H) &= \sup_\ell
\bigl|\log_{10} f_H(\ell) - \log_{10} f_{H=512}(\ell)\bigr|,
\\
\Delta_2(H) &=
\Bigl(\tfrac{1}{L}\sum_{\ell}
\bigl(\log_{10} f_H(\ell) - \log_{10} f_{H=512}(\ell)\bigr)^2
\Bigr)^{1/2},
\end{align}
evaluated over the subset of lags at which the reference envelope is resolved above $f_{H=512}(\ell)\ge 10^{-3}$.
As a threshold-free companion diagnostic, we also report the envelope-weighted log-RMSE
\begin{equation}
\Delta_2^{w}(H) =
\left(
\frac{\sum_\ell
f_{H=512}(\ell)\,
\bigl(\log_{10} f_H(\ell) - \log_{10} f_{H=512}(\ell)\bigr)^2}
{\sum_\ell f_{H=512}(\ell)}
\right)^{1/2},
\end{equation}
summed over all 35~lags, which uses the reference envelope itself as a weight so that the deep-tail regime contributes negligibly to the metric by construction rather than by an imposed cutoff.

Results for all three scale-sensitive metrics, together with~$r$, are collected in Table~\ref{tab:envelope_logerr}.
The shape agreement is near-perfect at every tested width ($r = 1.0$), consistent with the Laplace-transform smoothing that maps spectra to envelopes.
The absolute level agreement, captured by $\Delta_\infty$ and $\Delta_2$, is a strictly more demanding criterion: at $H=16$ the worst lag disagrees with the reference by about $0.014$ decades, and both quantities contract to the third decimal place by $H=256$ ($\Delta_\infty\approx 3\times10^{-4}$).
The envelope-weighted $\Delta_2^{w}$, which avoids the threshold by using the reference envelope itself to down-weight deep-tail lags, tells the same story at slightly lower overall magnitude: at $H=16$, $\Delta_2^{w}\approx 4\times10^{-3}$, contracting to sub-$10^{-3}$-decade agreement by $H=256$.
We note that the threshold-free metric lands at values comparable to the thresholded $\Delta_2$, indicating that the $10^{-3}$ cutoff is doing only what it is meant to do---excluding the numerically suppressed tail---rather than selecting lags to flatter the numbers.
The cross-seed variance of the envelope decays with width as $\mathrm{var}(f)\sim H^{-1.79}$, well beyond the $1/H$ rate expected from i.i.d. population-level concentration (Assumption~\ref{ass:mean_field}).
\begin{table}[h]
\centering
\small
\begin{tabular}{r c c c c}
\toprule
$H$ & $\Delta_\infty$ & $\Delta_2$ & $\Delta_2^{w}$ & $r$ \\
\midrule
16  & 0.0140 & 0.0052 & 0.0044 & 1 \\
32  & 0.0055 & 0.0021 & 0.0018 & 1 \\
64  & 0.0041 & 0.0015 & 0.0013 & 1 \\
128 & 0.0007 & 0.0003 & 0.0002 & 1 \\
256 & 0.0003 & 0.0001 & 0.0001 & 1 \\
512 & ---    & ---    & ---    & ---   \\
\bottomrule
\end{tabular}
\caption{Log-space envelope agreement with the $H=512$
reference (DiagGate).
$\Delta_\infty$ and $\Delta_2$ are the sup-norm and RMS of
$\log_{10}f_H(\ell)-\log_{10}f_{H=512}(\ell)$ over the
35~lags at which $f_{H=512}(\ell)\ge 10^{-3}$.
$\Delta_2^{w}$ is the envelope-weighted log-RMSE summed over
all 35~lags, with weight $f_{H=512}(\ell)$.
$r$ is the Pearson correlation between $\log_{10}f_H$ and
$\log_{10}f_{H=512}$ with lag as the sample axis.
All quantities are evaluated on the seed-averaged
envelope $f_H(\ell)$.
``---'' marks the reference width (self-comparison).}
\label{tab:envelope_logerr}
\end{table}

Two patterns emerge.
First, both diagnostics concentrate with width faster than
their respective law-of-large-numbers baselines:
$\mathrm{std}(\tau_{\mathrm{med}})$ decays faster than the
$H^{-1/2}$ rate expected for independent neurons, and
$\mathrm{var}(f)$ decays faster than the $1/H$ rate.
Trained networks across seeds and widths are therefore
more consistent than independent samples from a common
population $p_\infty$ would predict---a signature of mild
negative inter-neuron correlations induced by training---and,
for the purposes of the present validation, the exponents rule
out any anti-concentration trend at the widths tested.
Second, the envelope is systematically more stable than the
spectrum that generates it, but the magnitude of the
difference depends sensitively on which envelope diagnostic
one reads.
Pearson correlation saturates at $r=1.000$ already at
$H=16$, whereas the absolute-level diagnostics
$\Delta_\infty$ and $\Delta_2$ visibly contract with~$H$,
reaching sub-$10^{-3}$-decade agreement by $H=128$.
The gap between the two readings is the expected behavior.
Pearson correlation in log space is affine-invariant: it ignores any uniform vertical shift or rescaling of the envelope and so reports only how well its lag-to-lag shape matches the reference.
Because the Laplace map from $p_\infty(\tau)$ to $f(\ell)$ is a smoothing integral, this shape is reproduced almost as soon as the spectrum is approximately in place, whereas the overall level of $f(\ell)$ is fixed by the bulk mass of $p_\infty$ and is recovered only as that mass concentrates with width.
The two readings together therefore indicate that the envelope shape is correct at every tested width, while its level converges smoothly with $H$ and lags the saturation of the correlation.
Taken together, the two diagnostics jointly validate the passage from the finite mixture $H^{-1}\sum_q e^{-\ell/\tau_q}$ to the continuous integral $\int e^{-\ell/\tau}p_\infty(\tau)\,d\tau$ that enters the asymptotic analysis~\eqref{eq:envelope_integral}.
The empirical time-scale distribution concentrates around a stable population law~$p_\infty(\tau)$, and the intensive envelope converges to the integral representation in both shape and level as $H$ grows.

\section{Technical structure and modeling scope of the L{\'e}vy generator}
\label{app:levy_generator}

This appendix derives the infinitesimal
generator~\eqref{eq:generator_definition} from the L\'evy-driven
SDE~\eqref{eq:zeta_SDE_mixed}, gives the explicit structure of its
jump component, and discusses the modeling hypotheses under which this
description is natural.
Standard references for the L{\'e}vy process theory used below
include~\citep{applebaum2009levy,sato1999levy}; for the specific
class of tempered stable processes we rely
on~\citet{rosinski2007tempering,kuchler2013tempered}.

\subsection{From the L{\'e}vy-driven SDE to the generator}
\label{app:sde_to_generator}

The SDE~\eqref{eq:zeta_SDE_mixed} gives the sample-path dynamics of a
representative log-decay rate.
To analyze phases, however, we need the stationary density of the
population and, in particular, its far-left-tail behavior.
The generator is the object that makes this passage possible: it
describes the infinitesimal evolution of observables, and its adjoint
gives the forward equation for the density.

Throughout the generator and forward-equation derivations, we take
\(\varphi\in C_c^2(\mathbb R)\): that is, \(\varphi\) is twice
continuously differentiable and has compact support, so there exists
an \(R_0<\infty\) such that \(\varphi(\zeta)=0\) for
\(|\zeta|>R_0\). This compact-support assumption is imposed only on
the auxiliary test function used to identify the forward operator; it
is not an assumption on the density itself. Its role is technical: it
makes the integration-by-parts steps below free of boundary terms.
After the forward equation has been derived, the test function
disappears, and the stationary density~\(\rho_\infty\) may have the
non-compact exponentially decaying far-left tail analyzed in
Appendix~\ref{app:characteristic_derivation}. The only non-compact
probe used below is the separate exponential probe introduced to
identify the L\'evy--Khintchine symbol in
Appendix~\ref{app:jump_operator_symbol}.

For such a test function~$\varphi$, write
$\mathbb{E}_{\zeta}$ for expectation conditional on starting at
$\zeta(0)=\zeta$:
\begin{equation}
\label{eq:appendix_conditional_expectation}
\mathbb{E}_{\zeta}\!\left[\varphi(\zeta(t))\right]
=
\mathbb{E}\!\left[\varphi(\zeta(t))\,\middle|\,\zeta(0)=\zeta\right]
=
\int_{\mathbb{R}}\varphi(u)\,p_t(u\mid \zeta)\,du,
\end{equation}
where $p_t(u\mid \zeta)$ is the transition density.
The infinitesimal generator is the derivative at time zero of the
conditional expectation in~\eqref{eq:appendix_conditional_expectation}:
\begin{equation}
\label{eq:appendix_generator_semigroup_limit}
\mathcal{L}_{\omega}\varphi(\zeta)
=
\left.
\frac{d}{dt}
\mathbb{E}_{\zeta}\!\left[\varphi(\zeta(t))\right]
\right|_{t=0}.
\end{equation}
Under the standard construction of L\'evy-driven SDEs, solutions of
SDEs of the form~\eqref{eq:zeta_SDE_mixed} are Markov jump-diffusions,
so the operator in~\eqref{eq:appendix_generator_semigroup_limit}
governs the infinitesimal evolution of the law of~$\zeta(t)$
\citep{applebaum2009levy,schilling2016introduction}.

The remaining task is to identify this derivative for the concrete
SDE~\eqref{eq:zeta_SDE_mixed}.
The It\^o formula for jump processes is the bookkeeping rule that
does this for a transformed process~$\varphi(\zeta(t))$.
It separates the change in~$\varphi(\zeta(t))$ into a predictable
$dt$ contribution, a Brownian martingale contribution, and a
compensated-jump martingale contribution.
The predictable part is the systematic infinitesimal drift of the
observable; the martingale parts describe zero-mean fluctuations.
Therefore, after taking the conditional expectation
$\mathbb{E}_\zeta$ and differentiating at~$t=0$, only the predictable
$dt$ contribution remains, and that contribution is exactly the
generator applied to~$\varphi$.
For the SDE~\eqref{eq:zeta_SDE_mixed}, It\^o's formula~\citep{applebaum2009levy} gives
\begin{align}
\label{eq:appendix_jump_ito_formula}
d\varphi(\zeta(t))
&=
\Big[
F(\zeta(t-))\,\partial_\zeta\varphi(\zeta(t-))
+
\eta_G\,\partial_{\zeta\zeta}\varphi(\zeta(t-))
\nonumber\\
&\qquad\qquad
+
\mathcal{J}_{\alpha_{\mathrm{jump}},\lambda}[\varphi](\zeta(t-))
\Big]dt
+
\sqrt{2\eta_G}\,\partial_\zeta\varphi(\zeta(t-))\,dW_t
+
dM_t^\varphi .
\end{align}
Here $\zeta(t-)$ denotes the left limit before a possible jump at
time~$t$, $M_t^\varphi$ collects the compensated jump martingale
terms, and the predictable jump contribution is exactly the operator
\eqref{eq:jump_operator_explicit}.
Taking $\mathbb{E}_\zeta$, using that the Brownian and compensated
jump martingales have zero mean, and integrating from~$0$ to~$t$
gives
\begin{align}
\label{eq:appendix_jump_ito_expectation}
\mathbb{E}_{\zeta}\!\left[\varphi(\zeta(t))\right]-\varphi(\zeta)
&=
\mathbb{E}_{\zeta}
\int_0^t
\Big[
F(\zeta(s-))\,\partial_\zeta\varphi(\zeta(s-))
+
\eta_G\,\partial_{\zeta\zeta}\varphi(\zeta(s-))
\nonumber\\
&\qquad\qquad
+
\mathcal{J}_{\alpha_{\mathrm{jump}},\lambda}[\varphi](\zeta(s-))
\Big]\,ds .
\end{align}
Since $\zeta(s-)\to\zeta$ as $s\downarrow 0$ and the test function is
smooth, the integrand in~\eqref{eq:appendix_jump_ito_expectation}
converges to its value at the initial point.
Dividing~\eqref{eq:appendix_jump_ito_expectation} by~$t$ and taking
$t\downarrow 0$ therefore yields
\begin{align}
\label{eq:appendix_jump_ito_drift}
\left.
\frac{d}{dt}
\mathbb{E}_{\zeta}\!\left[\varphi(\zeta(t))\right]
\right|_{t=0}
&=
F(\zeta)\,\partial_\zeta\varphi(\zeta)
+
\eta_G\,\partial_{\zeta\zeta}\varphi(\zeta)
+
\mathcal{J}_{\alpha_{\mathrm{jump}},\lambda}[\varphi](\zeta).
\end{align}
Together with the definition of the generator in
\eqref{eq:appendix_generator_semigroup_limit}, this gives
\eqref{eq:generator_definition}.

\subsection{Jump operator, L{\'e}vy--Khintchine symbol, and tempering}
\label{app:jump_operator_symbol}

In the main text, the nonlocal term
$\mathcal{J}_{\alpha_{\mathrm{jump}},\lambda}[\varphi]$ in the
generator~\eqref{eq:generator_definition} was described as the
compensated L\'evy jump operator that accounts for the nonlocal
redistribution of $\zeta$ caused by jumps.
We now give its explicit form.

Let $\nu_{\alpha_{\mathrm{jump}},\lambda}(dy)$ denote the tempered
L{\'e}vy measure~\eqref{eq:tempered_levy_measure}.
The compensated jump operator acting on $\varphi$ is
\begin{equation}
\label{eq:jump_operator_explicit}
\mathcal{J}_{\alpha_{\mathrm{jump}},\lambda}[\varphi](\zeta)
=
\int_{\mathbb{R}\setminus\{0\}}
\left(
\varphi(\zeta+y)
-
\varphi(\zeta)
-
y\,\mathbf{1}_{|y|<1}\,\partial_\zeta \varphi(\zeta)
\right)
\nu_{\alpha_{\mathrm{jump}},\lambda}(dy).
\end{equation}
The integrand has a direct reading: the difference
$\varphi(\zeta+y)-\varphi(\zeta)$ measures the net effect of a jump
of size~$y$ on the test function, and the integral aggregates these
contributions over all possible jump sizes weighted by their frequency
given by the L\'evy measure
$\nu_{\alpha_{\mathrm{jump}},\lambda}(dy)$.
The subtraction $y\,\mathbf{1}_{|y|<1}\,\partial_\zeta\varphi(\zeta)$
is the standard L\'evy--Khintchine
compensation~\citep{applebaum2009levy,sato1999levy}: when
$\alpha_{\mathrm{jump}}\ge 1$, small jumps are so frequent that the
raw integral
$\int[\varphi(\zeta+y)-\varphi(\zeta)]\,
\nu_{\alpha_{\mathrm{jump}},\lambda}(dy)$ would diverge.
The compensation removes the first-order (linear) effect of small
jumps under the standard truncation convention, leaving a convergent
nonlocal contribution.

Because the L\'evy measure depends on the jump size~$y$ but not on
the current position~$\zeta$, the
operator~\eqref{eq:jump_operator_explicit} is translation-invariant.
This is the step that makes the nonlocal term usable in the tail
analysis: exponentials are diagonalized by translation-invariant
operators. Fix a real~$\xi$ in the domain where the integral below is
finite (for the tempered measure, this includes
$|\xi|\le \lambda$).  To compute the symbol, evaluate the operator on
the exponential probe
\begin{equation}
\label{eq:appendix_exponential_probe}
\varphi_\xi(\zeta)=e^{\xi\zeta},
\qquad
\varphi_\xi(\zeta+y)=e^{\xi y}\varphi_\xi(\zeta),
\qquad
\partial_\zeta\varphi_\xi(\zeta)=\xi\varphi_\xi(\zeta).
\end{equation}
The compact-support restriction used above is not needed for this
separate eigenfunction calculation: here the goal is not to derive the
forward equation by integration by parts, but to identify the
L\'evy--Khintchine symbol of the translation-invariant jump operator.
Every term in the integrand of
\eqref{eq:jump_operator_explicit} is therefore a scalar multiple of
the same exponential~$\varphi_\xi(\zeta)$.
Substituting~\eqref{eq:appendix_exponential_probe} into
\eqref{eq:jump_operator_explicit} gives
\begin{align}
\label{eq:levy_symbol_definition}
\mathcal{J}_{\alpha_{\mathrm{jump}},\lambda}[\varphi_\xi](\zeta)
&=
\int_{\mathbb{R}\setminus\{0\}}
\left(
e^{\xi(\zeta+y)}
-
e^{\xi\zeta}
-
y\,\mathbf 1_{|y|<1}\xi e^{\xi\zeta}
\right)
\nu_{\alpha_{\mathrm{jump}},\lambda}(dy)
\nonumber\\
&=
e^{\xi\zeta}
\underbrace{
\int_{\mathbb{R}\setminus\{0\}}
\left(
e^{\xi y}
-
1
-
\xi y\,\mathbf{1}_{|y|<1}
\right)
\nu_{\alpha_{\mathrm{jump}},\lambda}(dy)
}_{\displaystyle \Psi(\xi;\omega)} .
\end{align}
The scalar factor $\Psi(\xi;\omega)$ is the real-exponential
L\'evy--Khintchine symbol~\citep{applebaum2009levy,schilling2016introduction}.
Thus
\begin{equation}
\label{eq:appendix_jump_exponential_eigenfunction}
\mathcal{J}_{\alpha_{\mathrm{jump}},\lambda}[e^{\xi\,\cdot\,}](\zeta)
=
\Psi(\xi;\omega)\,e^{\xi\zeta}.
\end{equation}
This property is what makes the generator formalism analytically
powerful: on exponential probes, the nonlocal jump term becomes
multiplication by~$\Psi(\xi;\omega)$ instead of an integral operator.
In Appendix~\ref{app:characteristic_derivation}, the same
diagonalization principle is applied to the adjoint forward equation:
once the far-left-tail stationary density is written in exponential
form, the drift, diffusion, and jump terms all become scalar multiples
of that exponential.
The stationary balance then reduces to the algebraic characteristic equation analyzed in Appendix~\ref{app:nonlocal_balance}, where uniqueness of the spectral exponent is established and the explicit jump-amplitude threshold is derived.

\subsection{Modeling scope and justification}
\label{app:modeling_scope}

\paragraph{Empirical motivation for heavy-tailed forcing.}
Recent empirical work provides converging evidence that heavy-tailed statistics are
pervasive in trained deep networks. On the optimization side, minibatch gradient
noise exhibits both a diffusive background and intermittent heavy-tailed
excursions~\citep{simsekli2019tail,gurbuzbalaban2021heavy}. On the parameter side,
the empirical spectral densities of trained weight matrices develop heavy-tailed,
approximately power-law bulks, a signature of implicit self-regularization
characterized through random matrix theory~\citep{martin2021implicit}.
This evidence motivates a generator that accommodates both regimes within a single framework.
Heavy-tailed gradient noise and heavy-tailed weight-matrix spectra are the
best-documented contributors in the deep-learning literature, but the generator
does not single out any one of them.
At the mesoscopic scale, $\alpha_{\mathrm{jump}}$ summarizes the aggregated
heavy-tailed fingerprint of the effective forcing on $\zeta$, irrespective of
whether its dominant microscopic source is gradient fluctuations, heavy-tailed
weight spectra, data or pre-activation noise, or some combination of these.

\paragraph{Structural modeling closure.}
The L\'evy-driven generator should be read as a coarse-grained Markov closure for many small and intermittent increments in~$\zeta$ whose aggregate law is stable-like at the mesoscopic scale. It is not a microscopic description of learning dynamics.
Rather, it is the minimal stochastic model that lets us represent diffusion, heavy-tailed jump forcing, and finite-size tempering in one analytically tractable and justifiable model.

\paragraph{Tempering regime.}
The measure~\eqref{eq:tempered_levy_measure} is that of a symmetric
tempered $\alpha$-stable L\'evy process: the small-jump density
follows the symmetric $\alpha$-stable law
$|y|^{-1-\alpha_{\mathrm{jump}}}\,dy$ on scales
$|y|\ll 1/\lambda$, and the exponential factor $e^{-\lambda|y|}$
suppresses jumps on scales $|y|\gg 1/\lambda$.
This implements, at the level of the generator, the fact that
arbitrarily large jumps in log-decay rates cannot occur in any
finite-size system with bounded activations, finite hidden
dimension, and finite training horizon.
The tempering parameter $\lambda$ encodes the scale at which that finite-size cutoff sets in.
The process retains infinite activity---almost every sample path contains infinitely many (mostly small) jumps in any finite time interval---while moderate-scale increments remain heavy-tailed and arbitrarily large excursions are exponentially suppressed, so tempered stable processes interpolate between $\alpha$-stable behavior at short scales and Gaussian behavior at long scales~\citep{rosinski2007tempering}.

Analytically, tempering makes the compensated L\'evy--Khintchine symbol finite up to the boundary $\beta=\lambda$.
This finite endpoint value underlies the root classification and explicit jump-amplitude threshold derived in Appendix~\ref{app:existence_uniqueness}.

\paragraph{Applications of tempered stable processes.}
The same class of tempered stable generators has been deployed as a modeling tool in several unrelated domains where heavy-tailed fluctuations coexist with finite-size cutoffs.
In mathematical finance, the CGMY model of \citet{carr2002fine} uses tempered stable L\'evy processes to capture the jump structure of asset log-returns and resolves volatility-smile anomalies that Gaussian models cannot; this construction is now standard in the broader L\'evy-process literature on financial modeling~\citep{cont2004financial}.
In hydrology, \citet{meerschaert2008tempered} introduced tempered anomalous diffusion to describe non-Fickian tracer transport in heterogeneous aquifers, where exponential tempering captures the natural cutoff of retention times in the porous medium.
In statistical physics, \citet{stanislavsky2008diffusion} derived anomalous-diffusion and non-exponential-relaxation laws from inverse tempered $\alpha$-stable subordinators, showing that tempering produces relaxation behavior intermediate between Cole--Cole subdiffusion and classical exponential decay.
Across these domains, the tempered stable generator plays the same structural role it plays here: a principled interpolation between a heavy-tailed short-scale regime and a regularized long-scale regime, with the tempering parameter encoding the scale at which finite-size effects restore tameness.

\subsection{Near-equilibrium drift linearization}
\label{app:drift_structure}

In Section~\ref{sec:stochastic_modeling}, we used a near-equilibrium
linearization~\eqref{eq:drift_linearization} of the effective drift~$F(\zeta)$ around the bulk
stable zero~$\zeta^*$. This appendix gives its full derivation. The complementary far-left-tail
closure~\eqref{eq:drift_saturation} is validated empirically in Section~\ref{sec:exp_access_route}.

Assuming $F$ is smooth in a neighborhood of~$\zeta^*$, Taylor-expanding
the effective drift around the bulk equilibrium yields
\begin{equation}
\label{eq:bulk_drift_linearization}
F(\zeta)
\;=\;
F(\zeta^*) + F'(\zeta^*)\,(\zeta-\zeta^*) + O\!\big((\zeta-\zeta^*)^2\big)
\;\approx\;
-\gamma\,(\zeta-\zeta^*),
\qquad
\gamma = -F'(\zeta^*)>0,
\end{equation}
where the zeroth-order term vanishes because $F(\zeta^*)=0$ by definition of~$\zeta^*$.
The minus sign in~$-\gamma\,(\zeta-\zeta^*)$ is chosen so that $\gamma>0$ plays the role of a local restoring rate: since $\zeta^*$ is a stable zero, $F'(\zeta^*)<0$, and defining $\gamma=-F'(\zeta^*)$ yields the standard mean-reverting form.
Substituting~\eqref{eq:bulk_drift_linearization} into the drift--diffusion--jump SDE~\eqref{eq:zeta_SDE_mixed} shows that, in a neighborhood of~$\zeta^*$, the dynamics reduce to a locally mean-reverting process whose Gaussian component corresponds to an OU approximation with equilibrium level~$\zeta^*$ and rate~$\gamma$.

This type of local linearization around a stable equilibrium is a standard approximation in stochastic descriptions of learning dynamics.
In particular, \citet{mandt2017sgd} show that, under a quadratic approximation of the loss near an optimum, constant-step SGD can be approximated by a multivariate OU process, yielding a Gaussian stationary distribution.
More broadly, statistical-physics descriptions of high-dimensional learning dynamics often reduce the evolution to effective stochastic equations for low-dimensional collective variables or order parameters.
For example, \citet{Bonnaire_2024} formulate an effective one-dimensional stochastic dynamics in potential form for a collective variable in Tensor-PCA, while \citet{gerbelot2024rigorous} rigorously derive closed effective Gaussian stochastic dynamics for a finite set of order parameters in high-dimensional stochastic optimization.

These results provide supporting evidence that, in regimes where the dynamics are confined near a stable equilibrium, a local linear approximation of the drift leading to an OU description is well justified.
When jump activity is absent, or when a positive jump amplitude lies strictly below a nonzero threshold ($0<\eta_J<\eta_J^*$), this linear-drift regime controls the bulk stationary behavior.
The pure-diffusion representative $\eta_J=0$ then gives the Gaussian OU closure used for the explicit collapsed-regime envelope calculation (Section~\ref{sec:collapsed_regime};
Appendix~\ref{app:exponential_envelope_full_details}); with strictly sub-threshold jumps present, the stationary density need not be Gaussian.

\subsection{A Gaussian-confining null and its falsifiability}
\label{app:gaussian_null}

The far-left-tail closure~\eqref{eq:drift_saturation} and the
jump component of the generator~\eqref{eq:generator_definition}
both enter the anti-collapsed analysis of
Section~\ref{sec:anticollapsed_regime}, and the primary
mechanism proposed in this paper is heavy-tailed jump forcing.
It is therefore useful to isolate a restrictive null model that
combines Gaussian-only forcing with a drift class that is
strictly stronger than~\eqref{eq:drift_saturation}, and
to ask whether that null can generate anti-collapse on its own.
The content of this subsection is a short no-go result showing that it cannot: Gaussian-only forcing with genuinely confining drift cannot produce a broad time-scale spectrum.

Consider the specialization of the process~\eqref{eq:zeta_SDE_mixed} obtained by switching off the jump component, $\eta_J=0$, and replacing
the structural far-left-tail closure~\eqref{eq:drift_saturation} with a confining drift of gradient form,
\begin{equation}
\label{eq:confining_drift_null}
F(\zeta)=-V'(\zeta),
\qquad
\frac{V(\zeta)}{|\zeta|}\longrightarrow\infty
\quad\text{as }|\zeta|\to\infty.
\end{equation}
This is the standard OU family ($V\propto\zeta^2$) together with its super-linear generalizations~\citep{mandt2017sgd,li2017stochastic,yaida2019fluctuation}.
In particular, it excludes the asymptotically constant drift~\eqref{eq:drift_saturation}, for which $V(\zeta)\sim-\kappa\zeta$ grows only linearly.
Under~\eqref{eq:confining_drift_null} with $\eta_J=0$, the stationary forward equation \eqref{eq:appendix_stationary_forward_equation} reduces to the reversible Fokker--Planck equation
\begin{equation*}
\label{eq:gaussian_null_fokker_planck}
0 =
-\partial_\zeta\!\bigl(F(\zeta)\,\rho_\infty\bigr)
+\eta_G\,\partial_{\zeta\zeta}\rho_\infty,
\end{equation*}
whose stationary solution is the explicit Gibbs form
\begin{equation}
\label{eq:gaussian_null_gibbs}
\rho_\infty(\zeta)
=
Z^{-1}\exp\!\left(-\frac{V(\zeta)}{\eta_G}\right),
\qquad
Z=\int_\mathbb{R}\exp\!\left(-\frac{V(\zeta')}{\eta_G}\right)d\zeta'<\infty,
\end{equation}
where finiteness of~$Z$ is a consequence of the super-linear growth of~$V$.
This reversibility is special to the Gaussian-confining null.
The full jump-driven model used for anti-collapse is stationary through
nonlocal drift--jump balance and is not assumed to satisfy detailed balance.

\begin{proposition}[No anti-collapse under the Gaussian-confining null]
\label{prop:gaussian_null}
Assume a drift of the form~\eqref{eq:confining_drift_null} and $\eta_J=0$. Then, the stationary log-rate density $\rho_\infty$~\eqref{eq:gaussian_null_gibbs} decays faster than any exponential in the far-left tail. That is, for every $\beta>0$,
\begin{equation}
\label{eq:gaussian_null_superexp}
\lim_{\zeta\to-\infty}
\rho_\infty(\zeta)\,e^{-\beta\zeta}=0.
\end{equation}
The induced time-scale density $p_\infty(\tau)$ is therefore not regularly varying at infinity, and the envelope $f(\ell)$ is not power-law in~$\ell$.
\end{proposition}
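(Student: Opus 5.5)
The proof works directly with the explicit Gibbs form~\eqref{eq:gaussian_null_gibbs}. The superexponential decay~\eqref{eq:gaussian_null_superexp} is then a one-line consequence of the growth condition in~\eqref{eq:confining_drift_null}, and the remaining claims follow by pushing this decay through the change of variables~\eqref{eq:tau_def} and the Laplace representation~\eqref{eq:envelope_integral}.

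\emph{Step one.} Fix $\beta>0$ and write
\[
\rho_\infty(\zeta)\,e^{-\beta\zeta}=Z^{-1}\exp\!\Bigl(-|\zeta|\Bigl[\tfrac{V(\zeta)}{\eta_G|\zeta|}-\beta\Bigr]\Bigr)
\]
for $\zeta<0$, using $-\beta\zeta=\beta|\zeta|$. By~\eqref{eq:confining_drift_null}, the bracket tends to $+\infty$ as $\zeta\to-\infty$. It therefore eventually exceeds $1$, so the expression is bounded by $Z^{-1}e^{-|\zeta|}\to0$. This gives~\eqref{eq:gaussian_null_superexp}. Finiteness of $Z$ is already granted in the text, by the same growth argument applied on both tails.

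\emph{Step two.} I would rule out regular variation of $p_\infty$. Since $p_\infty(\tau)=\rho_\infty(-\log\tau)/\tau$, step one gives $\tau^{1+\beta}p_\infty(\tau)=\tau^{\beta}\rho_\infty(-\log\tau)\to0$ for every $\beta>0$. So $p_\infty(\tau)=o(\tau^{-1-\beta})$ for all $\beta$, i.e.\ $p_\infty$ decays faster than every power. If $p_\infty$ were regularly varying with index $-\rho$, integrability forces $\rho\ge1$. Potter's bounds then give $p_\infty(\tau)\ge\tau^{-\rho-\varepsilon}$ eventually, which contradicts the rapid decay for $\beta>\rho-1+\varepsilon$. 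Strictly, the limit-zero ratio only excludes the regularly varying forms at the level of the density. This is the sense used in the statement and in case~(i) of Proposition~\ref{prop:tauberian}.

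\emph{Step three.} I would show the envelope is not a power law. Here I use~\eqref{eq:envelope_Laplace_u}, where $p_{\bar\mu}(\bar\mu)=\bar\mu^{-2}p_\infty(1/\bar\mu)=o(\bar\mu^{\beta-1})$ as $\bar\mu\to0$ for every $\beta>0$. Split the Laplace integral at a small $\delta$. The part $\bar\mu>\delta$ is $O(e^{-\ell\delta})$. On $(0,\delta)$, bound $p_{\bar\mu}\le\epsilon\bar\mu^{\beta-1}$ to get at most $\epsilon\Gamma(\beta)\ell^{-\beta}$. Hence $\ell^{\beta}f_0(\ell)\to0$ for every $\beta>0$, so $f_0$ decays faster than any power and cannot satisfy $f_0\sim c\,\ell^{-\beta}$ with $c>0$. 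The sandwich~\eqref{eq:sandwich} transfers the conclusion to $f$.

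I expect the main obstacle to be step two, specifically making ``not regularly varying'' precise. A regularly varying function may carry a slowly varying factor, so the comparison must go through Potter-type bounds rather than pure powers. Step three avoids this issue entirely, since it only needs the $o(\bar\mu^{\beta-1})$ estimate. I would therefore present step three as the operational content and treat step two as a remark.
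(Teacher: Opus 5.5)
Your proof is correct. Your step one is the paper's proof. The paper fixes $\beta>0$, uses $V(\zeta)/|\zeta|\to\infty$ to get $V(\zeta)\ge(\beta\eta_G+1)|\zeta|$ for $\zeta\le\zeta_0$, and bounds $\rho_\infty(\zeta)e^{-\beta\zeta}$ by $Z^{-1}e^{-|\zeta|/\eta_G}$. Making your bracket exceed $1$ is the same estimate with a different constant.

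The two routes diverge on the consequences, which the paper settles in one sentence: the change of variables gives super-polynomial decay of $p_\infty$, ``which lies outside the regularly varying class required by Proposition~\ref{prop:tauberian} for a power-law envelope.'' That sentence leaves two points implicit.
\begin{itemize}
\item Why super-polynomial decay excludes regular variation of every finite index. Your lower bound $L(\tau)\ge\tau^{-\varepsilon}$ on the slowly varying factor supplies this.
\item Why a power-law envelope would force a regularly varying small-rate density. This is the converse (Tauberian) direction of Karamata's theorem. As stated in Proposition~\ref{prop:tauberian}, it carries an eventual-monotonicity hypothesis on $p_{\bar{\mu}}$ that is not checked for a general confining $V$. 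It can be repaired with the cumulative form of the theorem.
\end{itemize}

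Your step three needs none of this. The Abelian estimate $p_{\bar{\mu}}(\bar{\mu})=o(\bar{\mu}^{\beta-1})$ for every $\beta>0$ directly gives $\ell^{\beta}f_0(\ell)\to0$, and the upper half of~\eqref{eq:sandwich} transfers this to $f$. It proves the stronger statement that the envelope decays faster than every power, so it is not even regularly varying, and it uses only integrability. Your choice to present step three as the operational content and step two as a remark is therefore sound. The paper's version is shorter, but it routes a direction that needs only an elementary bound through the Tauberian machinery.
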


\begin{proof}
Fix $\beta>0$. The confining assumption~\eqref{eq:confining_drift_null} states that
$V(\zeta)/|\zeta|\to\infty$ as $\zeta\to-\infty$. Applying the definition of this
divergence with threshold $M=\beta\eta_G+1$, there exists $\zeta_0<0$ such that
\begin{equation}
\frac{V(\zeta)}{|\zeta|}\;\ge\;\beta\eta_G+1,
\qquad\text{i.e.}\qquad
V(\zeta)\;\ge\;(\beta\eta_G+1)\,|\zeta|,
\qquad\text{for all }\zeta\le\zeta_0 .
\end{equation}
Dividing by $\eta_G>0$ and splitting the constant,
\begin{equation}
\frac{V(\zeta)}{\eta_G}
\;\ge\;
\frac{(\beta\eta_G+1)\,|\zeta|}{\eta_G}
\;=\;
\left(\beta+\frac{1}{\eta_G}\right)|\zeta|
\;=\;
\beta|\zeta|+\frac{|\zeta|}{\eta_G},
\qquad \zeta\le\zeta_0 .
\end{equation}
For $\zeta<0$ we have $\zeta=-|\zeta|$, hence $-\beta\zeta=\beta|\zeta|$.
Substituting the Gibbs form~\eqref{eq:gaussian_null_gibbs} and using the bound above, for $\zeta\le\zeta_0,\eta_G>0$ we have:
\begin{equation}
\begin{aligned}
\rho_\infty(\zeta)\,e^{-\beta\zeta}
&=
Z^{-1}\exp\!\left(-\frac{V(\zeta)}{\eta_G}\right)e^{\beta|\zeta|}
=
Z^{-1}\exp\!\left(-\frac{V(\zeta)}{\eta_G}+\beta|\zeta|\right)\\[2pt]
&\le
Z^{-1}\exp\!\left(-\left(\beta|\zeta|+\frac{|\zeta|}{\eta_G}\right)+\beta|\zeta|\right)
=
Z^{-1}\exp\!\left(-\frac{|\zeta|}{\eta_G}\right)
\;\xrightarrow[\;\zeta\to-\infty\;]{}\;0.
\end{aligned}
\end{equation}
Since this holds for every $\beta>0$, $\rho_\infty$ decays super-exponentially in the far-left tail.
The change of variable $\tau=e^{-\zeta}$ transfers the far-left super-exponential decay into super-polynomial decay of $p_\infty(\tau)$ at large~$\tau$, which lies outside the regularly varying class required by Proposition~\ref{prop:tauberian} for a power-law envelope.
\end{proof}

\section{Nonlocal tail balance and characteristic equation}
\label{app:nonlocal_balance}

This appendix derives the characteristic equation $\Phi(\beta;\omega)=0$ that determines the spectral exponent~$\beta$ governing the anti-collapsed regime and determines the jump-amplitude threshold $\eta_J^*$.

\subsection{Far-left-tail reduction and the characteristic equation}
\label{app:characteristic_derivation}

Throughout this subsection, we assume that $\rho(\cdot,t)$ and $\rho_\infty$ are twice differentiable in~$\zeta$, integrable, and decay sufficiently fast at infinity for the integration-by-parts boundary terms to vanish; the compensated jump integrals below are also assumed finite.

\paragraph{From the generator to the forward equation.}
For a time-homogeneous Markov process with generator $\mathcal L_\omega$, the
population density $\rho(\zeta,t)$ obeys the Kolmogorov forward (Fokker--Planck)
equation $\partial_t\rho=\mathcal L_\omega^{*}\rho$, where $\mathcal L_\omega^{*}$
is the formal adjoint of $\mathcal L_\omega$ \eqref{eq:generator_definition} defined by the duality
\begin{equation}
\label{eq:forward_adjoint_duality}
\int_{\mathbb R}\left(\mathcal L_\omega\varphi(\zeta)\right)\rho(\zeta,t)\ d\zeta
=
\int_{\mathbb R}\varphi(\zeta)\left(\mathcal L_\omega^{*}\rho(\zeta,t)\right)\,d\zeta
\end{equation}
for smooth compactly supported test functions $\varphi$.
This is the standard generator--adjoint route to the forward equation for L\'evy-driven Markov processes~\citep{applebaum2009levy,sato1999levy,schilling2016introduction}.

Below, we derive the adjoint $\mathcal L_\omega^{*}$ explicitly.
Because it is the sum of a drift, a diffusion, and a jump term, the adjoint splits linearly,
\begin{align}
\label{eq:appendix_generator_integral_split}
\int_{\mathbb R}
\mathcal L_\omega\varphi(\zeta)\rho(\zeta,t)\,d\zeta
&=
\underbrace{\int_{\mathbb R}
F(\zeta)\,\partial_\zeta\varphi(\zeta)\rho(\zeta,t)\,d\zeta}_{\mathrm{drift}}
\nonumber\\
&\quad+
\underbrace{\int_{\mathbb R}\eta_G\,\partial_{\zeta\zeta}\varphi(\zeta)\rho(\zeta,t)\,d\zeta}_{\mathrm{diffusion}}
\nonumber\\
&\quad+
\underbrace{\int_{\mathbb R} \mathcal J_{\alpha_{\mathrm{jump}},\lambda}[\varphi](\zeta) \rho(\zeta,t)\,d\zeta}_{\mathrm{jump}}.
\end{align}
We compute the adjoint of each of these three pieces separately and then add the results.

In the following, given two functions $\varphi$ and~$g$, the notation
$[\varphi g]_{-\infty}^{+\infty}$ means
\begin{equation}
\label{eq:appendix_boundary_limits}
\lim_{R\to\infty}\varphi(R)g(R,t)
-
\lim_{R\to\infty}\varphi(-R)g(-R,t).
\end{equation}
The elementary integration-by-parts identity used below is
\begin{equation}
\label{eq:appendix_integration_by_parts_rule}
\int_{\mathbb R}
\partial_\zeta\varphi(\zeta)\,g(\zeta,t)\,d\zeta
=
\Big[\varphi(\zeta)g(\zeta,t)\Big]_{-\infty}^{+\infty}
-
\int_{\mathbb R}
\varphi(\zeta)\,\partial_\zeta g(\zeta,t)\,d\zeta .
\end{equation}
Thus an integration by parts moves one derivative from the test
function~\(\varphi\) onto the density-weighted factor~\(g\), at the
cost of a boundary term. Compact support of~\(\varphi\) will make the
boundary terms vanish in the identities below.

\begin{itemize}
\item 
For the drift term, set $g(\zeta,t)=F(\zeta)\rho(\zeta,t)$.
Applying~\eqref{eq:appendix_integration_by_parts_rule} once gives
\begin{equation}
\label{eq:appendix_drift_integration_by_parts}
\int_{\mathbb R}
F(\zeta)\,\partial_\zeta\varphi(\zeta)\rho(\zeta,t)\,d\zeta
=
\int_{\mathbb R}
\partial_\zeta\varphi(\zeta)\,g(\zeta,t)\,d\zeta
=
\Big[\varphi(\zeta)g(\zeta,t)\Big]_{-\infty}^{+\infty}
-
\int_{\mathbb R}
\varphi(\zeta)\,\partial_\zeta g(\zeta,t)\,d\zeta .
\end{equation}
Because~$\varphi$ is compactly supported, there is an
$R_0<\infty$ such that $\varphi(\zeta)=0$ whenever
$|\zeta|>R_0$. Hence both limits in
$[\varphi g]_{-\infty}^{+\infty}$ are zero, so the boundary term
vanishes. Substituting back $g=F\rho$ gives
\begin{equation}
\label{eq:appendix_drift_adjoint_identity}
\int_{\mathbb R}
F(\zeta)\,\partial_\zeta\varphi(\zeta)\rho(\zeta,t)\,d\zeta
=
-\int_{\mathbb R}
\varphi(\zeta)\,\partial_\zeta\!\big(F(\zeta)\rho(\zeta,t)\big)\,d\zeta,
\end{equation}
which is the drift contribution to the forward operator.

\item
For the diffusion term, we apply
\eqref{eq:appendix_integration_by_parts_rule} twice: first to move one
derivative from~\(\partial_{\zeta\zeta}\varphi\) onto~\(\rho\), and
then again to move the remaining derivative from~\(\partial_\zeta
\varphi\) onto~\(\rho\). This gives
\begin{align}
\label{eq:appendix_diffusion_integration_by_parts}
\int_{\mathbb R}
\eta_G\,\partial_{\zeta\zeta}\varphi(\zeta)\rho(\zeta,t)\,d\zeta
&=
\eta_G
\int_{\mathbb R}
\partial_\zeta\!\big(\partial_\zeta\varphi(\zeta)\big)
\rho(\zeta,t)\,d\zeta
\nonumber\\
&=
\eta_G\Big[\partial_\zeta\varphi(\zeta)\rho(\zeta,t)\Big]_{-\infty}^{+\infty}
-
\int_{\mathbb R}
\eta_G\,\partial_\zeta\varphi(\zeta)\partial_\zeta\rho(\zeta,t)\,d\zeta
\nonumber\\
&=
\eta_G\Big[\partial_\zeta\varphi(\zeta)\rho(\zeta,t)\Big]_{-\infty}^{+\infty}
-
\eta_G
\left(
\Big[\varphi(\zeta)\partial_\zeta\rho(\zeta,t)\Big]_{-\infty}^{+\infty}
-
\int_{\mathbb R}
\varphi(\zeta)\partial_{\zeta\zeta}\rho(\zeta,t)\,d\zeta
\right)
\nonumber\\
&=
\eta_G\Big[\partial_\zeta\varphi(\zeta)\rho(\zeta,t)\Big]_{-\infty}^{+\infty}
-
\eta_G\Big[\varphi(\zeta)\partial_\zeta\rho(\zeta,t)\Big]_{-\infty}^{+\infty}
+
\int_{\mathbb R}
\varphi(\zeta)\,\eta_G\,\partial_{\zeta\zeta}\rho(\zeta,t)\,d\zeta .
\end{align}
Both boundary terms vanish because $\varphi$ and
$\partial_\zeta\varphi$ are compactly supported. Therefore
\begin{equation}
\label{eq:appendix_diffusion_adjoint_identity}
\int_{\mathbb R}
\eta_G\,\partial_{\zeta\zeta}\varphi(\zeta)\rho(\zeta,t)\,d\zeta
=
\int_{\mathbb R}
\varphi(\zeta)\,\eta_G\,\partial_{\zeta\zeta}\rho(\zeta,t)\,d\zeta.
\end{equation}

\item 
For the jump term, the analogue of integration by parts is to move the
shift and the compensator from the test function to the density.
We begin by expanding the backward jump operator
\eqref{eq:jump_operator_explicit} inside the density-weighted integral:
\begin{equation}
\begin{aligned}
\label{eq:appendix_jump_adjoint_expansion}
\int_{\mathbb R}
\mathcal{J}_{\alpha_{\mathrm{jump}},\lambda}[\varphi](\zeta)
\rho(\zeta,t)\,d\zeta
&=
\int_{\mathbb R\setminus\{0\}}
\Bigg[
\int_{\mathbb R}
\varphi(\zeta+y)\rho(\zeta,t)\,d\zeta
-
\int_{\mathbb R}
\varphi(\zeta)\rho(\zeta,t)\,d\zeta \\
&\qquad\qquad
-
y\,\mathbf 1_{|y|<1}
\int_{\mathbb R}
\partial_\zeta\varphi(\zeta)\rho(\zeta,t)\,d\zeta
\Bigg]
\nu_{\alpha_{\mathrm{jump}},\lambda}(dy).
\end{aligned}
\end{equation}
The first inner integral is shifted by the change of variables
$x=\zeta+y$; this converts a destination value of the test function
into a source value of the density. The compensator term is moved from
$\partial_\zeta\varphi$ to~$\rho$ by integration by parts:
\begin{equation}
\label{eq:appendix_jump_shift_id}
\int_{\mathbb R}\varphi(\zeta+y)\rho(\zeta,t)\,d\zeta
=
\int_{\mathbb R}\varphi(x)\rho(x-y,t)\,dx,
\end{equation}
\begin{equation}
\label{eq:appendix_jump_compensator_id}
\int_{\mathbb R}\partial_\zeta\varphi(\zeta)\rho(\zeta,t)\,d\zeta
=
-\int_{\mathbb R}\varphi(\zeta)\partial_\zeta\rho(\zeta,t)\,d\zeta .
\end{equation}
Substituting~\eqref{eq:appendix_jump_shift_id} and~\eqref{eq:appendix_jump_compensator_id} into~\eqref{eq:appendix_jump_adjoint_expansion}, renaming the dummy integration variable back to~$\zeta$, and collecting the terms multiplying~$\varphi(\zeta)$ gives the first equality below.
The second equality only changes the order of the $\zeta$- and $y$-integrals:
\begin{equation}
\begin{aligned}
\label{eq:appendix_jump_collect_terms}
\int_{\mathbb R}
\mathcal{J}_{\alpha_{\mathrm{jump}},\lambda}[\varphi](\zeta)
\rho(\zeta,t)\,d\zeta
&=
\int_{\mathbb R\setminus\{0\}}
\int_{\mathbb R}
\varphi(\zeta)
\Big(
\rho(\zeta-y,t)
-
\rho(\zeta,t)
\nonumber\\
&\qquad\qquad\qquad\qquad
+
y\,\partial_\zeta\rho(\zeta,t)\,\mathbf 1_{|y|<1}
\Big)
d\zeta\,
\nu_{\alpha_{\mathrm{jump}},\lambda}(dy)
\nonumber\\
&=
\int_{\mathbb R}
\varphi(\zeta)
\underbrace{\left[
\int_{\mathbb R\setminus\{0\}}
\Big(
\rho(\zeta-y,t)
-
\rho(\zeta,t)
+
y\,\partial_\zeta\rho(\zeta,t)\,\mathbf 1_{|y|<1}
\Big)
\nu_{\alpha_{\mathrm{jump}},\lambda}(dy)
\right]}_{\mathcal I^*_{\alpha_{\mathrm{jump}},\lambda}[\rho](\zeta,t)}
d\zeta .
\end{aligned}
\end{equation}
The bracketed term is the adjoint jump contribution. Thus,
\begin{equation}
\label{eq:appendix_jump_adjoint_identity}
\int_{\mathbb R}
\mathcal{J}_{\alpha_{\mathrm{jump}},\lambda}[\varphi](\zeta)
\rho(\zeta,t)\,d\zeta
=
\int_{\mathbb R}
\varphi(\zeta)\,
\mathcal I^*_{\alpha_{\mathrm{jump}},\lambda}[\rho](\zeta,t)\,d\zeta,
\end{equation}
where
\begin{equation}
\label{eq:appendix_adjoint_jump_operator}
\mathcal I^*_{\alpha_{\mathrm{jump}},\lambda}[\rho](\zeta,t)
=
\int_{\mathbb R\setminus\{0\}}
\Big(
\rho(\zeta-y,t)
-
\rho(\zeta,t)
+
y\,\partial_\zeta\rho(\zeta,t)\,\mathbf 1_{|y|<1}
\Big)
\nu_{\alpha_{\mathrm{jump}},\lambda}(dy)
\end{equation}
is the forward jump operator.
\end{itemize}

The backward operator~\eqref{eq:jump_operator_explicit} shifts the test
function forward by~$y$, while the forward
operator~\eqref{eq:appendix_adjoint_jump_operator} shifts the density
backward by~$y$. The opposite shift
directions follow from a simple probabilistic asymmetry between
the two pictures: the backward operator asks what happens to a
particle currently at~$\zeta$ when a jump of size~$y$ occurs, and
the particle ends up at the destination~$\zeta+y$; the forward
operator asks where the probability mass observed at~$\zeta$
came from, and mass arriving at~$\zeta$ must have started at the
source~$\zeta-y$. The shift on the test function therefore tracks
where the particle is going, while the shift on the density
tracks where the particle came from. The compensator
$y\,\partial_\zeta\rho(\zeta,t)\,\mathbf 1_{|y|<1}$ is the adjoint of the
compensator in~\eqref{eq:jump_operator_explicit}: the minus sign in
the backward operator becomes a plus sign after integration by parts.
It plays the same role as in the backward generator, namely making the small-jump integral converge for $\alpha_{\mathrm{jump}}\in(1,2)$.

The three derivations above give the pieces of the adjoint of the generator
\begin{equation}
\label{eq:appendix_adjoint_generator}
\mathcal L_\omega^{*}\rho(\zeta,t)
=
-\partial_\zeta\!\big(F(\zeta)\rho(\zeta,t)\big)
+\eta_G\,\partial_{\zeta\zeta}\rho(\zeta,t)
+\mathcal I^*_{\alpha_{\mathrm{jump}},\lambda}[\rho](\zeta,t).
\end{equation}
Substituting the explicit adjoint~\eqref{eq:appendix_adjoint_generator} into the identity $\partial_t\rho=\mathcal L_\omega^{*}\rho$ gives the forward equation~\eqref{eq:general_forward_equation} used in the main text.

\paragraph{Stationary far-left-tail reduction and characteristic equation.}
The forward equation~\eqref{eq:general_forward_equation} is the density equation needed for the tail analysis.
Assumptions~\ref{ass:quasi_stationarity} and~\ref{ass:mean_field} supply the deterministic quasi-stationary population description used here.
The purpose of the reduction is to characterize the admissible far-left behavior of that density; it is not a separate existence-and-uniqueness theorem for an invariant measure of the full state-dependent SDE.
A positive exponent $\beta$ makes the candidate tail $\rho_\infty(\zeta)\sim e^{\beta\zeta}$ integrable as $\zeta\to-\infty$ and induces the integrable time-scale tail $p_\infty(\tau)\sim\tau^{-1-\beta}$.
Proposition~\ref{lem:Phi_properties} determines exactly when the stationary far-left-tail balance admits such an exponent.

In the late-training quasi-stationary regime, $\partial_t\rho\approx0$, so \eqref{eq:general_forward_equation} reduces to the stationary balance $\mathcal L_\omega^*\rho_\infty=0$.
Substituting $\rho=\rho_\infty$ into the explicit adjoint \eqref{eq:appendix_adjoint_generator} then gives the stationary nonlocal Fokker--Planck equation:
\begin{equation}
\label{eq:appendix_stationary_forward_equation}
0
=
-\partial_\zeta\!\big(F(\zeta)\rho_\infty(\zeta)\big)
+
\eta_G \partial_{\zeta\zeta}\rho_\infty(\zeta)
+
\mathcal I^*_{\alpha_{\mathrm{jump}},\lambda}[\rho_\infty](\zeta).
\end{equation}

In the far-left tail ($\zeta\to -\infty$), we use the constant-drift closure $F(\zeta)=\kappa+o(1)$ from Eq.~\ref{eq:drift_saturation}.
Starting from the stationary forward equation~\eqref{eq:appendix_stationary_forward_equation}, the far-left-tail balance becomes asymptotically translation invariant:
\begin{equation}
\label{eq:appendix_tail_balance_equation}
0
=
- \kappa\,\partial_\zeta \rho_\infty(\zeta)
+
\eta_G\,\partial_{\zeta\zeta}\rho_\infty(\zeta)
+
\mathcal I^*_{\alpha_{\mathrm{jump}},\lambda}[\rho_\infty](\zeta).
\end{equation}

Because the far-left-tail balance~\eqref{eq:appendix_tail_balance_equation}
has constant coefficients, exponentials are eigenfunctions of every
term: drift, diffusion, and nonlocal jump operator alike.
We therefore seek an asymptotic tail of the form
\begin{equation}
\label{eq:appendix_exponential_tail_ansatz}
\rho_\infty(\zeta)\sim c\,e^{\beta\zeta},
\qquad \beta>0,
\ \zeta\to-\infty,
\end{equation}
where $c>0$ is a free amplitude.

Before carrying out this substitution term by term, we verify that the leading contribution of the nonlocal jump operator is governed by the same far-left-tail region.
For fixed jump sizes, both $\zeta$ and $\zeta-y$ remain in the far-left tail as $\zeta\to-\infty$, so the exponential substitution applies directly. A jump large enough to connect a fixed bulk region to~$\zeta$ must instead have magnitude of order~$|\zeta|$.
The tempered measure suppresses such a contribution at the rate $e^{-\lambda|\zeta|}$, up to a polynomial factor, whereas an interior tail mode decays as $e^{-\beta|\zeta|}$ with $\beta<\lambda$.
Their ratio is $e^{-(\lambda-\beta)|\zeta|}\to0$, so bulk-to-tail contributions are subleading.
The leading nonlocal balance is therefore consistently captured by evaluating the jump operator on the exponential tail mode.
At the endpoint $\beta=\lambda$, the candidate tail and the bulk-to-tail contribution both decay at the exponential rate $e^{-\lambda|\zeta|}$. The separation used above is therefore lost, and polynomial or other subleading factors can affect their relative size. Consequently, the interior-mode argument does not determine the detailed endpoint tail; below, $\beta=\lambda$ is classified only as an algebraic boundary solution.

Having established this asymptotic separation for interior modes, we substitute~\eqref{eq:appendix_exponential_tail_ansatz} into each term of~\eqref{eq:appendix_tail_balance_equation}.
The drift and diffusion terms give:
\begin{equation}
\begin{aligned}
\label{eq:appendix_drift_diffusion_on_exponential_tail}
-\kappa\,\partial_\zeta\!\left(c\,e^{\beta\zeta}\right)
&=
-\kappa\beta\,c\,e^{\beta\zeta}, \\
\eta_G\,\partial_{\zeta\zeta}\!\left(c\,e^{\beta\zeta}\right)
&=
\eta_G\beta^2\,c\,e^{\beta\zeta}.
\end{aligned}
\end{equation}
For the jump term, substituting the same exponential tail factor into the forward jump operator~\eqref{eq:appendix_adjoint_jump_operator} gives:
\begin{align}
\label{eq:appendix_adjoint_jump_on_exponential_tail}
\mathcal I^*_{\alpha_{\mathrm{jump}},\lambda}
\!\left[c\,e^{\beta\,\cdot}\right](\zeta)
&=
\int_{\mathbb R\setminus\{0\}}
\Big(
c\,e^{\beta(\zeta-y)}
-
c\,e^{\beta\zeta}
+
y\,\beta c\,e^{\beta\zeta}\mathbf 1_{|y|<1}
\Big)
\nu_{\alpha_{\mathrm{jump}},\lambda}(dy)
\nonumber\\
&=
c\,e^{\beta\zeta}
\int_{\mathbb R\setminus\{0\}}
\Big(
e^{-\beta y}
-1
+
\beta y \mathbf 1_{|y|<1}
\Big)
\nu_{\alpha_{\mathrm{jump}},\lambda}(dy).
\end{align}
The scalar integral in~\eqref{eq:appendix_adjoint_jump_on_exponential_tail} is the adjoint jump symbol:
\begin{equation}
\label{eq:appendix_adjoint_jump_symbol}
\Psi^*(\beta;\omega)
=
\int_{\mathbb R\setminus\{0\}}
\Big(
e^{-\beta y}
-1
+\beta y \mathbf 1_{|y|<1}
\Big)
\nu_{\alpha_{\mathrm{jump}},\lambda}(dy).
\end{equation}
Thus, the jump term contributes
\begin{equation}
\label{eq:appendix_jump_eigenvalue_tail}
\mathcal I^*_{\alpha_{\mathrm{jump}},\lambda}
\!\left[c\,e^{\beta\,\cdot}\right](\zeta)
=
\Psi^*(\beta;\omega)\,c\,e^{\beta\zeta}.
\end{equation}
The adjoint symbol $\Psi^*(\beta;\omega)$ and the backward symbol $\Psi(\xi;\omega)$ in~\eqref{eq:levy_symbol_definition} are related by the substitution $\xi=-\beta$: $\Psi^*(\beta;\omega)=\Psi(-\beta;\omega)$.
The star emphasizes that \(\Psi^*\) is the symbol of the adjoint jump operator \eqref{eq:appendix_adjoint_jump_operator}. For the symmetric tempered measure~\eqref{eq:tempered_levy_measure}, the backward symbol \(\Psi(\cdot\,;\omega)\) is even, so \(\Psi(-\beta;\omega)=\Psi(\beta;\omega)\).
Thus, the forward and backward symbols coincide in the symmetric model; without the symmetry assumption, they would have to be kept distinct.

Combining
\eqref{eq:appendix_drift_diffusion_on_exponential_tail} and
\eqref{eq:appendix_jump_eigenvalue_tail} in the tail balance
\eqref{eq:appendix_tail_balance_equation} gives
\begin{equation}
\label{eq:appendix_tail_balance_common_factor}
0
=
\left(
-\kappa\beta
+
\eta_G\beta^2
+
\Psi^*(\beta;\omega)
\right)c\,e^{\beta\zeta}.
\end{equation}
Since \(c>0\) and \(e^{\beta\zeta}>0\), the common factor can be divided out. This yields the characteristic function
\begin{equation}
\label{eq:appendix_characteristic_function}
\Phi(\beta;\omega)
=
\eta_G\beta^2
+
\Psi^*(\beta;\omega)
-
\kappa\beta,
\end{equation}
and interior positive roots $\beta\in(0,\lambda)$ of the associated characteristic equation
\begin{equation}
\label{eq:appendix_characteristic_equation}
\Phi(\beta;\omega)=0,
\end{equation}
determine the spectral exponent.

The endpoint root $\beta=\lambda$ is classified separately in Proposition~\ref{lem:Phi_properties}.

\subsection{Existence and uniqueness of the spectral exponent, and the jump-amplitude threshold $\eta_J^*$}
\label{app:existence_uniqueness}

This subsection classifies the positive roots of the characteristic equation~\eqref{eq:appendix_characteristic_equation} on $(0,\lambda]$, identifies the condition for a unique interior root, and converts that condition into an explicit threshold on the jump amplitude~$\eta_J$.

The vanishing $\Phi(0;\omega)=0$, the negative initial slope $\Phi'(0;\omega)=-\kappa<0$, and the strict convexity of $\Phi(\cdot;\omega)$ on the open domain $(0, \lambda)$ are immediate from standard L\'evy--Khintchine symbol calculus for translation-invariant generators~\citep{applebaum2009levy,sato1999levy}; we keep these facts explicit in the proof for self-containment but they are not the contribution of this subsection.
The tempered-stable jump measure used here is standard: it is a proper tempered $\alpha$-stable L\'evy measure in the sense of \citet{rosinski2007tempering}, with exponential tempering and a symmetric measure; equivalently, it is the symmetric bilateral tempered-stable special case of~\citet{kuchler2013tempered}.

The model-specific step is the boundary evaluation at the tempering scale $\beta=\lambda$: exponential tempering makes $\Psi^*(\beta;\omega)$ finite at the closed boundary $\beta=\lambda$ and infinite immediately beyond, which allows the roots on the domain $(0,\lambda]$ to be classified by the sign of $\Phi(\lambda;\omega)$.
Reading off the $\eta_J$-dependence of this condition then yields the explicit threshold~$\eta_J^*$.

\begin{proposition}[Root classification and jump-amplitude threshold for the spectral exponent]
\label{lem:Phi_properties}
For the symmetric tempered L\'evy measure~\eqref{eq:tempered_levy_measure}
with $\eta_J>0$, the characteristic function
\eqref{eq:appendix_characteristic_function} satisfies:
\begin{enumerate}[label=(\roman*)]
\item $\Phi(0;\omega)=0$ and $\Phi'(0;\omega)=-\kappa<0$.
\item $\Phi(\cdot\,;\omega)$ is strictly convex on $[0,\lambda)$ and continuous on $[0,\lambda]$.
\item $\Phi(\lambda;\omega)$ is finite, whereas $\Phi(\beta;\omega)=+\infty$ for every $\beta>\lambda$.
\end{enumerate}
On the closed nonnegative exponential-moment domain $[0,\lambda]$, $\beta=0$ is always the trivial root. The positive-root structure on the half-open interval $(0,\lambda]$ is as follows.
If $\Phi(\lambda;\omega)>0$, it has a unique interior positive root $\beta\in(0,\lambda)$. If $\Phi(\lambda;\omega)=0$, it has no interior positive root, but $\beta=\lambda$ is the unique positive endpoint root. If $\Phi(\lambda;\omega)<0$, it has no positive root in $(0,\lambda]$.

Consequently, an interior spectral root exists if and only if
\begin{equation}
\label{eq:appendix_existence_condition}
\Phi(\lambda;\omega)
=
\eta_G\lambda^2
+
\Psi^*(\lambda;\omega)
-
\kappa\lambda
>
0.
\end{equation}
Since $\Psi^*(\lambda;\omega)$ is proportional to~$\eta_J$, this is equivalent to
\begin{equation}
\label{eq:appendix_threshold_etaJ}
\eta_J
>
\eta_J^*
=
\frac{[\kappa\lambda - \eta_G\lambda^2]^+}
     {\psi_0(\alpha_{\mathrm{jump}},\lambda)},
\end{equation}
where $[x]^+=\max(x,0)$ and $\psi_0(\alpha_{\mathrm{jump}},\lambda) = \Psi^*(\lambda;\omega)/\eta_J > 0$ is a positive constant that depends only on $\alpha_{\mathrm{jump}}$ and $\lambda$.
When $\eta_J^*>0$, the three cases are equivalently $\eta_J>\eta_J^*$ for a unique interior root, $\eta_J=\eta_J^*$ for the unique endpoint root $\beta=\lambda$, and $0<\eta_J<\eta_J^*$ for no positive root.
When $\lambda\ge\kappa/\eta_G$, the threshold vanishes and every $\eta_J>0$ gives a unique interior root.
\end{proposition}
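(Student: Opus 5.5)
I will work directly with the explicit integral
\[
\Psi^*(\beta;\omega)=\eta_J c_{\alpha_{\mathrm{jump}}}\int_{\mathbb R\setminus\{0\}}\bigl(e^{-\beta y}-1+\beta y\,\mathbf 1_{|y|<1}\bigr)\frac{e^{-\lambda|y|}}{|y|^{1+\alpha_{\mathrm{jump}}}}\,dy .
\]
\emph{Step 1: symmetrization.} Because the measure is symmetric, the compensator term integrates to zero. Pairing $y$ with $-y$ then gives
\[
\Psi^*(\beta;\omega)=2\eta_J c_{\alpha_{\mathrm{jump}}}\int_0^\infty(\cosh(\beta y)-1)\,y^{-1-\alpha_{\mathrm{jump}}}e^{-\lambda y}\,dy .
\]
From this form most of the claims follow.

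\emph{Step 2: finiteness and the domain.} The integrand is nonnegative. Near $y=0$ it behaves like $\beta^2 y^{1-\alpha_{\mathrm{jump}}}/2$, which is integrable since $\alpha_{\mathrm{jump}}<2$. At infinity it behaves like $\tfrac12 y^{-1-\alpha_{\mathrm{jump}}}e^{(\beta-\lambda)y}$. This is integrable for $\beta\le\lambda$ (at $\beta=\lambda$ because $\alpha_{\mathrm{jump}}>0$) and gives $+\infty$ for $\beta>\lambda$. That proves (iii).

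\emph{Step 3: continuity and convexity.} Continuity on $[0,\lambda]$ follows from monotone convergence, since the integrand is increasing in $\beta\ge0$; for the left-continuity at $\lambda$ one can also use dominated convergence with the $\beta=\lambda$ integrand as dominator. For convexity, $\beta\mapsto\cosh(\beta y)-1$ is convex, so $\Psi^*$ is convex on $[0,\lambda]$. Adding the strictly convex term $\eta_G\beta^2$ (with $\eta_G>0$) gives strict convexity of $\Phi$ on $[0,\lambda]$, which in particular gives (ii). For (i), $\Phi(0)=0$ is immediate. For the derivative I differentiate under the integral on $[0,\lambda-\epsilon]$: the $\beta$-derivative $y\sinh(\beta y)$ times the kernel is dominated near $0$ by $\beta y^{1-\alpha_{\mathrm{jump}}}$ and at infinity by exponential decay. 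This gives $\Psi^{*\prime}(0)=0$, and hence $\Phi'(0)=-\kappa<0$.

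\emph{Step 4: root classification.} This is elementary convex analysis. Since $\Phi(0)=0$, $\Phi'(0)<0$, and $\Phi$ is strictly convex and continuous on $[0,\lambda]$, $\Phi$ is negative just to the right of $0$. A strictly convex function vanishes at most twice, and $0$ is already one zero. Hence on $(0,\lambda]$ there is at most one zero, and any zero is a sign change from negative to positive. The three cases follow:
\begin{itemize}
\item If $\Phi(\lambda)>0$, the intermediate value theorem gives exactly one zero in $(0,\lambda)$.
\item If $\Phi(\lambda)=0$, the endpoint $\lambda$ is the second zero, so there is none in the interior.
\item If $\Phi(\lambda)<0$, convexity gives $\Phi\le\max(\Phi(0),\Phi(\lambda))\cdot$ along the chord; more precisely, $\Phi\le(1-t)\Phi(0)+t\Phi(\lambda)<0$ on $(0,\lambda]$, so there is no positive root.
\end{itemize}

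\emph{Step 5: the threshold.} Write $\Psi^*(\lambda;\omega)=\eta_J\psi_0$ with $\psi_0=2c_{\alpha_{\mathrm{jump}}}\int_0^\infty(\cosh\lambda y-1)y^{-1-\alpha_{\mathrm{jump}}}e^{-\lambda y}\,dy$. This is finite by Step 2 and positive because the integrand is positive, assuming $c_{\alpha_{\mathrm{jump}}}>0$. The condition $\Phi(\lambda)>0$ is equivalent to $\eta_J\psi_0>\kappa\lambda-\eta_G\lambda^2$. When the right-hand side is nonpositive, that is when $\lambda\ge\kappa/\eta_G$, every $\eta_J>0$ satisfies it. Otherwise it reads $\eta_J>\eta_J^*$, and the equality and reverse-inequality cases map to the endpoint and no-root cases respectively.

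The main obstacle is justifying differentiation under the integral at $\beta=0$. There the compensated small-jump singularity must be controlled, and the bound must be uniform in $\beta$ near $0$ when $\alpha_{\mathrm{jump}}\ge1$. The symmetrized $\cosh$ form handles this cleanly, because its derivative vanishes quadratically at $y=0$.
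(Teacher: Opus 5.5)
Your proof is correct, but it is organized differently from the paper's. The paper keeps the compensated, unsymmetrized integrand through almost the whole proof:
\begin{itemize}
\item it gets $\Phi'(0)=-\kappa$ by differentiating under the integral, using symmetry only to cancel $\int_{|y|\ge1}y\,\nu_{\alpha_{\mathrm{jump}},\lambda}(dy)$;
\item it proves strict convexity on $[0,\lambda)$ from $\Phi''(\beta)=2\eta_G+\int y^2e^{-\beta y}\,\nu_{\alpha_{\mathrm{jump}},\lambda}(dy)>0$;
\item it handles finiteness at $\lambda$, divergence beyond $\lambda$, and continuity by splitting into small, positive-large, and negative-large jumps, with a separate bound on each.
\end{itemize}
The $y\leftrightarrow-y$ pairing that gives the $\cosh$ form appears only at the end, to show $\psi_0>0$.

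You do the pairing first, so every claim is read off one nonnegative integrand that is increasing and convex in $\beta$. This buys several simplifications:
\begin{itemize}
\item finiteness and divergence reduce to one comparison at infinity;
\item continuity is one dominated-convergence step, with the $\beta=\lambda$ integrand as dominator;
\item convexity holds on the closed interval $[0,\lambda]$ without any second derivative;
\item the case $\Phi(\lambda)=0$ follows because a strictly convex function has at most two zeros, whereas the paper needs an auxiliary $\gamma\in(\beta,\lambda)$ and extends convexity to $\lambda$ by continuity.
\end{itemize}
What the paper's route buys in exchange is that it isolates where symmetry is used: only in $\Psi^{*\prime}(0)=0$ and in the sign of $\psi_0$. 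It therefore transfers more directly to an asymmetric tempered measure, where there is no $\cosh$ representation and $\Phi'(0)$ picks up the extra term $-\int_{|y|\ge1}y\,\nu(dy)$.

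One wording fix. For $\alpha_{\mathrm{jump}}\ge1$ the compensator alone is not integrable, since $\int_{|y|<1}|y|\,\nu_{\alpha_{\mathrm{jump}},\lambda}(dy)=\infty$, so it does not ``integrate to zero'' on its own. The cancellation must be done pointwise inside the paired integrand, $g(u)+g(-u)=2(\cosh(\beta u)-1)$ with $g(y)=e^{-\beta y}-1+\beta y\mathbf 1_{|y|<1}$. This is legitimate because $g$ is absolutely integrable against the L\'evy measure for $\beta\le\lambda$. For $\beta>\lambda$ its negative part is still integrable, so the value $+\infty$ is unambiguous.
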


\begin{proof}
The first two properties are standard consequences of L\'evy--Khintchine, or cumulant-generating-function, calculus on the exponential-moment domain~\citep{sato1999levy}; we record the short verification to make clear where symmetry enters.

\begin{itemize}
\item
\emph{Standard facts at $\beta=0$ and convexity.}
At $\beta=0$, the integrand of~\eqref{eq:appendix_adjoint_jump_symbol}
collapses to zero, so $\Psi^*(0;\omega)=0$ and
$\Phi(0;\omega)=0$. For $\beta\in[0,\lambda)$, differentiating under
the integral in~\eqref{eq:appendix_adjoint_jump_symbol} gives
\begin{equation}
\label{eq:appendix_first_derivative_symbol}
\Psi^{*\prime}(\beta;\omega)
=
\int_{\mathbb R\setminus\{0\}}
\left(
-y e^{-\beta y}
+
y\mathbf 1_{|y|<1}
\right)
\nu_{\alpha_{\mathrm{jump}},\lambda}(dy).
\end{equation}
At $\beta=0$, the integrand in
\eqref{eq:appendix_first_derivative_symbol} becomes
\begin{equation}
-y+y\mathbf 1_{|y|<1}
=
\begin{cases}
0, & |y|<1,\\
-y, & |y|\ge 1.
\end{cases}
\end{equation}
Equivalently,
$-y+y\mathbf 1_{|y|<1}=-y\mathbf 1_{|y|\ge 1}$.
Therefore
\begin{equation}
\Psi^{*\prime}(0;\omega)
=
-\int_{|y|\ge 1}y\,
\nu_{\alpha_{\mathrm{jump}},\lambda}(dy).
\end{equation}
This integral is zero because the domain $\{|y|\ge1\}$ is symmetric,
the tempered measure~\eqref{eq:tempered_levy_measure} is symmetric,
and the function $y$ is odd: the contribution from $+u$ cancels the
contribution from $-u$. Hence $\Psi^{*\prime}(0;\omega)=0$. Since
\begin{equation}
\Phi'(\beta;\omega)
=
2\eta_G\beta+\Psi^{*\prime}(\beta;\omega)-\kappa,
\end{equation}
we obtain
\begin{equation}
\Phi'(0;\omega)
=
\Psi^{*\prime}(0;\omega)-\kappa
=
-\kappa<0,
\end{equation}
because $\kappa>0$.

On the open domain $\beta\in[0,\lambda)$, differentiating twice gives
\begin{equation}
\Phi''(\beta;\omega)
=
2\eta_G
+
\int_{\mathbb R\setminus\{0\}}
y^2 e^{-\beta y}\,
\nu_{\alpha_{\mathrm{jump}},\lambda}(dy).
\end{equation}
The integral is finite on $[0,\lambda)$ by exponential tempering, and
the diffusion term satisfies $\eta_G>0$, so
$\Phi''(\beta;\omega)>0$ on $[0,\lambda)$. This proves strict
convexity on the domain. The closed-boundary value at $\beta=\lambda$ is treated next.

\item 
\emph{Finiteness at $\beta=\lambda$ and divergence beyond.}
We now show that the adjoint symbol~\eqref{eq:appendix_adjoint_jump_symbol}
is finite at the tempering boundary $\beta=\lambda$, but diverges as
soon as $\beta>\lambda$. The split below isolates where this can
happen. Near the origin, convergence is controlled by the
small-jump compensator. In the large-jump tails, convergence is
controlled by the competition between the exponential factor
$e^{-\beta y}$ in the adjoint symbol and the tempering factor
$e^{-\lambda |y|}$ in the L\'evy measure. Positive large jumps are
always damped, while negative large jumps can make
$e^{-\beta y}$ grow against the tempering.

For the small-jump region $\{|y|<1\}$, the compensator in~\eqref{eq:appendix_adjoint_jump_symbol} is active.
Taylor-expanding $e^{-\beta y}$ around $y=0$ and truncating to second order gives
\begin{equation}
\label{eq:appendix_compensated_taylor}
e^{-\beta y}-1+\beta y
=
\left(1-\beta y+\tfrac{1}{2}\beta^2y^2+O(y^3)\right)
-1+\beta y
=
\tfrac{1}{2}\beta^2 y^2+O(y^3)
\qquad\text{as } y\to 0,
\end{equation}
so the compensator $\beta y\,\mathbf 1_{|y|<1}$ cancels the
linear term $-\beta y$ in $e^{-\beta y}-1$. The compensated
integrand therefore starts at order $y^2$. Near the origin the
tempering factor satisfies $e^{-\lambda |y|}\to1$, so the singular
part of the L\'evy density in~\eqref{eq:tempered_levy_measure} is
$|y|^{-1-\alpha_{\mathrm{jump}}}$. Multiplying this singular factor
by the order-$y^2$ compensated numerator gives
$O(|y|^{1-\alpha_{\mathrm{jump}}})$, which is integrable at zero
because $\alpha_{\mathrm{jump}}<2$, i.e.
\begin{equation}
\int_0^1 y^{1-\alpha_{\mathrm{jump}}}\,dy < \infty \quad\Longleftrightarrow\quad 1-\alpha_{\mathrm{jump}}>-1 \quad\Longleftrightarrow\quad \alpha_{\mathrm{jump}}<2.
\end{equation}
Thus, the small-jump region is finite for every fixed $\beta\ge0$.

On the positive large-jump region $\{y>1\}$, the compensator
$\beta y\,\mathbf 1_{|y|<1}$ vanishes because $|y|>1$.
The contribution of this region to
\eqref{eq:appendix_adjoint_jump_symbol} is therefore
\begin{equation}
\int_1^\infty
\big(e^{-\beta y}-1\big)\,
\eta_J c_{\alpha_{\mathrm{jump}}}
e^{-\lambda y}y^{-1-\alpha_{\mathrm{jump}}}\,dy .
\end{equation}
For $\beta\ge0$ and $y>1$, we have $0<e^{-\beta y}\le1$, so
$e^{-\beta y}-1\le0$ and $|e^{-\beta y}-1|\le1$.
To prove that this contribution is finite, it is enough to show
absolute integrability:
\begin{equation}
\int_1^\infty
\left|e^{-\beta y}-1\right|\,
\eta_J c_{\alpha_{\mathrm{jump}}}
e^{-\lambda y}y^{-1-\alpha_{\mathrm{jump}}}\,dy
\le
\eta_J c_{\alpha_{\mathrm{jump}}}
\int_1^\infty
e^{-\lambda y}y^{-1-\alpha_{\mathrm{jump}}}\,dy.
\end{equation}
The right-hand side is finite because $\lambda>0$ gives exponential
decay at infinity. Hence, the positive large-jump region is absolutely
integrable for every $\beta\ge0$ and imposes no restriction on the
allowed range of $\beta$.

The only remaining possible obstruction is the negative large-jump
region $\{y<-1\}$. On this region the compensator again vanishes
because $|y|>1$, so the contribution to
\eqref{eq:appendix_adjoint_jump_symbol} is
\[
\int_{-\infty}^{-1}
(e^{-\beta y}-1)\,
\eta_J c_{\alpha_{\mathrm{jump}}}
e^{-\lambda |y|}|y|^{-1-\alpha_{\mathrm{jump}}}\,dy .
\]
This is the dangerous tail: for negative~$y$,
$e^{-\beta y}=e^{\beta |y|}$ grows with $|y|$, while the
tempered measure contributes the damping factor
$e^{-\lambda |y|}$. To put these two exponentials on the same
positive variable, set $u=-y$. Then $u>1$, $|y|=u$, and
\eqref{eq:tempered_levy_measure} gives
\begin{equation}
\label{eq:negative_tail_change_of_variable}
\nu_{\alpha_{\mathrm{jump}},\lambda}(dy)\big|_{y<-1}
=
\eta_J\,c_{\alpha_{\mathrm{jump}}}\,
u^{-1-\alpha_{\mathrm{jump}}}\,e^{-\lambda u}\,du,
\qquad u>1
\end{equation}
where the sign from $dy=-du$ is absorbed by reversing the integration
limits. Since $e^{-\beta y}=e^{\beta u}$, the negative-tail
contribution is equivalently
\begin{equation}
\label{eq:negative_tail_integrand}
\int_1^\infty
\eta_J\,c_{\alpha_{\mathrm{jump}}}\,
(e^{\beta u}-1)\,
e^{-\lambda u}\,
u^{-1-\alpha_{\mathrm{jump}}}\,du .
\end{equation}
Equivalently, the integrand after the change of variables is
\begin{equation}
\label{eq:negative_tail_integrand_density}
\eta_J\,c_{\alpha_{\mathrm{jump}}}\,
(e^{\beta u}-1)\,
e^{-\lambda u}\,
u^{-1-\alpha_{\mathrm{jump}}},
\qquad u>1.
\end{equation}
The threshold is now visible: the leading exponential factor for large~$u$ is $e^{\beta u}e^{-\lambda u}=e^{(\beta-\lambda)u}$.
We now evaluate the boundary case, $\beta=\lambda$, and then the supercritical case, $\beta>\lambda$.

At the boundary $\beta=\lambda$, the exponentials in \eqref{eq:negative_tail_integrand_density} combine and the integrand simplifies to
\begin{equation}
\label{eq:negative_tail_boundary_integrand}
\eta_J\,c_{\alpha_{\mathrm{jump}}}\,
(1-e^{-\lambda u})\,u^{-1-\alpha_{\mathrm{jump}}}.
\end{equation}
Since $0\le 1-e^{-\lambda u}\le 1$, the absolute value of the boundary contribution is bounded by
\begin{equation}
\eta_J c_{\alpha_{\mathrm{jump}}}
\int_1^\infty
u^{-1-\alpha_{\mathrm{jump}}}\,du,
\end{equation}
which is finite because $\alpha_{\mathrm{jump}}>0$. The negative
large-jump region therefore contributes a finite amount at
$\beta=\lambda$.

Combining the small-jump, positive large-jump, and negative
large-jump estimates gives $\Psi^*(\lambda;\omega)<\infty$, and
therefore $\Phi(\lambda;\omega)<\infty$.

The same estimates also show that $\Psi^*(\beta;\omega)$ depends
continuously on $\beta$ up to the boundary. The only point requiring
an explicit uniform bound is again the negative tail. For
$u>1$ and $0\le\beta\le\lambda$,
\begin{equation}
\label{eq:appendix_neg_tail_uniform_bound}
0
\;\le\;
(e^{\beta u}-1)\,e^{-\lambda u}
\;\le\;
(e^{\lambda u}-1)\,e^{-\lambda u}
\;=\;
1-e^{-\lambda u}
\;\le\;
1 .
\end{equation}
Thus the absolute value of the negative-tail integrand is bounded, uniformly over $\beta\in[0,\lambda]$, by $\eta_J c_{\alpha_{\mathrm{jump}}} u^{-1-\alpha_{\mathrm{jump}}}$, which is integrable on $[1,\infty)$ because $\alpha_{\mathrm{jump}}>0$.
The small-jump and positive large-jump regions admit the corresponding uniform integrable bounds from the estimates above.
Dominated convergence then gives continuity of $\Psi^*$ on $[0,\lambda]$, and hence continuity of $\Phi$ on $[0,\lambda]$.

Beyond the boundary $\beta>\lambda$, the same negative-tail
integrand diverges. Since $\beta-\lambda>0$, choose
$U$ large enough that $e^{\beta u}-1\ge \tfrac12 e^{\beta u}$
for all $u\ge U$. Then, for $u\ge U$,
\eqref{eq:negative_tail_integrand_density} is bounded below by
a positive constant times $e^{(\beta-\lambda)u}\,u^{-1-\alpha_{\mathrm{jump}}}$.
This lower bound is not integrable on $[U,\infty)$: the exponential
growth $e^{(\beta-\lambda)u}$ dominates the polynomial decay
$u^{-1-\alpha_{\mathrm{jump}}}$.
Therefore, the negative-tail contribution to $\Psi^*(\beta;\omega)$ diverges.
The small-jump and positive large-jump regions remain finite, as shown above, so $\Psi^*(\beta;\omega)=+\infty$ and hence $\Phi(\beta;\omega)=+\infty$ for every $\beta>\lambda$.

\item 
\emph{Existence and uniqueness.}
The facts just proved imply that $\Phi(0;\omega)=0$,
$\Phi'(0;\omega)<0$, that $\Phi$ is strictly convex on
$[0,\lambda)$, and that $\Phi$ is continuous on $[0,\lambda]$.
The negative initial slope means that
$\Phi(\beta;\omega)<0$ for all sufficiently small
$\beta>0$.

If $\Phi(\lambda;\omega)>0$, continuity on $[0,\lambda]$ implies that $\Phi$ must cross zero somewhere in $(0,\lambda)$. Strict convexity gives uniqueness: after starting below zero, a strictly convex function can cross the horizontal axis at most once before the endpoint.

If $\Phi(\lambda;\omega)<0$, convexity gives, for every
$\beta\in(0,\lambda)$,
\begin{equation}
\Phi(\beta;\omega)
\le
\left(1-\frac{\beta}{\lambda}\right)\Phi(0;\omega)
+
\frac{\beta}{\lambda}\Phi(\lambda;\omega)
=
\frac{\beta}{\lambda}\Phi(\lambda;\omega)
<0.
\end{equation}
Thus, no positive root exists in $(0,\lambda]$.

If $\Phi(\lambda;\omega)=0$, then $\beta=\lambda$ is an endpoint
root. To exclude an interior root, fix $\beta\in(0,\lambda)$ and
choose $\gamma\in(\beta,\lambda)$. Convexity on the closed interval,
obtained by continuity at~$\lambda$, gives
$\Phi(\gamma;\omega)\le0$. Strict convexity on $[0,\gamma]$ then
gives
\begin{equation}
\Phi(\beta;\omega)
<
\left(1-\frac{\beta}{\gamma}\right)\Phi(0;\omega)
+
\frac{\beta}{\gamma}\Phi(\gamma;\omega)
\le0.
\end{equation}
Hence there is no interior positive root, and $\beta=\lambda$ is
the unique positive endpoint root.

These three cases establish the root classification stated in the proposition. In particular, a unique interior root $\beta\in(0,\lambda)$ exists if and only if $\Phi(\lambda;\omega)>0$.

\item
\emph{Threshold on $\eta_J$.}
It remains to rewrite the endpoint condition $\Phi(\lambda;\omega)>0$ as a threshold on the jump amplitude.
The only dependence of $\Psi^*(\lambda;\omega)$ on $\eta_J$ comes from the prefactor of the tempered L\'evy measure~\eqref{eq:tempered_levy_measure}.
Hence
\begin{equation}
\Psi^*(\lambda;\omega)
=
\eta_J\,\psi_0(\alpha_{\mathrm{jump}},\lambda),
\end{equation}
where symmetry makes the sign explicit. Pair the contributions at $y=u$ and $y=-u$, with $u>0$.
Their small-jump compensator terms cancel, while the remaining terms sum to
$(e^{-\lambda u}-1)+(e^{\lambda u}-1) = 2(\cosh(\lambda u)-1)$.
Therefore,
\begin{equation}
\label{eq:appendix_psi0_positive}
\begin{aligned}
\psi_0(\alpha_{\mathrm{jump}},\lambda)
&=
\int_{\mathbb R\setminus\{0\}}
\Big(
e^{-\lambda y}
-1
+\lambda y\,\mathbf 1_{|y|<1}
\Big)
c_{\alpha_{\mathrm{jump}}}
\frac{e^{-\lambda |y|}}
{|y|^{1+\alpha_{\mathrm{jump}}}}\,dy
\\
&=
2c_{\alpha_{\mathrm{jump}}}
\int_0^\infty
\bigl(\cosh(\lambda u)-1\bigr)
e^{-\lambda u}
u^{-1-\alpha_{\mathrm{jump}}}\,du
>0.
\end{aligned}
\end{equation}
Equation~\eqref{eq:appendix_psi0_positive} shows that, at the tempering boundary $\beta=\lambda$, the jump contribution has a strictly positive coefficient per unit jump amplitude.
Consequently, $\Phi(\lambda;\omega)$ is affine and strictly increasing in~$\eta_J$, which allows the endpoint condition $\Phi(\lambda;\omega)>0$ to be solved as an explicit threshold.

Substituting this factorization into \eqref{eq:appendix_existence_condition} gives
\begin{equation}
\Phi(\lambda;\omega)
=
\eta_G\lambda^2
+
\eta_J\,\psi_0(\alpha_{\mathrm{jump}},\lambda)
-
\kappa\lambda .
\end{equation}
Therefore $\Phi(\lambda;\omega)>0$ is equivalent to
\begin{equation}
\eta_J
>
\frac{\kappa\lambda-\eta_G\lambda^2}
     {\psi_0(\alpha_{\mathrm{jump}},\lambda)}.
\end{equation}
Since the jump amplitude satisfies $\eta_J\ge0$, a negative right-hand side imposes no positive lower bound on the jump amplitude.
The effective threshold is therefore the positive part of the algebraic threshold:
\begin{equation}
\eta_J^*=\frac{\left[\kappa\lambda-\eta_G\lambda^2\right]^+}{\psi_0(\alpha_{\mathrm{jump}},\lambda)},
\end{equation}
which is exactly \eqref{eq:appendix_threshold_etaJ}.

If $\lambda>\kappa/\eta_G$, then $\eta_G\lambda^2-\kappa\lambda>0$, so the drift--diffusion contribution is already positive.
If $\lambda=\kappa/\eta_G$, that contribution is zero, but every $\eta_J>0$ gives
$\Phi(\lambda;\omega)=\eta_J\psi_0(\alpha_{\mathrm{jump}},\lambda)>0$.
Thus, $\eta_J^*=0$ in both cases, and every positive jump amplitude produces an interior root.
\end{itemize}

\end{proof}

\paragraph{Monotonicity in the jump intensity.}
For $\eta_J>\eta_J^*$, the spectral exponent $\beta$ varies
monotonically with the jump amplitude. Indeed,
$\Psi^*(\beta;\omega)$ is linear in $\eta_J$ with positive
coefficient for each $\beta>0$, so
$\Phi(\beta;\omega)$ increases pointwise as $\eta_J$ increases.
Since $\Phi(0;\omega)=0$, $\Phi'(0;\omega)<0$, and
$\Phi(\cdot;\omega)$ is strictly convex, increasing the jump
amplitude moves the unique positive crossing leftward. Hence
$\beta$ decreases monotonically with $\eta_J$.

\subsection{Limiting cases and the symmetry assumption}
\label{app:limiting_cases_diffusion_only}

\paragraph{Pure diffusion limit.}
Setting $\eta_J=0$ removes the jump term from \eqref{eq:zeta_SDE_mixed}; consequently, $\alpha_{\mathrm{jump}}$ and~$\lambda$ no longer enter the diffusion-only model.
At the level of the characteristic equation, $\Phi(\beta;\omega)=\eta_G\beta^2-\kappa\beta$, whose nonzero root is $\beta_{\mathrm{diff}}=\kappa/\eta_G$.

With $\eta_J=0$, the jump operator in the stationary forward equation
\eqref{eq:appendix_stationary_forward_equation} vanishes. In the
far-left tail, the closure~\eqref{eq:drift_saturation} replaces
$F(\zeta)$ by its leading constant value~$\kappa$. The stationary
diffusion balance is therefore
\begin{equation}
\label{eq:appendix_diffusion_stationary_current}
0
=
-\kappa\,\partial_\zeta\rho_\infty(\zeta)
+
\eta_G\,\partial_{\zeta\zeta}\rho_\infty(\zeta)
=
-\partial_\zeta\!\left[
\kappa\rho_\infty(\zeta)
-
\eta_G\,\partial_\zeta\rho_\infty(\zeta)
\right].
\end{equation}
The bracket is the stationary probability current of the diffusion-only process.
Equation~\eqref{eq:appendix_diffusion_stationary_current} shows that it is constant in~$\zeta$. For a normalizable stationary density with no flux at infinity, this constant is zero.
Hence $\eta_G\,\partial_\zeta\rho_\infty(\zeta)=\kappa\rho_\infty(\zeta)$.

The zero-current relation can be rewritten, using
$\beta_{\mathrm{diff}}=\kappa/\eta_G$, as
\[
\frac{\partial_\zeta\rho_\infty(\zeta)}
     {\rho_\infty(\zeta)}
=
\beta_{\mathrm{diff}}.
\]
Choose a fixed reference point $\zeta_0$ sufficiently far in the
tail and consider $\zeta\le\zeta_0$. Since
$\rho_\infty(\zeta)>0$ in this region, the division is well defined.
To integrate the equation between $\zeta_0$ and $\zeta$, use $s$ as
the dummy spatial variable.
In the resulting left-hand integral, make the substitution $r=\rho_\infty(s)$, for which $dr=\partial_s\rho_\infty(s)\,ds$.
The corresponding integration limits are $\rho_\infty(\zeta_0)$ at $s=\zeta_0$ and $\rho_\infty(\zeta)$ at $s=\zeta$. Therefore,
\begin{equation}
\label{eq:appendix_diffusion_tail_solution}
\begin{aligned}
\int_{\zeta_0}^{\zeta}
\frac{\partial_s\rho_\infty(s)}
     {\rho_\infty(s)}\,ds
&=
\beta_{\mathrm{diff}}
\int_{\zeta_0}^{\zeta}ds,
\\
\int_{\rho_\infty(\zeta_0)}^{\rho_\infty(\zeta)}
\frac{dr}{r}
&=
\beta_{\mathrm{diff}}(\zeta-\zeta_0),
\\
\log\!\frac{\rho_\infty(\zeta)}
{\rho_\infty(\zeta_0)}
&=
\beta_{\mathrm{diff}}(\zeta-\zeta_0),
\\
\frac{\rho_\infty(\zeta)}
{\rho_\infty(\zeta_0)}
&=
e^{\beta_{\mathrm{diff}}(\zeta-\zeta_0)},
\\
\rho_\infty(\zeta)
&=
\rho_\infty(\zeta_0)
e^{-\beta_{\mathrm{diff}}\zeta_0}
e^{\beta_{\mathrm{diff}}\zeta}
=
\underbrace{
\rho_\infty(\zeta_0)
e^{-\beta_{\mathrm{diff}}\zeta_0}
}_{\displaystyle c>0}
e^{\beta_{\mathrm{diff}}\zeta}.
\end{aligned}
\end{equation}
Because $\zeta_0$ is fixed, the underbraced factor is a positive
constant independent of~$\zeta$. Thus the leading far-left behavior
is $\rho_\infty(\zeta)=c\,e^{\beta_{\mathrm{diff}}\zeta}$.
Under the change of variables~\eqref{eq:tau_def}, this gives the power-law time-scale form~\eqref{eq:stationary_timescale_power_law}.

This is a special diffusion-only route to anti-collapse, distinct from the jump-driven mechanism presented in the main text.

\paragraph{Relation to the threshold.}
Because $\lambda$ belongs to the jump measure, the comparison $\lambda\ge\kappa/\eta_G$ concerns the mixed model with $\eta_J>0$ and determines how its interior-root branch behaves as $\eta_J\downarrow0$.
When $\lambda\ge\kappa/\eta_G$, the threshold vanishes: a unique interior root exists for every $\eta_J>0$ and approaches $\beta_{\mathrm{diff}}$ as $\eta_J\downarrow0$.
When $\lambda<\kappa/\eta_G$, the threshold is positive, so the mixed-model branch exists only for $\eta_J>\eta_J^*$ and cannot be continued to $\eta_J=0$ through admissible interior roots.
This does not remove the separate saturated-drift diffusion-only solution derived above, for which $\lambda$ is not a model parameter.

\paragraph{Justification of the symmetric measure.}
Throughout this paper, we adopted the symmetric tempered L\'evy measure~\eqref{eq:tempered_levy_measure}, i.e. with equal weight on positive and negative jumps.
Three considerations support this choice.
First, at the level of an effective population model, there is no special reason to favour upward over downward fluctuations in the log-effective decay rate.
Second, the choice is compatible with empirical observations that gradient noise in stochastic optimization is well-described by a symmetric $\alpha$-stable distribution~\citep{simsekli2019tail,gurbuzbalaban2021heavy}, which motivates treating the effective heavy-tailed forcing as symmetric at the population level.
Third, symmetry yields the structural properties used in the proof of Proposition~\ref{lem:Phi_properties}.
Combined with strict convexity, this gives existence and uniqueness whenever the boundary condition $\Phi(\lambda;\omega)>0$ is satisfied.

\section{Envelope asymptotics under representative stationary tail classes}
\label{app:envelope_tail_calculations}

This appendix provides the detailed asymptotic calculations
supporting the envelope decay laws of
Section~\ref{sec:envelope_laws_from_tails}.
All derivations are carried out for the unweighted envelope
\begin{equation}
\label{eq:app_envelope_integral}
f_0(\ell)
=
\int_{0}^{\infty} e^{-\ell/\tau}\,p_\infty(\tau)\,d\tau, \qquad \ell\ge 0.
\end{equation}
The sandwich bound~\eqref{eq:sandwich} ensures that the full (weighted) envelope $f(\ell)$ in Eq.~\ref{eq:envelope_integral_weighted} shares the same scaling class, with only the prefactor absorbing the bounded optimizer-dependent factor.

The finite-width representation~\eqref{eq:mixture_of_exponentials}
involves both diagonal transport factors and neuron-wise
optimizer contributions
(via the Rayleigh projection~$\Lambda^{(q)}_{r,\ell}$).
The unweighted envelope $f_0$ retains only the diagonal
terms; optimizer effects enter the full envelope $f(\ell)$
solely through the bounded prefactors captured by the
sandwich bound~\eqref{eq:sandwich}.
\citet{livi2026learnability} demonstrates that the
diagonal approximation preserves the asymptotic trend,
so the scaling laws derived below apply
to the full transport envelope up to sub-leading corrections.

Three canonical tail classes are treated below, ordered from simplest to most involved, following the same sequence as Section~\ref{sec:envelope_laws_from_tails}.
The exponential envelope (collapsed-regime representative, $\eta_J=0$) is handled first via a direct power-series expansion of \eqref{eq:app_envelope_integral}.
The power-law and logarithmic envelopes (anti-collapsed regime and log-regular regime, respectively) require recasting~\eqref{eq:app_envelope_integral} as a standard Laplace transform (Section~\ref{app:laplace_form}).
For these two cases, the calculations derive the leading-order forms by direct evaluation.
Karamata's Tauberian theorem for Laplace transforms (Proposition~\ref{prop:tauberian}, Section~\ref{app:laplace_form}) then certifies that the leading terms so obtained are the correct asymptotic equivalences as $\ell\to\infty$, via the one-to-one correspondence between the (log-)regularly varying behavior of a density near the origin and the asymptotic decay of its Laplace transform at infinity.

\subsection{Exponential envelope}
\label{app:exponential_envelope_full_details}

This subsection treats the canonical OU representative of the collapsed regime.
We set $\eta_J=0$ and use the bulk drift linearization~\eqref{eq:drift_linearization} as a global approximation over the concentrated stationary spectrum.
The resulting pure-diffusion SDE is an OU process, as discussed in Section~\ref{sec:collapsed_regime}.
Because this calculation concerns fluctuations around the bulk equilibrium rather than the far-left-tail balance, we derive the envelope through a small-variance expansion in $\zeta$-space; the characteristic equation is not needed.

In this pure-diffusion representative, the stationary density of the log-effective decay rate is Gaussian:
\begin{equation}
\label{eq:app_rho_gaussian}
\rho_\infty(\zeta)
=
\frac{1}{\sqrt{2\pi\sigma^2}}\,
\exp\!\!\left(-\frac{(\zeta-\mu)^2}{2\sigma^2}\right),
\qquad
\sigma^2=\frac{\eta_G}{\gamma},
\
\mu=\mathbb{E}[\zeta]=\zeta^*.
\end{equation}
Using~\eqref{eq:tau_def}, the corresponding stationary time-scale density is log-normal,
\begin{equation}
\label{eq:app_p_lognormal}
p_\infty(\tau)
=
\frac{\rho_\infty(-\log\tau)}{\tau}
=
\frac{1}{\tau\sqrt{2\pi\sigma^2}}\,
\exp\!\!\left(-\frac{(\log\tau+\mu)^2}{2\sigma^2}\right),
\qquad \tau>0.
\end{equation}
Its median is $\tau_\ast=e^{-\mu}$, all moments are finite, and its right tail decays faster than any polynomial. Consequently, the log-normal distribution is not regularly varying, so the hypotheses of Karamata's Tauberian theorem in Proposition~\ref{prop:tauberian} do not apply. We therefore derive the corresponding envelope directly in $\zeta$-space.

\paragraph{Derivation in $\zeta$-space.}
The envelope integral~\eqref{eq:app_envelope_integral} involves $e^{-\ell/\tau}$ averaged against $p_\infty(\tau)$.
A direct expansion of $e^{-\ell/\tau}$ in powers of $\ell/\tau$ is not useful: it involves the inverse moments $\mathbb{E}[\tau^{-n}] = \exp(n\mu+n^2\sigma^2/2)$, whose rapid growth makes the resulting moment series diverge for every $\ell>0$.
We therefore pass to $\zeta=-\log\tau$, where the distribution is Gaussian.
This permits a Taylor expansion in the log-rate domain and a closed-form evaluation using the Gaussian moment-generating function~\citep{durrett2019probability}.

Write $\zeta=\mu+\sigma Z$ with $\mu=\mathbb{E}[\zeta]$ and $Z\sim N(0,1)$, so that $\tau=e^{-\zeta}=\tau_\ast\,e^{-\sigma Z}$ where $\tau_\ast=e^{-\mu}$.
The envelope integral~\eqref{eq:app_envelope_integral} becomes
\begin{equation}
\label{eq:app_collapsed_zeta}
f_0(\ell)
=
\mathbb{E}_Z\!\left[
e^{-(\ell/\tau_\ast)e^{\sigma Z}}
\right]
=
\mathbb{E}_Z\!\left[
e^{-(\ell/\tau_\ast)e^{-\sigma Z}}
\right],
\end{equation}
where the second equality follows because $Z$ and $-Z$ have the same standard normal distribution.

For small $\sigma$, Taylor-expand $e^{-\sigma Z}$ to second order:
\begin{equation}
\label{eq:app_exp_inner}
e^{-\sigma Z}
=
1-\sigma Z+\tfrac{1}{2}\sigma^2 Z^2
+R_3(\sigma,Z),
\end{equation}
where $R_3(\sigma,Z)$ is the third-order Taylor remainder,
bounded by $|R_3(\sigma,Z)|\le \tfrac{1}{6}\sigma^3|Z|^3\,e^{\sigma|Z|}$
via the Lagrange form.
Multiplying by $-\ell/\tau_\ast$:
\begin{align}
\label{eq:app_exp_exponent}
-\frac{\ell}{\tau_\ast}\,e^{-\sigma Z}
&=
-\frac{\ell}{\tau_\ast}
+\frac{\ell\sigma}{\tau_\ast}\,Z
-\frac{\ell\sigma^2}{2\tau_\ast}\,Z^2
-\frac{\ell}{\tau_\ast}\,R_3(\sigma,Z).
\end{align}
Substituting into~\eqref{eq:app_collapsed_zeta} and factoring
out the leading exponential:
\begin{equation}
\label{eq:app_exp_factored}
f_0(\ell)
=
e^{-\ell/\tau_\ast}\;
\mathbb{E}_Z\!\left[
\exp\!\left(
\frac{\ell\sigma}{\tau_\ast}\,Z
-\frac{\ell\sigma^2}{2\tau_\ast}\,Z^2
-\frac{\ell}{\tau_\ast}\,R_3(\sigma,Z)
\right)\right].
\end{equation}
In the regime where $\ell\sigma^2/\tau_\ast\ll 1$, the quadratic
term and the remainder $R_3$ contribute negligibly
to the expectation (the Gaussian tails of $Z$ ensure that
the moments of $|Z|^k e^{\sigma|Z|}$ are finite for every~$k$).
Dropping them and using the Gaussian moment generating function
$\mathbb{E}[e^{aZ}]=e^{a^2/2}$ yields
\begin{equation}
\label{eq:app_exp_correction}
f_0(\ell)
\approx
e^{-\ell/\tau_\ast}\;
e^{\,\ell^2\sigma^2/(2\tau_\ast^2)}.
\end{equation}
The correction factor $e^{\,\ell^2\sigma^2/(2\tau_\ast^2)}$
is negligible (i.e.\ its logarithm is small compared to
$\ell/\tau_\ast$) precisely when
\begin{equation}
\label{eq:app_concentrated_regime}
\ell \;\ll\; \tau_\ast/\sigma^2.
\end{equation}
Therefore, in this regime the envelope is well-approximated by a single exponential:
\begin{equation}
\label{eq:app_collapsed_decay}
f_0(\ell)\sim e^{-\ell/\tau_\ast}.
\end{equation}

Taking into account the notation in~\citep{livi2026learnability} and setting $\lambda=e^{-1/\tau_\ast}\in(0, 1)$, Eq.~\ref{eq:app_collapsed_decay} is equivalently written in the geometric form $f_0(\ell)\sim\lambda^\ell$ used in the learnability analysis~\citep{livi2026learnability}.

In this OU representative, the variance $\sigma^2=\eta_G/\gamma$ is controlled by the local restoring rate of the bulk drift: large~$\gamma$ keeps the OU process tightly concentrated around its equilibrium~$\zeta^*$, giving a small $\sigma^2$.
When $\eta_J=0$ (no jumps) and $\gamma$ is moderate to large, the diffusion is tightly confined and $\sigma^2$ is small.
The condition~\eqref{eq:app_concentrated_regime} then encompasses an operationally vast range of lags, typically exceeding any practical sequence length, so that~\eqref{eq:app_collapsed_decay} is a relevant approximation for applications.

\begin{remark}[Scaling beyond the median-dominated regime]
\label{rem:collapsed_range}
Equation~\eqref{eq:app_collapsed_decay} is a concentrated-regime approximation, not a strict asymptotic equivalence as $\ell\to\infty$.
The regime $\ell\sigma^2/\tau_\ast\ll 1$ is the \emph{median-dominated} regime: the integral~\eqref{eq:app_envelope_integral} draws its mass from a neighborhood of the median $\tau_\ast$, and the log-normal spread is too narrow (relative to $\ell^{-1}$) to matter.
In this regime, the distribution is concentrated, yielding the exponential decay~\eqref{eq:app_collapsed_decay}.

As $\ell$ approaches the crossover scale $\tau_\ast/\sigma^2$, the correction factor~\eqref{eq:app_exp_correction} ceases to be negligible: its exponent $\ell^2\sigma^2/(2\tau_\ast^2)$ becomes comparable to the leading exponent $\ell/\tau_\ast$, so the width of the log-normal distribution can no longer be ignored.
At larger lags, the kernel $e^{-\ell/\tau}$ increasingly suppresses the contribution from time scales near the median~$\tau_\ast$.
The dominant contribution consequently moves into the far-right tail of the log-normal time-scale distribution, toward time scales that grow almost linearly with~$\ell$, up to a logarithmic correction.
Equivalently, the relevant log-rates satisfy $\zeta=-\log\tau=-\log\ell+O(\log\log\ell)$.

Because $\rho_\infty(\zeta)$ is Gaussian, reaching this region incurs a Gaussian penalty on the logarithmic lag scale. At leading logarithmic order,
\begin{equation}
\log f_0(\ell)\sim-(\log\ell)^2/(2\sigma^2).
\end{equation}
Therefore, the decay of the envelope is slower than any stretched exponential $\exp(-a\ell^q)$ with $a,q>0$, but faster than any power law $\ell^{-m}$ with $m>0$.
Thus, the collapsed envelope crosses from an operational single-exponential regime to a non-power-law log-normal asymptotic, while the anti-collapsed envelope remains distinguished by its regularly varying power-law decay.

The strict log-normal asymptotic does not remove the learnability separation motivating the paper. Under the sample-complexity relation derived by \citet{livi2026learnability}, $N(\ell)$ scales as $f_0(\ell)^{-\kappa_\alpha}$. Therefore,
\begin{equation}
\label{eq:app_lognormal_sample_complexity}
\log N(\ell)
\sim
\frac{\kappa_\alpha}{2\sigma^2}(\log\ell)^2,
\qquad
N(\ell)
=
\exp\!\left[
\left(
\frac{\kappa_\alpha}{2\sigma^2}+o(1)
\right)
(\log\ell)^2
\right].
\end{equation}
This growth is \emph{quasi-polynomial}: it is slower than exponential in $\ell$ but faster than every fixed polynomial. Therefore, the anti-collapsed regime remains distinguished by genuinely polynomial sample complexity.
\end{remark}

\begin{remark}[Strictly sub-threshold regime
$0<\eta_J<\eta_J^*$]
\label{rem:sub_threshold_envelope}
The derivation above assumes $\eta_J=0$.
When $0<\eta_J<\eta_J^*$, the generator includes a jump component, so the stationary density $\rho_\infty$ is no longer exactly Gaussian: the jumps broaden it and introduce heavier-than-Gaussian flanks.
In this strictly sub-threshold range, $\Phi(\lambda;\omega)<0$, and Proposition~\ref{lem:Phi_properties} shows that the characteristic equation has no positive root in $(0,\lambda]$.
The stationary far-left-tail closure therefore does not select the regularly varying power-law tail derived in Section~\ref{app:powerlaw_envelope_derivation_full}.

The pure-diffusion calculation above remains the canonical collapsed-regime example; the exact envelope asymptotic for a generic strictly sub-threshold jump process is not derived here.
When $\eta_J^*>0$, the equality case $\eta_J=\eta_J^*$ is not part of this remark: the characteristic equation then has the algebraic endpoint root $\beta=\lambda$, but the interior-mode argument does not determine its detailed stationary tail or envelope.
\end{remark}

\subsection{Standard Laplace form and Tauberian correspondence}
\label{app:laplace_form}

The remaining two envelope classes (power-law and logarithmic) arise from time-scale distributions $p_\infty(\tau)$ whose mass extends to arbitrarily large~$\tau$.
For these classes, we recast the envelope integral as a standard Laplace transform and invoke the classical Tauberian correspondence between the small-argument behavior of a density and the large-argument decay of its Laplace transform.
The material in this subsection is standard Laplace/Tauberian theory; we follow \citet{feller1971introduction} throughout and give self-contained statements so that the development is fully accessible.

\paragraph{Recasting as a standard Laplace transform.}
The kernel $e^{-\ell/\tau}$
in~\eqref{eq:app_envelope_integral} becomes a standard
Laplace kernel $e^{-\ell\bar{\mu}}$ when written in terms of
the asymptotic decay
rate~$\bar{\mu}=\tau^{-1}$~\eqref{eq:tau_def}.
Rates and time scales are dual descriptions of the same
quantity (cf.\ the discussion in
Section~\ref{sec:asymptotic_decay_log_effective_LR}): the
analysis uses time scales~$\tau$ when interpreting spectral
organization, and returns to rates~$\bar{\mu}$ when the
Laplace structure is needed.
Substituting $\bar{\mu}=\tau^{-1}$ and using
$|d\tau/d\bar{\mu}|=\bar{\mu}^{-2}$ gives the rate density
\begin{equation}
\label{eq:app_change_tau_u}
p_{\bar{\mu}}(\bar{\mu})=\bar{\mu}^{-2}\,p_\infty(1/\bar{\mu}),
\end{equation}
and the envelope
integral~\eqref{eq:app_envelope_integral} becomes
\begin{equation}
\label{eq:app_envelope_laplace}
f_0(\ell)=\int_{0}^{\infty} e^{-\ell\bar{\mu}}\,p_{\bar{\mu}}(\bar{\mu})\,d\bar{\mu},
\end{equation}
i.e.\ a standard Laplace transform of the rate density
$p_{\bar{\mu}}$.
The right tail of $p_\infty(\tau)$ as $\tau\to\infty$ maps
to the behavior of $p_{\bar{\mu}}(\bar{\mu})$ near
$\bar{\mu}=0$, and the large-$\ell$ asymptotics
of~\eqref{eq:app_envelope_laplace} are governed precisely by
this small-$\bar{\mu}$ behavior.

\paragraph{Tail taxonomy.}
Before stating the Tauberian correspondence, we introduce the tail-class definitions used below.
A measurable function $L:(0,\infty)\to(0,\infty)$ is \emph{slowly varying at infinity} if
\begin{equation}
\label{eq:app_slowly_varying_infinity}
\lim_{x\to\infty}\frac{L(\lambda x)}{L(x)}=1
\qquad\text{for every }\lambda>0.
\end{equation}
Typical examples are $L(x)=\log x$, $L(x)=(\log x)^\vartheta$,
and constants; any such $L$ grows or decays more slowly
than any positive or negative power of $x$.
A function $g(x)=x^\rho L(x)$ with $\rho\in\mathbb{R}$ and
$L$ slowly varying at infinity is \emph{regularly varying
at infinity with index~$\rho$}:
\begin{equation}
\label{eq:app_regularly_varying_infinity}
g(x)=x^\rho\,L(x),\qquad
L\text{ slowly varying at infinity}.
\end{equation}
Two specializations of regular variation cover both envelope
classes used in this paper.
\emph{Power-law tail.} The pure power-law density tail
\begin{equation}
\label{eq:app_power_law_tail}
p_\infty(\tau)\sim c\,\tau^{-1-\beta},
\qquad c>0,\ \beta>0,
\end{equation}
is regularly varying with index $\rho=-(1+\beta)$ and constant
slowly varying part ($L(\tau)=c$).
\emph{Log-regularly varying tail.} The regularly varying
tail at index $\rho=-1$ with logarithmic slowly varying
modulation ($L(\tau)=(\log\tau)^{-(1+\vartheta)}$),
\begin{equation}
\label{eq:app_log_regularly_varying_tail}
p_\infty(\tau)\sim \tau^{-1}\,(\log\tau)^{-(1+\vartheta)},
\qquad \vartheta>0,
\end{equation}
sits at the integrability boundary of the regularly varying class.
Indices $\rho<-1$ are integrable for the standard slowly varying corrections considered here, indices $\rho>-1$ are not, and at $\rho=-1$ integrability is decided by the slowly varying factor.
The logarithmic exponent $-(1+\vartheta)$ with $\vartheta>0$ in~\eqref{eq:app_log_regularly_varying_tail} makes the right tail integrable, since $\int_A^\infty \tau^{-1}(\log\tau)^{-1-\vartheta}\,d\tau<\infty$ for every $A>1$.
This form is therefore a canonical integrable representative at the boundary index $\rho=-1$.

The qualifier ``boundary'' refers to this regular-variation integrability threshold, not to the stochastic generator of the main text where no positive root of the characteristic equation reaches $\beta=0$ (Proposition~\ref{lem:Phi_properties}).
We use~\eqref{eq:app_log_regularly_varying_tail} as the ansatz tail at the $\beta\downarrow 0$ boundary, defining the log-regular regime in the Tauberian taxonomy, and use it to derive the logarithmic envelope class in Section~\ref{app:log_envelope_derivation_full}.

The Tauberian theorem below is stated on the rate side, so we also need the rate-side counterpart of slow variation: a function $L_0$ is \emph{slowly varying at zero} if
\begin{equation}
\label{eq:app_slowly_varying_zero}
L_0(\bar{\mu})=\tilde L(1/\bar{\mu}),
\qquad \tilde L\text{ slowly varying at infinity}.
\end{equation}
Under the reciprocal change of
variables~\eqref{eq:app_change_tau_u}, the general
heavy-tailed form on the time-scale side maps to a
small-rate density of the same regularly varying type:
\begin{equation}
\label{eq:app_tau_tail_general_to_rate}
p_\infty(\tau)\sim \tau^{-1-\beta}\,\tilde L(\tau),
\qquad \tau\to\infty
\quad\Longrightarrow\quad
p_{\bar{\mu}}(\bar{\mu})\sim \bar{\mu}^{\beta-1}\,
\tilde L(1/\bar{\mu}),
\qquad \bar{\mu}\downarrow 0,
\end{equation}
where $\tilde L$ is slowly varying at infinity.
Specialized to~\eqref{eq:app_power_law_tail}
and~\eqref{eq:app_log_regularly_varying_tail}, the
implication~\eqref{eq:app_tau_tail_general_to_rate} gives
$p_{\bar{\mu}}(\bar{\mu})\sim c\,\bar{\mu}^{\beta-1}$ in the
power-law case and
$p_{\bar{\mu}}(\bar{\mu})\sim
\bar{\mu}^{-1}\bigl(\log(1/\bar{\mu})\bigr)^{-(1+\vartheta)}$
in the log-regularly varying case. These are the
small-$\bar{\mu}$ asymptotic forms required to apply
Proposition~\ref{prop:tauberian}.

\paragraph{Karamata's Tauberian theorem.}
With the tail-class definitions of the previous paragraph
in place, the classical correspondence between the
small-argument behavior of a density and the large-argument
decay of its Laplace transform specializes to the two forms
required below:

\begin{proposition}[Karamata's Tauberian theorem for Laplace transforms \citep{feller1971introduction}]
\label{prop:tauberian}
Let $p:(0,\infty)\to[0,\infty)$ be locally integrable and
eventually monotone in a neighborhood of the origin, and let
$\hat p(\ell)=\int_{0}^{\infty} e^{-\ell\bar{\mu}}\,p(\bar{\mu})\,d\bar{\mu}$
denote its Laplace transform.
\begin{enumerate}
\item[(i)] \emph{Regularly varying case.}
If $p(\bar{\mu})\sim c\,\bar{\mu}^{\beta-1}\,L(\bar{\mu})$
as $\bar{\mu}\downarrow 0$ for some $\beta>0$, $c>0$, and
$L$ slowly varying at zero, then
\begin{equation}
\hat p(\ell)\;\sim\;c\,\Gamma(\beta)\,\ell^{-\beta}\,L(1/\ell),
\qquad \ell\to\infty,
\end{equation}
where $\Gamma(\beta)$ is the Gamma integral~\eqref{eq:app_gamma_integral}.
Conversely, under the same regularity hypotheses, this
Laplace-transform asymptotic implies
$p(\bar{\mu})\sim c\,\bar{\mu}^{\beta-1}L(\bar{\mu})$
as $\bar{\mu}\downarrow 0$.
\item[(ii)] \emph{Log-regularly varying case.}
If
$p(\bar{\mu})\sim \bar{\mu}^{-1}\bigl(\log(1/\bar{\mu})\bigr)^{-(1+\vartheta)}$
as $\bar{\mu}\downarrow 0$ for some $\vartheta>0$, then
\begin{equation}
\hat p(\ell)\;\sim\;\frac{1}{\vartheta}\,(\log\ell)^{-\vartheta},
\qquad \ell\to\infty.
\end{equation}
\end{enumerate}
\end{proposition}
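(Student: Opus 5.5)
The proof proposal covers the two cases separately. Case (i) is the classical Karamata Abelian/Tauberian pair. Case (ii) is handled by a direct Abelian computation, because the index $\beta=0$ lies outside the hypothesis $\beta>0$ of case (i).

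\emph{Case (i), direct (Abelian) half.} Rescale by substituting $\bar{\mu}=x/\ell$, so that
\[
\hat p(\ell)=\ell^{-1}\int_0^\infty e^{-x}\,p(x/\ell)\,dx .
\]
Split the integral at a fixed small $\delta$. The piece $\bar{\mu}>\delta$ contributes at most $e^{-\ell\delta/2}\int e^{-\bar{\mu}/2}p$, which is exponentially small. I would assume a finite Laplace transform at some argument; this holds for a probability density. On the piece $\bar{\mu}\le\delta$, write
\[
p(x/\ell)=c\,(x/\ell)^{\beta-1}L(x/\ell)\,(1+o(1)).
\]
Divide through by $\ell^{-\beta}L(1/\ell)$ and pass to the limit using the uniform convergence theorem for slowly varying functions, $L(x/\ell)/L(1/\ell)\to1$ locally uniformly in $x\in(0,\infty)$. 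Potter's bounds, $L(x/\ell)/L(1/\ell)\le A\max(x^{\epsilon},x^{-\epsilon})$ with $\epsilon<\beta$, supply an integrable dominating function $x^{\beta-1\pm\epsilon}e^{-x}$. Dominated convergence then yields $c\int_0^\infty x^{\beta-1}e^{-x}dx=c\,\Gamma(\beta)$.

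\emph{Case (i), converse half.} Pass to the distribution function $U(\bar{\mu})=\int_0^{\bar{\mu}}p$. The transform satisfies $\hat p(\ell)=\int e^{-\ell\bar{\mu}}dU$. Apply the continuity theorem for Laplace–Stieltjes transforms to the rescaled measures $dU(\cdot/\ell)/(\ell^{-\beta}L(1/\ell))$. Their transforms converge to $c\,\Gamma(\beta)s^{-\beta}$, which identifies the limit $U(\bar{\mu})\sim c\,\Gamma(\beta)\beta^{-1}\bar{\mu}^{\beta}L(\bar{\mu})$ (Feller XIII.5). The density statement then follows from the standard monotone density theorem, which is where eventual monotonicity near the origin is used. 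This converse step is the real Tauberian content. I would cite Feller/Bingham–Goldie–Teugels rather than reprove it, and only check that the hypotheses as stated, local integrability plus eventual monotonicity, are exactly those required.

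\emph{Case (ii).} This is the main obstacle, since $\beta=0$ makes the rescaling above degenerate and $p\notin L^1$ near $0$ without the log factor. The plan is to integrate by parts against $U(\bar{\mu})=\int_0^{\bar{\mu}}p$. For small $\bar{\mu}$,
\[
U(\bar{\mu})\sim \vartheta^{-1}\bigl(\log(1/\bar{\mu})\bigr)^{-\vartheta},
\]
which follows by the substitution $v=\log(1/s)$ and l'Hôpital. Then
\[
\hat p(\ell)=\ell\int_0^\infty e^{-\ell\bar{\mu}}U(\bar{\mu})\,d\bar{\mu}=\int_0^\infty e^{-x}U(x/\ell)\,dx ,
\]
with the boundary terms vanishing since $U(0)=0$ and $U$ is bounded. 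Now $U$ is slowly varying at zero, because $\log(\ell/x)/\log\ell\to1$ for each fixed $x$. Hence $U(x/\ell)/U(1/\ell)\to1$, and dominated convergence, with Potter-type bounds $U(x/\ell)/U(1/\ell)\le A(1+x^{\epsilon})$ for $x$ large and a bounded ratio for $x$ small, gives $\hat p(\ell)\sim U(1/\ell)\sim\vartheta^{-1}(\log\ell)^{-\vartheta}$.

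The delicate point is the domination near $x\to0$. There, $U(x/\ell)\le U(1/\ell)$ for $x\le1$ by monotonicity of $U$, so that region is trivially controlled. For large $x$, I would split at $x=\ell\delta$ and use boundedness of $U$ together with the factor $e^{-x}$.
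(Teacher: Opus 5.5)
Your proof is correct in structure, apart from one wrong constant noted at the end. It takes a different route from the paper, which never proves this proposition. The paper cites Feller for the general theorem and, in the appendix, only checks the two canonical tails by direct computation.

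\begin{itemize}
\item \emph{Paper, case (i).} It treats only constant $L$. It truncates at $\tau_{\min}$, substitutes $s=\ell/\tau$, and lets the incomplete Gamma integral tend to $\Gamma(\beta)$, with a sandwich argument for the discarded piece. It gives no argument for the converse, and its computations never invoke the eventual-monotonicity hypothesis; it only checks that it holds.
\item \emph{Paper, case (ii).} It works directly with the density. It splits at $\bar\mu_0$, substitutes $t=\ell\bar\mu$, and splits again at $t=1$ and $t=\sqrt{\ell}$. It then evaluates $\int_0^1 dt/(t[\log(\ell/t)]^{1+\vartheta})=\vartheta^{-1}(\log\ell)^{-\vartheta}$ exactly. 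For the model tail this gives an explicit $O((\log\ell)^{-1-\vartheta})$ remainder.
\item \emph{What your route buys.} It proves the statement as actually written. The uniform convergence theorem plus Potter bounds handle an arbitrary slowly varying $L$. The converse comes from the continuity theorem and the monotone density theorem, and you correctly isolate the converse as the only place monotonicity is needed. Your integration by parts in (ii) shows that case is the index-zero Abelian theorem applied to $U$. It therefore covers any $p$ whose primitive is slowly varying at $0$, though it gives no rate.
\item \emph{Missing hypothesis.} You are also right that the statement tacitly needs $\hat p(\ell)<\infty$ for some $\ell$. Local integrability on $(0,\infty)$ does not give this, although it holds for the probability densities the paper applies the result to.
\end{itemize}

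The constant to fix is in the converse. The limit $c\,\Gamma(\beta)s^{-\beta}$ is the Laplace--Stieltjes transform of $d(c\,x^{\beta}/\beta)$. So the continuity theorem gives $U(\bar\mu)\sim c\,\beta^{-1}\bar\mu^{\beta}L(\bar\mu)$, that is, the factor $c\,\Gamma(\beta)/\Gamma(\beta+1)$, not $c\,\Gamma(\beta)\beta^{-1}\bar\mu^{\beta}L(\bar\mu)$. With your constant, the monotone density theorem would return $p\sim c\,\Gamma(\beta)\bar\mu^{\beta-1}L(\bar\mu)$, which contradicts the converse you set out to prove.
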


Proposition~\ref{prop:tauberian} is applied in this paper
with $p=p_{\bar{\mu}}$, so that
$\hat p(\ell)=f_0(\ell)$ is the envelope
in~\eqref{eq:app_envelope_laplace} and the Laplace variable
$\ell$ is the envelope lag used throughout the manuscript.
The converse in (i) is included because the power-law
case is used in both directions: the forward implication
derives the envelope from the small-rate density, while the
converse justifies reading an observed power-law envelope as
evidence for the corresponding regularly varying small-rate
density. The log-regularly varying case, defining the log-regular
regime, is used here only in the forward direction, as an ansatz tail.

The proposition's regularity hypothesis (local
integrability of $p$ and eventual monotonicity on a right
neighborhood of $\bar{\mu}=0$) is satisfied in both cases.
For the power-law specialization used in case~(i),
we have $L=1$, so eventual
monotonicity is immediate from the explicit power form
(decreasing for $\beta<1$, constant for $\beta=1$,
increasing for $\beta>1$);
more general regularly varying densities
$c\,\bar{\mu}^{\beta-1}\,L(\bar{\mu})$ require the
eventual-monotonicity hypothesis stated in the proposition.
For case~(ii), a direct differentiation of
$g(\bar{\mu})=\bar{\mu}^{-1}\bigl(\log(1/\bar{\mu})\bigr)^{-(1+\vartheta)}$
gives
\begin{equation}
g'(\bar{\mu})
=
\bar{\mu}^{-2}\bigl(\log(1/\bar{\mu})\bigr)^{-(2+\vartheta)}
\bigl[(1+\vartheta)-\log(1/\bar{\mu})\bigr],
\end{equation}
which is strictly negative whenever
$\log(1/\bar{\mu})>1+\vartheta$, i.e.\ for
$\bar{\mu}<e^{-(1+\vartheta)}$. The density is therefore
strictly decreasing on
$\bigl(0,e^{-(1+\vartheta)}\bigr)$, which suffices for the
eventual-monotonicity hypothesis.

Proposition~\ref{prop:tauberian} is the theorem used below.
The explicit derivations in
Sections~\ref{app:powerlaw_envelope_derivation_full}
and~\ref{app:log_envelope_derivation_full} are included not
as proofs of the Tauberian theorem, but to show directly how
the two canonical tail models used in this paper produce their
envelope asymptotics and prefactors.
Section~\ref{app:powerlaw_envelope_derivation_full} derives the
power-law envelope~\eqref{eq:app_env_powerlaw_decay} from the
regularly varying density~\eqref{eq:app_small_rate_power},
and Section~\ref{app:log_envelope_derivation_full} derives the
logarithmic envelope~\eqref{eq:app_log_envelope_decay} from
the log-regularly varying density~\eqref{eq:app_small_rate_log}.

\subsection{Power-law envelope}
\label{app:powerlaw_envelope_derivation_full}

When the jump intensity exceeds the existence threshold $\eta_J^*$, the stationary tail-scale density converge to the following density:
\begin{equation}
\label{eq:app_power_tail}
p_\infty(\tau)
\sim
c\,\tau^{-1-\beta},
\qquad \beta > 0,
\ \tau\to\infty.
\end{equation}
Under the reciprocal change of variables~\eqref{eq:app_change_tau_u}, the tail~\eqref{eq:app_power_tail} translates to
\begin{equation}
\label{eq:app_small_rate_power}
p_{\bar{\mu}}(\bar{\mu})
=
\bar{\mu}^{-2}\,p_\infty(1/\bar{\mu})
\sim
c\,\bar{\mu}^{\beta-1},
\qquad \bar{\mu}\downarrow 0.
\end{equation}

\paragraph{Dominant contribution.}
For large~$\ell$ the kernel $e^{-\ell/\tau}$ suppresses
contributions from $\tau\ll \ell$, so the integral
in~\eqref{eq:app_envelope_integral} is dominated by
$\tau$ of order $\ell$ and larger.
Let $\tau_{\min}>0$ be any fixed lower cutoff for the
tail-surrogate integral.
Because $e^{-\ell/\tau}\le e^{-\ell/\tau_{\min}}$ for all
$\tau\le\tau_{\min}$, the contribution from
$(0,\tau_{\min})$ is $O(e^{-\ell/\tau_{\min}})$ and can be
neglected at the polynomial scale.
Replacing the full density $p_\infty$ by its tail form on
$[\tau_{\min},\infty)$ defines the exact tail surrogate
\begin{equation}
\label{eq:app_f_tail_surrogate}
f_{0,\mathrm{tail}}(\ell)
=
\int_{\tau_{\min}}^{\infty}
e^{-\ell/\tau}\,c\,\tau^{-1-\beta}\,d\tau.
\end{equation}
The tail equivalence~\eqref{eq:app_power_tail}, the
positivity of the kernel, and the exponentially small
contribution from $(0,\tau_{\min})$ imply by a standard
sandwich argument that
$f_0(\ell)\sim f_{0,\mathrm{tail}}(\ell)$ as
$\ell\to\infty$.
Note that $\tau_{\min}$ drops out of the final
asymptotic~\eqref{eq:app_env_powerlaw_decay} below and should not
be confused with the upper cutoff $\tau_{\max}$ introduced in
Section~\ref{sec:observable_envelopes} for finite-network
truncation.

\paragraph{Change of variables.}
Let $s=\ell/\tau$, so that $\tau=\ell/s$ and
$d\tau=-\ell\, s^{-2}\,ds$.
When $\tau=\tau_{\min}$ the new variable takes the value
$s=\ell/\tau_{\min}$, and as $\tau\to\infty$ we have
$s\to 0$.
The power of~$\tau$ becomes
\begin{equation}
\label{eq:app_tau_power_sub}
\tau^{-1-\beta}
=
\left(\frac{\ell}{s}\right)^{\!-1-\beta}
=
\ell^{-1-\beta}\,s^{1+\beta}.
\end{equation}
Substituting \eqref{eq:app_tau_power_sub} and the
differential into \eqref{eq:app_f_tail_surrogate},
and reversing the limits of integration
(which absorbs the minus sign from $d\tau$):
\begin{align}
f_{0,\mathrm{tail}}(\ell)
&=
\int_{0}^{\ell/\tau_{\min}}
e^{-s}\;
c\;\ell^{-1-\beta}\,s^{1+\beta}\;
\ell\,s^{-2}
\;ds
\nonumber\\[4pt]
&=
c\,\ell^{-1-\beta+1}
\int_{0}^{\ell/\tau_{\min}}
s^{1+\beta-2}\,e^{-s}\,ds
\nonumber\\[4pt]
&=
c\,\ell^{-\beta}
\int_{0}^{\ell/\tau_{\min}} s^{\beta-1}\,e^{-s}\,ds.
\label{eq:app_env_powerlaw_decay_intermediate}
\end{align}

\paragraph{Asymptotic limit.}
As $\ell\to\infty$, the upper limit of integration
$\ell/\tau_{\min}\to\infty$, and the incomplete Gamma
integral in~\eqref{eq:app_env_powerlaw_decay_intermediate}
converges to the complete Gamma function:
\begin{equation}
\label{eq:app_gamma_integral}
\int_0^\infty s^{\beta-1}\,e^{-s}\,ds
=
\Gamma(\beta),
\qquad \beta>0.
\end{equation}

Combining with $f_0(\ell)\sim f_{0,\mathrm{tail}}(\ell)$ gives:
\begin{equation}
\label{eq:app_env_powerlaw_decay}
f_0(\ell)\sim c\,\Gamma(\beta)\,\ell^{-\beta},
\qquad \beta>0, \ \ell\to\infty.
\end{equation}
Since $\Gamma(\beta)$ is a finite positive constant for each
$\beta>0$, it enters as a prefactor that does not affect the
polynomial scaling class.
This direct calculation is the case~(i) specialization of Proposition~\ref{prop:tauberian}; the converse direction in that proposition is the one used when interpreting an observed power-law envelope as evidence for a regularly varying density.

\subsection{Logarithmic envelope}
\label{app:log_envelope_derivation_full}

We restate the log-regularly varying density
tail~\eqref{eq:app_log_regularly_varying_tail} from the
taxonomy:
\begin{equation}
\label{eq:app_marginal_tail_tau}
p_\infty(\tau)
\sim
\frac{1}{\tau(\log\tau)^{1+\vartheta}},
\qquad \tau\to\infty,\ \vartheta>0.
\end{equation}
Using the change of
variables~\eqref{eq:app_change_tau_u},
we obtain as $\bar{\mu}\downarrow 0$:
\begin{equation}
\label{eq:app_small_rate_log}
p_{\bar{\mu}}(\bar{\mu})
=
\bar{\mu}^{-2}\,p_\infty(1/\bar{\mu})
\sim
\frac{1}{\bar{\mu}\,(\log(1/\bar{\mu}))^{1+\vartheta}}.
\end{equation}

\paragraph{Laplace transform asymptotics.}
From \eqref{eq:app_envelope_laplace},
\begin{equation}
\label{eq:app_log_laplace_restate}
f_0(\ell)=\int_{0}^{\infty} e^{-\ell\bar{\mu}}\,p_{\bar{\mu}}(\bar{\mu})\,d\bar{\mu}.
\end{equation}
For large~$\ell$, the exponential kernel suppresses the
region $\bar{\mu}\gg 1/\ell$, so the leading contribution is
confined to the small-rate region where the asymptotic
form~\eqref{eq:app_small_rate_log} applies.
Fix $\bar{\mu}_0\in(0,1)$ and split the integral:
\begin{equation}
\label{eq:app_log_split}
f_0(\ell)
=
\underbrace{
\int_{0}^{\bar{\mu}_0} e^{-\ell\bar{\mu}}\,p_{\bar{\mu}}(\bar{\mu})\,d\bar{\mu}
}_{I_1(\ell)}
\;+\;
\underbrace{
\int_{\bar{\mu}_0}^{\infty} e^{-\ell\bar{\mu}}\,p_{\bar{\mu}}(\bar{\mu})\,d\bar{\mu}
}_{I_2(\ell)}.
\end{equation}

\paragraph{Bounding $I_2(\ell)$.}
On the interval $[\bar{\mu}_0,\infty)$, the exponential factor
satisfies $e^{-\ell\bar{\mu}}\le e^{-\ell\bar{\mu}_0}$ for all
$\bar{\mu}\ge \bar{\mu}_0$.
Therefore
\begin{equation}
\label{eq:app_I2_bound}
I_2(\ell)
\le
e^{-\ell\bar{\mu}_0}
\int_{\bar{\mu}_0}^{\infty} p_{\bar{\mu}}(\bar{\mu})\,d\bar{\mu}
\le
e^{-\ell\bar{\mu}_0},
\end{equation}
since $p_{\bar{\mu}}$ is a probability density
(with total mass~$1$).
The tail integral $I_2(\ell)$ is therefore
$O(e^{-\ell\bar{\mu}_0})$, exponentially small in $\ell$
and negligible compared to the logarithmically decaying
leading term derived below.

\paragraph{Evaluating $I_1(\ell)$.}
Replacing $p_{\bar{\mu}}$ in $I_1$ by its asymptotically
equivalent small-rate form from~\eqref{eq:app_small_rate_log}
and substituting $t=\ell\bar{\mu}$
(so that $d\bar{\mu}=dt/\ell$ and
$\log(1/\bar{\mu})=\log(\ell/t)$):
\begin{equation}
\label{eq:app_marginal_tsub}
I_1(\ell)
\sim
\int_0^{\bar{\mu}_0}
\frac{e^{-\ell\bar{\mu}}}{\bar{\mu}\,(\log(1/\bar{\mu}))^{1+\vartheta}}\,d\bar{\mu}
=
\int_0^{\ell\bar{\mu}_0}
\frac{e^{-t}}{t\,[\log(\ell/t)]^{1+\vartheta}}\,dt.
\end{equation}

\paragraph{Isolating the leading contribution.}
We split the integral~\eqref{eq:app_marginal_tsub} at~$t=1$.
To bound the part with $t\ge1$, split it again at $t=\sqrt{\ell}$. For sufficiently large~$\ell$,
\begin{align}
\int_1^{\sqrt{\ell}}
\frac{e^{-t}}{t[\log(\ell/t)]^{1+\vartheta}}\,dt
&\le
\frac{2^{1+\vartheta}}{(\log\ell)^{1+\vartheta}}
\int_1^\infty\frac{e^{-t}}{t}\,dt
=
O\!\left((\log\ell)^{-1-\vartheta}\right),
\nonumber\\
\int_{\sqrt{\ell}}^{\ell\bar{\mu}_0}
\frac{e^{-t}}{t[\log(\ell/t)]^{1+\vartheta}}\,dt
&\le
\frac{1}{[\log(1/\bar{\mu}_0)]^{1+\vartheta}}
\int_{\sqrt{\ell}}^\infty\frac{e^{-t}}{t}\,dt
=
O\!\left(e^{-\sqrt{\ell}}\right).
\label{eq:app_log_remainder_bounds}
\end{align}
The first bound uses
$\log(\ell/t)\ge\tfrac12\log\ell$ for
$1\le t\le\sqrt{\ell}$; the second uses
$\log(\ell/t)\ge\log(1/\bar{\mu}_0)>0$ for
$t\le\ell\bar{\mu}_0$.
Thus, the entire $t\ge1$ contribution is lower order.

For $t\in(0,1)$, we have $e^{-t}=1+O(t)$, and the $O(t)$ correction contributes $O((\log\ell)^{-1-\vartheta})$.
Therefore,
\begin{align}
\label{eq:app_marginal_mainpiece}
\int_0^1
\frac{e^{-t}}{t[\log(\ell/t)]^{1+\vartheta}}\,dt
&=
\int_0^1
\frac{1}{t[\log(\ell/t)]^{1+\vartheta}}\,dt
+
O\!\left((\log\ell)^{-1-\vartheta}\right)
\nonumber\\
&=
\frac{1}{\vartheta}(\log\ell)^{-\vartheta}
+
O\!\left((\log\ell)^{-1-\vartheta}\right),
\end{align}
where the final equality uses
$s=\log(\ell/t)$.

\paragraph{Asymptotic limit.}
Combining~\eqref{eq:app_log_split}--\eqref{eq:app_marginal_mainpiece}:
the exponentially small $I_2$ is negligible,
the $t\ge 1$ contribution is lower order, and the leading
term is~\eqref{eq:app_marginal_mainpiece}, yielding
\begin{equation}
\label{eq:app_log_envelope_decay}
f_0(\ell)\sim C_\vartheta\,(\log \ell)^{-\vartheta},
\qquad \ell\to\infty,
\end{equation}
where $C_\vartheta=1/\vartheta$ for the canonical tail model~\eqref{eq:app_marginal_tail_tau} with unit amplitude.
More generally, if $p_\infty(\tau)\sim A\,\tau^{-1}(\log\tau)^{-1-\vartheta}$ for some $A>0$, then $C_\vartheta=A/\vartheta$.
Contributions from the bulk of $p_\infty(\tau)$ are lower order and do not alter this leading prefactor.
This direct calculation agrees with case~(ii) of Proposition~\ref{prop:tauberian}, which gives the same logarithmic asymptotic from the small-rate density class~\eqref{eq:app_small_rate_log}.

\clearpage
\section{Code availability}
\label{app:code}

The code used to reproduce the experiments reported in this paper is available at
\begin{center}
\url{https://github.com/lorenzolivi/anticollapse}
\end{center}
The repository includes scripts and instructions for reproducing the results.

\end{document}